\documentclass[11pt]{article}

\usepackage{amsmath, amssymb, graphicx, url}
\usepackage[round]{natbib}
\usepackage[algo2e,ruled]{algorithm2e}
\usepackage[colorlinks,linkcolor=red,anchorcolor=blue,citecolor=blue]{hyperref}
\usepackage{times}
\usepackage{mathtools} % amsmath with fixes and additions
\usepackage{thmtools, thm-restate}
\usepackage{fullpage}
\usepackage{xspace}
\setlength{\marginparwidth}{13ex}

%%%%% NEW MATH DEFINITIONS %%%%%

\usepackage{amsmath,amsfonts,bm}

% Mark sections of captions for referring to divisions of figures

% Highlight a newly defined term

% Figure reference, lower-case.

% Figure reference, capital. For start of sentence

% Section reference, lower-case.

% Section reference, capital.

% Reference to two sections.

% Reference to three sections.

% Reference to an equation, lower-case.
\def\eqref#1{equation~\ref{#1}}
% Reference to an equation, upper case

% A raw reference to an equation---avoid using if possible

% Reference to a chapter, lower-case.

% Reference to an equation, upper case.

% Reference to a range of chapters

% Reference to an algorithm, lower-case.

% Reference to an algorithm, upper case.

% Reference to a part, lower case

% Reference to a part, upper case

\def\1{\bm{1}}

% Random variables

% rm is already a command, just don't name any random variables m

% Random vectors

% Elements of random vectors

% Random matrices

% Elements of random matrices

% Vectors

% Elements of vectors

% Matrix

% Tensor
\DeclareMathAlphabet{\mathsfit}{\encodingdefault}{\sfdefault}{m}{sl}
\SetMathAlphabet{\mathsfit}{bold}{\encodingdefault}{\sfdefault}{bx}{n}

% Graph

% Sets

% Don't use a set called E, because this would be the same as our symbol
% for expectation.

% Entries of a matrix

% entries of a tensor
% Same font as tensor, without \bm wrapper

% The true underlying data generating distribution

% The empirical distribution defined by the training set

% The model distribution

% Stochastic autoencoder distributions

 % Laplace distribution

\newcommand{\E}{\mathbb{E}}

% Wolfram Mathworld says $L^2$ is for function spaces and $\ell^2$ is for vectors
% But then they seem to use $L^2$ for vectors throughout the site, and so does
% wikipedia.

 % See usage in notation.tex. Chosen to match Daphne's book.

\DeclareMathOperator*{\argmax}{arg\,max}
\DeclareMathOperator*{\argmin}{arg\,min}

\DeclareMathOperator*{\nth}{^{\text{th}}}
\newcommand\norm[1]{\left\lVert#1\right\rVert}

\providecommand{\P}{}
\renewcommand{\P}{\mathbb{P}}
\newtheorem{theorem}{Theorem}
\newtheorem{remark}[theorem]{Remark}
\newtheorem{definition}[theorem]{Definition}
\newtheorem{assumption}[theorem]{Assumption}
\newtheorem{mylemma}[theorem]{Lemma}

\newtheorem{mycorollary}[theorem]{Corollary}
\providecommand{\proofname}{Proof}
  \newenvironment{proof}%
  {%
   \par\noindent{\bfseries\upshape \proofname\ }%
  }%

\newcommand{\I}[1]{\mathbb{I}\{#1\}}
\DeclarePairedDelimiterX{\infdivx}[2]{(}{)}{%
  #1\;\delimsize\|\;#2%
}
\newcommand{\infdiv}{KL\infdivx}
\DeclareMathOperator*{\pipes}{\|}
\DeclareMathOperator{\polylog}{polylog}

\newcommand{\budgetfix}[1]{\textcolor{red}{#1}}
\renewcommand{\budgetfix}[1]{{#1}}

\usepackage{tikz} %
\usetikzlibrary{arrows.meta,decorations.pathmorphing,backgrounds,positioning,fit,petri}
\usepackage{wrapfig}
\usepackage{enumitem}
\usepackage{authblk}

\begin{document}

\title{Adversarial Online Multi-Task Reinforcement Learning}
\author{Quan Nguyen}
\author{Nishant A. Mehta}
\affil{Department of Computer Science, University of Victoria \thanks{Emails: manhquan233@gmail.com, nmehta@uvic.ca}}

\date{}
\maketitle
\begin{abstract}%
  We consider the adversarial online multi-task reinforcement learning setting, where in each of $K$ episodes the learner is given an unknown task taken from a finite set of $M$ unknown finite-horizon MDP models. 
The learner's objective is to minimize its regret with respect to the optimal policy for each task. We assume the MDPs in $\mathcal{M}$ are well-separated under a notion of $\lambda$-separability, and show that this notion generalizes many task-separability notions from previous works. We prove a minimax lower bound of $\Omega(K\sqrt{DSAH})$ on the regret of any learning algorithm and an instance-specific lower bound of $\Omega(\frac{K}{\lambda^2})$ in sample complexity for a class of \emph{uniformly good} cluster-then-learn algorithms. We use a novel construction called $\emph{2-JAO MDP}$ for proving the instance-specific lower bound. The lower bounds are complemented with a polynomial time algorithm that obtains $\tilde{O}(\frac{K}{\lambda^2})$ sample complexity guarantee for the clustering phase and $\tilde{O}(\sqrt{MK})$ regret guarantee for the learning phase, indicating that the dependency on $K$ and $\frac{1}{\lambda^2}$ is tight.
\end{abstract}

\section{Introduction}
\label{sec:intro}
The majority of theoretical works in online reinforcement learning (RL) have focused on single-task settings in which the learner is given the same task in every episode.
In practice, an autonomous agent might face a sequence of different tasks. For example, an automatic medical diagnosis system could be given an arbitrarily ordered sequence of patients who are suffering from an unknown set of variants of a virus. In this example, the system needs to classify and learn the appropriate treatment for each variant of the virus. This example is an instance of the adversarial online multi-task episodic RL setting, an important learning setting for which the theoretical understanding is rather limited.
The framework commonly used in existing theoretical works is an episodic setting of $K$ episodes; in each episode an unknown Markov decision process (MDP) from a finite set $\mathcal{M}$ of size $M$ is given to the learner. When $M=1$, the setting reduces to single-task episodic RL. 
Most existing algorithms for single-task episodic RL are based on aggregating samples in all episodes to obtain sub-linear bounds on various notions of regret~\citep{Azar2017, ChiJin2018,Simchowitz2019} or finite $(\epsilon, \delta)$-PAC bounds on the sample complexity of exploration~\citep{Dann2015}. 
When $M > 1$, without any assumptions on the common structure of the tasks, aggregating samples from different tasks could produce negative transfer~\citep{BrunskillAndLi2013}.
To avoid negative transfer, existing works~\citep{BrunskillAndLi2013, Hallak2015contextual, LatentMDPs2021} assumed that there exists some notion of task-separability that defines how different the tasks in $\mathcal{M}$ are. Based on this notion of separability, most existing algorithms followed a two-phase cluster-then-learn paradigm that first attempts to figure out which MDP is being given and then uses the samples from the previous episodes of the same MDP for learning. 
However, most existing works employ strong assumptions such that the tasks are given stochastically following a fixed distribution~\citep{Azar2013,BrunskillAndLi2013,Steimle2021,LatentMDPs2021} or the task-separability notion allows the MDPs to be distinguished in a small number of exploration steps~\citep{Hallak2015contextual,LatentMDPs2021}.
These strong assumptions become the main theoretical challenges towards understanding this setting.

Our goal in this work is to study the adversarial setting with a more general task-separability notion, in which the aformentioned strong assumptions do not hold. Specifically, the learner makes no statistical assumptions on the sequence of tasks; the task in each episode can be either the same or
different from the tasks in any other episodes. 
Moreover, the difference between the tasks in two consecutive episodes can be large (linear in the length of the episodes) so that algorithms based on a fixed budget for total variation such as RestartQ-UCB~\citep{Mao2021b} cannot be applied. 
The performance of the learner is measured by its regret with respect to an omniscient agent that knows which tasks are coming in every episode and the optimal policies for these tasks.
We consider the same cluster-then-learn paradigm of the previous works and focus on the following two questions:

\begin{center}
  \begin{itemize}[leftmargin=6mm]
    \item \textit{Is there a task-separability notion that generalizes the notions from previous works while still enabling tasks to be distinguished by a cluster-then-learn algorithm with polynomial time and sample complexity? If so, what is the optimal sample complexity of clustering under this notion?}
    \item \textit{Is there a polynomial time cluster-then-learn algorithm that simultaneously obtains near-optimal sample complexity in the clustering phase and near-optimal regret guarantee for the learning phase in the adversarial setting?}
  \end{itemize}    
\end{center}

% \subsection*{Main Contributions}
We answer both questions positively. For the first question, we introduce the notion of $\lambda$-separability, a task-separability notion that generalizes the task-separability definitions in previous works in the same setting~\citep{BrunskillAndLi2013, Hallak2015contextual, LatentMDPs2021}. 
Definition~\ref{def:lambdaSeparability} formally defines $\lambda$-separability. 
A more informal version of $\lambda$-separability has appeared in the discounted setting of Concurrent PAC RL~\citep{GuoAndBrunskill2015} where multiple MDPs are learned concurrently; however the implications on the episodic sequential setting and the tightness of their results were lacking. In essence, $\lambda$-separability assumes that between every pair of MDPs in $\mathcal{M}$, there exists some state-action pair whose transition functions are well-separated in $\ell_1$-norm. This setting is more challenging than the one considered by~\citet{Hallak2015contextual} where \textit{all} state-action pairs are well-separated. 
In Appendix~\ref{appendix:Example}, we show that $\lambda$-separability is more general than the entropy-based separability defined in~\citet{LatentMDPs2021} and thus requires novel approaches to exploring and clustering samples from different episodes. 
Under this notion of $\lambda$-separability, we show an instance-specific lower bound\footnote{Here and throughout the introduction, we suppress factors related to the MDPs such that the number of states and actions and the horizon length in all the bounds.} $\Omega(\frac{K}{\lambda^2})$ on both the sample complexity and regret of the clustering phase for a class of cluster-then-learn algorithms that includes most of the existing works.

To answer the second question, we propose a new cluster-then-learn algorithm, AOMultiRL, which obtains a regret upper bound of $\tilde{O}\left(\frac{K}{\lambda^2} + \sqrt{MK}\right)$ (the $\tilde{O}$ hides logarithmic terms). This upper bound indicates that the linear dependency on $K$ and $\lambda^2$ in the lower bounds are tight.  
The $\tilde{O}(\sqrt{MK})$ upper bound in the learning phase is near-optimal because if the identity of the model is revealed to a learner at the beginning of every episode (so that no clustering is necessary), there exists a straightforward $\Omega(\sqrt{MK})$ lower bound obtained by combining the lower bound for the single-task episodic setting of~\citet{Domingues2021} and Cauchy-Schwarz inequality. In the stochastic setting, the L-UCRL algorithm~\citep{LatentMDPs2021} obtains $O(\sqrt{MK})$ regret with respect to the optimal policy of a partially observable MDP (POMDP) setting that does not know the identity of the MDPs in each episode; thus their notion of regret is weaker than the one in our work.

 \subsection*{Overview of Techniques}
 \begin{itemize}[leftmargin=6mm]
  \item In Section~\ref{sec:lowerbounds}, we present two lower bounds. The first is a minimax lower bound $\Omega(K\sqrt{SAH})$ on the total regret of any algorithm. This result uses the construction of JAO MDPs in~\citet{UCRL2}. The second is a $\Omega\left(\frac{K}{\lambda^2}\right)$ instance-specific lower bound on the sample complexity and regret of the clustering phase for a class of \textit{uniformly good} cluster-then-learn algorithms when both $\lambda$ and $M$ are sufficiently large. The instance-specific lower bound relies on the novel construction of \emph{2-JAO MDP}, a hard instance combining two JAO MDPs in which one is the minimax lower bound instance and the other satisfies $\lambda$-separability. We show that learning 2-JAO MDPs is fundamentally a two-dimensional extension of the problem of finding a biased coin among a collection of fair coins~\citep[e.g.][]{Tulsiani2014L6}, for which information theoretic techniques of the one-dimensional problem can be adapted.
  
  \item In Section~\ref{sec:upperbound}, we show that AOMultiRL obtains a regret upper bound of $\tilde{O}\left(\frac{K}{\lambda^2} + \sqrt{MK}\right)$. 
  The main idea of AOMultiRL is based on the observation that a fixed horizon of order $\Theta(\frac{1}{\lambda^2})$ with a small constant factor is sufficient to obtain a $\lambda$-dependent coarse estimate of the transition functions of all state-action pairs. In turn, this coarse estimate is sufficient to have high-probability guarantees for the correctness of the clustering phase. This allows AOMultiRL to have a fixed horizon for the learning phase and be able to apply single-task RL algorithms with theoretical guarantees such as UCBVI-CH~\citep{Azar2017} in the learning phase.
\end{itemize}

Our paper is structured as follows: Section~\ref{sec:setup} formally sets up the problem.
 Section~\ref{sec:lowerbounds} presents the lower bounds. AOMultiRL and its regret upper bound are shown in Section~\ref{sec:upperbound}. 
Several numerical simulations are in Section~\ref{sec:experiments}. 
The appendix contains formal proofs of all results. 
We defer detailed discussion on related works to Appendix~\ref{appendix:RelatedWorks}.

\section{Problem Setup}
\label{sec:setup}
Our learning setting consists of $K$ episodes. 
In episode $k = 1, 2, \dots, K$, an adversary chooses an unknown Markov decision process (MDP) $m^k$ from a set of finite-horizon tabular stationary MDP models $\mathcal{M} = \{(\mathcal{S}, \mathcal{A}, H, P_i, r): i = 1, 2, \dots, M\}$ where $r: \mathcal{S \times A} \mapsto [0,1]$ is the shared reward function, $\mathcal{S}$ is the set of states with size $S$, $\mathcal{A}$ is the set of actions with size $A$, $H$ is the length of each episode, and $P_i: \mathcal{S \times A \times S} \mapsto [0,1]$ is the transition function where $P_i(s'|s,a)$ specifies the probability of being in state $s'$ after taking action $a$ at state $s$.  
The state space $\mathcal{S}$ and action space $\mathcal{A}$ are known and shared between all models; however, the transition functions are distinct and unknown. 
Following a common practice in single-task RL literature~\citep{Azar2017,ChiJin2018}, we assume that the reward function is known and deterministic, however our techniques and results extend to the setting of unknown stochastic $r$. 
Furthermore, the MDPs are assumed to be communicating with a finite diameter $D$~\citep{UCRL2}. A justification for this assumption on the diameter is provided in Section~\ref{sec:diameter}.

The adversary also chooses the initial state $s^k_1$. The policy $\pi_k$ of the learner in episode $k$ is a collection of $H$ functions $\pi^k = \{\pi_{k,h}: \mathcal{S} \mapsto \mathcal{A}\}$, which can be non-stationary and history-dependent. The value function of $\pi_k$ starting in state $s$ at step $h$ is the expected rewards obtained by following $\pi_k$ for $H-h+1$ steps $V_h^{\pi_k}(s) = \E[\sum_{h'=h}^H r(s^k_{h'}, \pi^k_h(s^k_{h'})) \mid s^k_h = s]$, where the expectation is taken with respect to the stochasticity in $m^k$ and $\pi^k$. Let $V_1^{k,*}$ denote the value function of the optimal policy in episode $k$.

The performance of the learner is measured by its regret with respect to the optimal policies in every episode:
\begin{equation}
\label{eq:Regret}
\text{Regret}(K) = \sum_{k=1}^K[V^{k,*}_1 - V^{\pi_k}_1](s^k_1).
\end{equation}
Let $[M] = \{1, 2, \dots, M\}$. We assume that the MDPs in $\mathcal{M}$ are \emph{$\lambda$-separable}: % according to the following definition.
%This notion of separation is defined as following.

\begin{definition}[$\lambda$-separability]
    Let $\lambda > 0$ and consider set of MDP models $\mathcal{M} =  \{m_1, \dots, m_M\}$ with $M$ models. For all $(i, j) \in [M] \times [M]$ and $i \neq j$, the $\lambda$-distinguishing set for two models $m_i$ and $m_j$ is defined as the set of state-action pairs such that the $\ell_1$ distance between $P_i(s,a)$ and $P_j(s,a)$ is larger than  $\lambda$:
    %\begin{equation*}
        $\Gamma^{\lambda}_{i,j} = \{(s,a) \in \mathcal{S \times A}: \norm{P_{i}(s,a) - P_{j}(s,a)} \geq \lambda\}$, 
    %\end{equation*}
    where $\norm{\cdot}$ denotes the $\ell_1$-norm and $P_i(s,a) = P_i(\cdot \mid s, a)$.

    The set $\mathcal{M}$ is $\lambda$-separable if for every two models $m_i, m_j$ in $\mathcal{M}$, the set $\Gamma^{\lambda}_{i,j}$ is non-empty:
    \begin{equation*}
    \forall i,j \in [M], i \neq j: \Gamma^{\lambda}_{i,j} \neq \emptyset.
    \end{equation*}
    In addition, $\lambda$ is called a separation level of $\mathcal{M}$, and we say a state-action pair $(s,a)$ is $\lambda$-distinguishing for two models $m_i$ and $m_j$ if $\norm{P_{i}(s,a) - P_{j}(s,a)} > \lambda$. 
    \label{def:lambdaSeparability}
\end{definition}

We use the following notion of a $\lambda$-distinguishing set for a collection of MDP models $\mathcal{M}$:
\begin{definition}[$\lambda$-distinguishing set]
  Given a $\lambda$-separable set of MDPs $\mathcal{M}$, a $\lambda$-distinguishing set of $\mathcal{M}$ is a set of state-action pairs $\Gamma^\lambda \subseteq \mathcal{S} \times \mathcal{A}$ such that for all
    $i, j \in [M], \Gamma^{\lambda}_{i, j} \cap \Gamma^\lambda \neq \emptyset$.
  In particular, the set $\Gamma = \cup_{i,j}\Gamma^{\lambda}_{i,j}$ is a $\lambda$-distinguishing set of $\mathcal{M}$.
\end{definition}
By definition, a state-action pair can be $\lambda$-distinguishing for some pairs of models and not $\lambda$-distinguishing for other pairs of models. 

\subsection{Assumption on the finite diameter of the MDPs}
\label{sec:diameter}
In this work, all MDPs are assumed to be communicating. We employ the following formal definition and assumption commonly used in 
literature~\citep{UCRL2, BrunskillAndLi2013,Sun2020, Tarbouriech2021a}:

%NM \begin{definition}(\citep{UCRL2, Xu2019WorstCaseRB}) 
\begin{definition}(\citep{UCRL2}) 
Given an ergodic Markov chain $\mathcal{F}$, let $T^\mathcal{F}_{s, s'} = \inf\{t > 0 \mid s_t = s', s_0 = s\}$ be the first passage time for two states $s, s'$ on $\mathcal{F}$. 
  %Let $T_{\mathcal{F}} = \max_{s, s'\in S}\mathbb{E}[T_{s, s'}]$. 
  Then the hitting time of a unichain MDP $G$ is $T_G = \max_{s,s' \in S}\max_{\pi}\mathbb{E}[T^{\mathcal{F}_\pi}_{s, s'}]$, where $\mathcal{F}_\pi$ is the Markov chain induced by $\pi$ on $G$. In addition, $T'_G = \max_{s, s' \in S}\min_{\pi}\mathbb{E}[T^{\mathcal{F}_\pi}_{s, s'}]$ is the diameter of $G$.
\end{definition}

%NM \begin{assumption}[Communicating MDPs] 
\begin{assumption} The diameter of all MDPs in $\mathcal{M}$ are bounded by a constant $D$.
  \label{assumption:diameter}
  \end{assumption}

While this finite diameter assumption is common in undiscounted and discounted single-task setting~\citep{UCRL2,GuoAndBrunskill2015}, it is not necessary in the episodic single-task setting~\citep{ChiJin2018, Mao2021b}. 
Therefore, it is important to justify this assumption in the episodic multi-task setting. 
In the episodic single-task setting, for any initial state $s_1$, the average time between any pair of states reachable from $s_1$ is bounded $2H$; hence, $H$ plays the same role as $D$~\citep{Domingues2021}.
This allows the learner to visit and gather state-transition samples in each state multiple times and construct accurate estimates of the model.  

However, in the multi-task setting, the same initial state $s_1$ in one episode might belong to a different MDP than the state $s_1$ in the previous episodes. Therefore, the set of reachable states and their state-transition distributions could change drastically.
Hence, it is important that the $\lambda$-distinguishing state-action pairs be reachable from any initial state $s_1$ for the learner to recognize which MDP it is in and use the samples appropriately. 
Otherwise, combining samples from different MDPs could lead to negative transfer.
Conversely, if the MDPs are allowed to be non-communicating, the component that makes them $\lambda$-separable might be unreachable from other components. In this case, the adversary can pick the initial states in these components and block the learner from accessing the $\lambda$-distinguishing state-actions. A construction that formalizes this argument is shown at the end of Section~\ref{sec:lowerbounds}.

\section{Minimax and Instance-Dependent Lower Bounds}
\label{sec:lowerbounds}

We first show that if $\lambda$ is sufficiently small and $M = \Theta(SA)$, then the setting is uninteresting in the sense that one cannot do much better than learning every episode individually without any transfer, leading to an expected regret that grows linearly in the number of episodes $K$.

\begin{restatable}[Minimax Lower Bound]{mylemma}{LemmaMinimaxLowerBound}
  Suppose $S, A \geq 10, D \geq 20\log_A(S)$ and $H \geq DSA$ are given. Let $\lambda = \Theta(\sqrt{\frac{SA}{HD}})$. There exists a set of $\lambda$-separable MDPs $\mathcal{M}$ of size $M = \frac{SA}{4}$, each with $S$ states, $A$ actions, diameter at  most $D$ and horizon $H$ such that if the tasks are chosen uniformly at random from $\mathcal{M}$, the expected regret of any sequence of policies $(\pi_k)_{k=1,\dots,K}$ over $K$ episodes is
  \begin{align*}
    \E[\mathrm{Regret}(K)] \geq \Omega\left(K\sqrt{DSAH}\right).
  \end{align*}
  \label{lemma:minimaxLowerBound}
\end{restatable}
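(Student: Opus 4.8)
The plan is to realize $\mathcal{M}$ as a family of JAO MDPs in the style of~\citet{UCRL2}, one for each admissible choice of a single ``distinguishing'' state--action pair, and then reduce the $K$-episode regret to a per-episode bound. The reduction is clean because the task index in episode $k$ is drawn uniformly and \emph{independently} of everything observed so far: conditioning on the history $\mathcal{H}_{k-1}$ of episodes $1,\dots,k-1$, the MDP $m^k$ is still uniform over $\mathcal{M}$, so the learner's (possibly complete) knowledge of the \emph{family} $\mathcal{M}$ tells it nothing about \emph{which} member governs episode $k$. Hence $\E[\mathrm{Regret}(K)] = \sum_{k=1}^K \E\!\big[\,\E[\,[V^{k,*}_1 - V^{\pi_k}_1](s^k_1)\mid \mathcal{H}_{k-1}]\,\big] \ge K\cdot L$, where $L$ lower bounds the expected regret incurred in a single episode of horizon $H$ by any learner that knows $\mathcal{M}$ but faces a uniformly random element of it. It then suffices to show $L = \Omega(\sqrt{DSAH})$.

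\textbf{The construction.} Following the multi-state JAO construction, each $m_i$ consists of a shallow navigation gadget of branching factor $A$ and depth $\Theta(\log_A S)$ reaching $\Theta(S)$ ``arena'' states, together with a distinguished rewarding state; all transitions mix at a base rate $\delta = \Theta(1/D)$, and exactly one state--action pair $(s_i,a_i)$ (the pair that indexes $m_i$) has its probability of moving toward the rewarding state boosted by an additive $\epsilon$. Taking the index set to be any $\tfrac{SA}{4}$ of the $\Theta(SA)$ candidate pairs gives $M = \tfrac{SA}{4}$ models. Choosing $\delta$ as a small enough constant over $D$ and using $D \ge 20\log_A S$ to absorb the navigation depth, one verifies every $m_i$ is communicating with diameter at most $D$. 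For $\lambda$-separability: if $i\ne j$ then at $(s_i,a_i)$ the model $m_i$ carries the $\epsilon$-boost while $m_j$ does not, so $\norm{P_i(s_i,a_i) - P_j(s_i,a_i)} = 2\epsilon$; taking $\epsilon = \Theta(\lambda)$ with $\lambda = \Theta(\sqrt{SA/(HD)})$, which is $O(1/D)$ since $H \ge DSA$ so all probabilities remain in $[0,1]$, puts $(s_i,a_i) \in \Gamma^\lambda_{i,j}$, hence $\Gamma^\lambda_{i,j}\ne\emptyset$ for all $i\ne j$.

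\textbf{The per-episode bound.} This is the classical JAO argument with $T$ replaced by $H$, now about locating the boosted pair among the $M = \Theta(SA)$ candidates. The optimal policy for $m_i$ reaches and repeatedly plays $(s_i,a_i)$, enjoying a per-step value advantage of order $\epsilon/\delta = \Theta(\epsilon D)$ over any policy that has not identified $(s_i,a_i)$; conversely, distinguishing a boosted pair from an un-boosted one requires $\Omega(\delta/\epsilon^2)$ visits, so ruling out the wrong candidates forces $\Omega(SA\cdot\delta/\epsilon^2)$ exploratory steps. A change-of-measure argument over the uniform prior on the index (bounding the total KL between trajectory laws under two indices by the number of visits to distinguishing pairs times $O(\epsilon^2/\delta)$) gives the usual dichotomy: either the learner spends $\Omega(H)$ steps exploring, or with constant probability it mis-identifies the boosted pair and is suboptimal on a constant fraction of the remaining steps; in both cases there are $\Omega(H)$ suboptimal steps, each costing $\Omega(\epsilon D)$. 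With $\delta = \Theta(1/D)$ and $\epsilon = \Theta(\sqrt{SA/(HD)})$ one has $SA\cdot\delta/\epsilon^2 = \Theta(H)$ and $\epsilon D\cdot H = \Theta(\sqrt{DSAH})$, so $L = \Omega(\sqrt{DSAH})$. The only discrepancy with the average-reward setting is the pre-mixing transient, which costs $O(D) = O(H/(SA)) = o(\sqrt{DSAH})$ and is harmless, and the adversary's choice of $s^k_1$ only adds to the regret. Combined with the reduction, $\E[\mathrm{Regret}(K)] \ge \Omega(K\sqrt{DSAH})$.

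\textbf{Main obstacle.} The delicate part is the simultaneous parameter tuning: keeping the diameter bounded by $D$ (forcing $\delta = \Theta(1/D)$ and a navigation gadget of depth $O(\log_A S)$, which is where $D \ge 20\log_A S$ enters) while pinning $\lambda$ at exactly $\Theta(\sqrt{SA/(HD)})$ and still extracting the full $\Omega(\sqrt{DSAH})$ from the information-theoretic step, all with explicit constants under $S,A\ge 10$ and $H\ge DSA$. Conceptually, the one point needing care is that conditioning on $\mathcal{H}_{k-1}$ — which could reveal the entire structure of $\mathcal{M}$ — does not shrink $L$; this holds precisely because identifying \emph{which} JAO MDP is active in episode $k$ is information-theoretically the single-instance JAO problem, in which the template is already known to the learner.
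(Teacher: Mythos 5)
Your proposal is correct and follows essentially the same route as the paper: the same family of JAO MDPs (one boosted state--action pair per model, converted to $S$ states and $A$ actions via the tree gadget), the same reduction of $\E[\mathrm{Regret}(K)]$ to $K$ independent single-episode problems via the uniform and history-independent task draw, the per-episode $\Omega(\sqrt{DSAH})$ bound from the standard JAO change-of-measure argument (which the paper simply cites from \citet{UCRL2}), and the same observation that the gap between $\rho^*H$ and $V_1^*$ is only $O(D)=o(\sqrt{DSAH})$ since $H\geq DSA$. The only difference is one of detail: where you assert the $O(D)$ pre-mixing transient, the paper proves it explicitly by summing the geometric series $\Delta_t=(1-2\delta-\lambda/2)^{t-1}\Delta_1$ for the two-state chain induced by the optimal policy, yielding $|V_1^*(s_1)-\rho^*H|\leq D/2$.
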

\vspace{-.8cm}
\begin{proof}{(Sketch)}
  We construct $\mathcal{M}$ so that each MDP in $\mathcal{M}$ is a JAO MDP~\citep{UCRL2} of two states $\{0, 1\}$, $\frac{SA}{4}$ actions and diameter $\frac{D}{4}$. Figure~\ref{fig:jaomdp} (left) illustrates the structure of a JAO MDP. State $0$ has no reward, while state $1$ has reward $+1$. Each model has a unique best action $a^*$ that starts from $0$ and goes to $1$. The pair $(0, a^*)$ is a $\lambda$-distinguishing state-action pair.
  
  A JAO MDP can be converted to an MDP with $S$ states, $A$ actions and diameter $D$, and this type of MDP gives the minimax lower bound proof in the undiscounted setting~\citep{UCRL2}.
The adversary selects a model from $\mathcal{M}$ uniformly at random, and so previous episodes provide no useful information for the current episode; hence, the regret of any learner is equal to the sum of its $K$ one-episode learning regrets.
The one-episode learning regret for JAO MDPs is known to be $\Omega(\sqrt{D S A H})$ when comparing against the optimal infinite-horizon average reward. For JAO MDPs, the optimal infinite horizon policy is also optimal for finite horizon; so, we can use a geometric convergence result from Markov chain theory~\citep{MarkovChainAndMixingTime2008} to convert this lower bound to a lower bound of the standard finite-horizon regret of the same order, giving the result.
  \end{proof}

  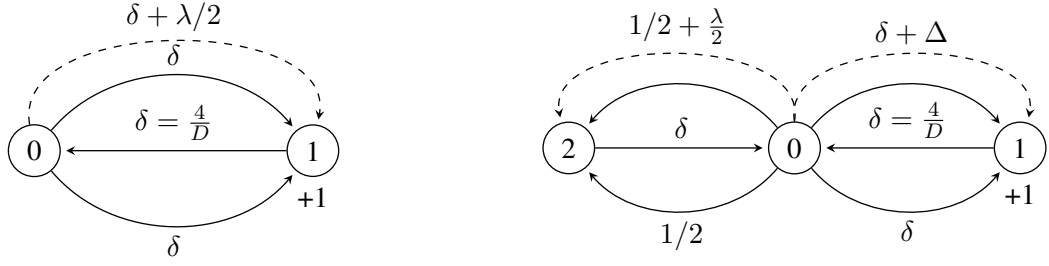
\begin{figure}[tbp]
    \centering
    \begin{minipage}{.5\textwidth}
    \centering
    \begin{tikzpicture}[->,>= stealth,shorten >=2 pt, line width =0.5 pt , node distance = 4 cm]
      \node [circle, draw] (zero) {0};
      \node [label=below:{+1}, circle , draw] (one) [ right = 3cm of zero] {1};
      \path (zero) edge [dashed, bend left = 100] node [above] {$\delta+\lambda/2$} (one) ;
      \path (zero) edge [bend left = 50] node [above] {$\delta$} (one) ;
      \path (zero) edge [ bend right = 50] node [below] {$\delta$} (one) ;
      \path (one) edge [ bend right = 0] node [above] {$\delta = \frac{4}{D}$} (zero);
    \end{tikzpicture}
  \end{minipage}%
  \begin{minipage}{0.5\textwidth}
      \centering
	\begin{tikzpicture}[->,>= stealth,shorten >=2 pt, line width =0.5 pt , node distance = 3 cm]
		\node [circle, draw] (zero) {0};
		\node [label=below:{+1},  circle , draw] (one) [ right of = zero] {1};
		\node [circle, draw] (two) [left of = zero] {2};
		\path (zero.north) edge [dashed, bend left = 100] node [above] {$\delta+\Delta$} (one);
		\path (zero) edge [ bend left = 50] node [above] {} (one) ;
		\path (zero) edge [ bend right = 50] node [below] {$\delta$} (one) ;
		\path (one) 
			edge [ bend right = 0] node [above] {$\delta = \frac{4}{D}$} (zero)
			;
		\path (zero.north) edge [dashed, bend right = 100] node [above] {$1/2+\frac{\lambda}{2}$} (two);
		\path (zero) edge [bend right = 50] node [above] {} (two);
		\path (zero) edge [bend left = 50] node [below] {$1/2$} (two);	
		\path (two) 
			edge [ bend right = 0] node [above] {$\delta$} (zero)
			;
	\end{tikzpicture}
\end{minipage}
    \caption{A JAO MDP (left) and a 2-JAO MDP (right). Only state $1$ has reward $+1$. The dashed arrows indicate the best actions.}
    \label{fig:jaomdp}
    \end{figure}

  Using the same technique in the proof of Lemma~\ref{lemma:minimaxLowerBound}, we can show that applying UCRL2~\citep{UCRL2} in every episode individually leads to a regret upper bound of $O\left(KDS\sqrt{AH\ln{H}}\right)$. This implies that learning every episode individually already gives a near-optimal regret guarantee.

\begin{remark}
  Our proof for Lemma~\ref{lemma:minimaxLowerBound} contains a simple yet rigorous proof for the mixing-time argument used in~\citet{Mao2021b,ChiJin2018}. This argument claims that for JAO MDPs, when the diameter is sufficiently small compared to the horizon, the optimal $H$-step value function $V^*_1$ in the regret of the episodic setting can be replaced by the optimal average reward $\rho^*H$ in the undiscounted setting without changing the order of the lower bound. 
  To the best of our knowledge, our proof is the first rigorous proof for this argument that applies for any number of episodes including $K = 1$.~\citet{Domingues2021} provide an alternative proof; however the results therein hold in a different setting where $K$ is sufficiently large and the horizon $H$ can be much smaller than $D$.
\end{remark}

We emphasize that the lower bound in Lemma~\ref{lemma:minimaxLowerBound} holds for \emph{any} learning algorithms. This result motivates the more interesting setting in which $\lambda$ is a fixed and large constant independent of $H$. In this case, we are interested in an instance-specific lower bound. For multi-armed bandits, instance-specific lower bounds are constructed with respect to a class of \textit{uniformly good} learning algorithms~\citep{LaiAndRobbins1985}. In our setting, we focus on defining a class of uniformly good algorithms that include the cluster-then-learn algorithms in the previous works for multi-task PAC RL settings such as Finite-Model-RL~\citep{BrunskillAndLi2013} and PAC-EXPLORE~\citep{GuoAndBrunskill2015}. We consider a class of MDPs and a cluster-then-learn algorithm uniformly good if they satisfy an intuitive property: for any MDP in that class, the algorithm should be able to correctly classify whether a cluster of samples is from that MDP or not with an arbitrarily low (but not zero) failure probability, provided that the horizon $H$ is sufficiently long for the algorithm to collect enough samples. The following definition formalizes this idea.

\begin{definition}[PAC identifiability of MDPs]
	A set of models $\mathcal{M}$ of size $M$ is PAC identifiable if there exists a function $f: (0,1) \mapsto \mathbb{N}$, a sample collection policy $\pi$ and a classification algorithm $\mathcal{C}$ with the following property: for every $p \in (0, 1)$, for each model $1 \leq m \leq M$ in $\mathcal{M}$, if $\pi$ is run for $f(p)$ steps and the state-transition samples are given to $\mathcal{C}$, then the algorithm $\mathcal{C}$ returns the correct identity of $m$ with probability at least $1-p$, where the probability is taken over all possible sequence of $f(p)$ samples collected by running $\pi$ on $m$ for $f(p)$ steps. The smallest choice of function $f(p)$ among all possible choices is called the sample complexity of model identification of $\mathcal{M}$.
\end{definition}

The clustering algorithm in a cluster-then-learn framework solves a problem different from classification: they only need to tell whether a cluster of samples belong to the same or different distribution than another cluster of samples, not the identity of the distribution. We can reduce one problem to the other by the following construction: consider the adversary that gives all $M$ models in the first $M$ episodes. After the first $M$ episodes, there are $M$ clusters of samples, each corresponding to one model in $\mathcal{M}$. Once the learner has constructed $M$ different clusters, from the episode $M+1$, the clustering problem is as hard as classification since identifying the right cluster immediately implies the identity of the MDP where the samples come from, and vice versa. Hence, we can apply the sample complexity of classification to that of clustering. 

Next, we show the lower bound on the sample complexity of model identification for the class of $\lambda$-separable communicating MDPs. 

\begin{restatable}{mylemma}{LemmaPACLowerBoundQCoins}
  For any $S, A \geq 20, D \geq 16$ and $\lambda \in (0, \frac{1}{2}]$, there exists a PAC identifiable $\lambda$-separable set of MDPs $\mathcal{M}$ of size $\frac{SA}{12}$, each with at most $S$ states, $A$ actions and diameter $D$ such that for any classification algorithm $\mathcal{C}$, if the number of state-transition samples given to $\mathcal{C}$ is less than $\frac{SA}{180\lambda^2}$ then for at least one MDP in $\mathcal{M}$, algorithm $\mathcal{C}$ fails to identify that MDP with probability at least $\frac{1}{2}$.
  \label{lemma:PACLowerBoundQCoins}
\end{restatable}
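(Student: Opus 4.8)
{(Proof sketch)}
The plan is to reduce model identification to the classical problem of finding a single biased coin among a collection of otherwise fair coins --- a problem with sample complexity $\Theta(N/\lambda^2)$ for $N$ coins~\citep{Tulsiani2014L6} --- and to port its information-theoretic lower bound into the MDP setting. I would take $\mathcal{M} = \{m_1,\dots,m_M\}$ with $M = \lfloor SA/12 \rfloor$, all built on a common ``skeleton'' MDP: a fixed communicating structure whose non-distinguishing transitions carry return/self-loop probabilities of order $\Theta(1/D)$ --- the role played by $\delta = 4/D$ in the JAO and $2$-JAO MDPs of Figure~\ref{fig:jaomdp} --- so that every model has diameter $\Theta(D) \le D$ (the hypothesis $D \ge 16$ leaves the needed slack). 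Among the $SA$ state-action pairs I would single out $M$ \emph{candidate pairs} $g_1,\dots,g_M$, each carrying a ``reference'' transition that splits mass $\tfrac12/\tfrac12$ between two fixed successor states. Model $m_i$ is the skeleton with the transition perturbed \emph{only} at $g_i$, moving $\lambda/2$ of the probability from one successor to the other, so that $\norm{P_{m_i}(g_i) - P_{\mathrm{ref}}(g_i)} = \lambda$ and $m_i$ agrees with the reference everywhere else. For $i \ne j$ the pair $g_i$ (and likewise $g_j$) then lies in $\Gamma^{\lambda}_{i,j}$, so $\mathcal{M}$ is $\lambda$-separable in the sense of Definition~\ref{def:lambdaSeparability}, and since the perturbation is small and local the MDPs remain communicating with diameter at most $D$.

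\emph{PAC identifiability.} Because each $m_i$ is communicating with diameter $\le D$, there is a round-robin policy $\pi$ that navigates to each candidate pair (reaching the relevant state in $O(D)$ expected steps by the definition of the diameter, boosted to a high-probability bound by restarts) and collects $n$ transitions there; the classifier $\mathcal{C}$ forms each empirical transition $\widehat{P}(g_\ell)$ and outputs the unique index whose empirical transition is more than $\lambda/2$ from $P_{\mathrm{ref}}$. Taking $n = \Theta(\lambda^{-2}\log(M/p))$ and a union bound put every $\widehat{P}(g_\ell)$ within $\lambda/4$ of the truth with probability $\ge 1-p$, after which the perturbed pair (deviation $\ge 3\lambda/4$) is the only one flagged; hence $f(p) = \widetilde{O}(MD\lambda^{-2}\log(1/p)) < \infty$ and $\mathcal{M}$ is PAC identifiable.

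\emph{Lower bound.} For the lower bound I would fix an arbitrary sample-collection policy and classifier, run it for $T$ steps, and let $\widehat{m}$ be the output; write $\P^{m}$ for the law of the length-$T$ transcript under model $m$ (the policy being identical under every model), and use the unperturbed skeleton $M_0$ as a reference measure. The KL chain rule (a Wald-type identity) gives
\begin{equation*}
  \KL(\P^{M_0} \,\|\, \P^{m_i}) = \E^{M_0}[N_i]\cdot \KL(P_{\mathrm{ref}}(g_i) \,\|\, P_{m_i}(g_i)) \le \tfrac{2}{3}\lambda^2\,\E^{M_0}[N_i],
\end{equation*}
where $N_i$ counts visits to $g_i$; visits to non-candidate pairs and to $g_j$ for $j \ne i$ contribute nothing since $m_i$ and $M_0$ coincide there, and the last inequality is the elementary bound on the binary KL divergence between two distributions differing by $\lambda/2$ around mass $\tfrac12$ (valid since $\lambda \le \tfrac12$). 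By Pinsker, $\P^{m_i}(\widehat{m} = i) \le \P^{M_0}(\widehat{m} = i) + \sqrt{\tfrac13\lambda^2\,\E^{M_0}[N_i]}$; averaging over $i$, using $\sum_i \P^{M_0}(\widehat{m} = i) \le 1$ (one index is output) and Cauchy--Schwarz with $\sum_i \E^{M_0}[N_i] = \E^{M_0}\big[\sum_i N_i\big] \le T$ (the $g_i$ are distinct, one pair is visited per step),
\begin{equation*}
  \frac{1}{M}\sum_{i=1}^{M}\P^{m_i}(\widehat{m} = i) \le \frac{1}{M} + \sqrt{\frac{\lambda^2 T}{3M}}.
\end{equation*}
With $M = \lfloor SA/12\rfloor \ge 33$, a short computation shows that $T < \frac{SA}{180\lambda^2}$ forces the right-hand side below $\tfrac12$ (with room to spare); hence the average misidentification probability exceeds $\tfrac12$, so at least one $m_i$ is misidentified with probability at least $\tfrac12$.

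The hard part will be the construction itself --- arranging $\lambda$-separability, the prescribed diameter, the communicating property, and PAC identifiability simultaneously for one family --- and making the reduction to the biased-coin problem watertight in the \emph{adaptive} MDP setting, i.e.\ that a learner who navigates and chooses actions online extracts no information beyond the visit counts $N_i$. Once the change-of-measure identity against $M_0$ is in hand, the binary-KL estimate and the Cauchy--Schwarz step --- which is what converts $M = \Theta(SA)$ candidate pairs into the $SA/\lambda^2$ scaling --- are routine, and tracking the constants yields the stated $\frac{SA}{180\lambda^2}$ threshold.
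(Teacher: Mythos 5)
Your proof is essentially correct as a proof of the lemma as stated, but it takes a genuinely different route from the paper. You build a ``skeleton plus one perturbed pair per model'' family and run a direct change-of-measure argument against the unperturbed reference: divergence decomposition to get $\KL(\P^{M_0}\,\|\,\P^{m_i}) \leq \tfrac{2}{3}\lambda^2\,\E^{M_0}[N_i]$, then Pinsker, averaging over $i$, and Cauchy--Schwarz with $\sum_i \E^{M_0}[N_i]\leq T$; the arithmetic with $M=\lfloor SA/12\rfloor\geq 33$ and $T<\frac{SA}{180\lambda^2}$ indeed pushes the average success probability below $\tfrac12$, and the divergence decomposition already handles adaptive navigation, so the ``watertightness'' worry you flag is not actually a gap (your KL identity is exactly the standard fix). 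The paper instead constructs \emph{2-JAO MDPs}: each model glues a zero-reward $\lambda$-separable JAO part onto the minimax-hard rewarded JAO part, reduces identification to a two-column biased-coin matrix problem (Lemma~\ref{lemma:generalized-tulsiani} and Corollary~\ref{corollary:generalized-tulsiani}), and lower-bounds specifically the number of samples taken on the distinguishing, reward-free column, using the UCRL2 tree conversion to meet the $S$, $A$, $D$ budget. Your route is more self-contained and gives the stated threshold with explicit, verifiable constants (the paper's own proof actually lands on $\frac{SA}{480\lambda^2}$ for first-column samples, so your bookkeeping is if anything cleaner), but it buys only the classification lower bound: because your construction has no reward asymmetry and bounds total samples rather than samples on a zero-reward region, it would not support the paper's downstream Corollary~\ref{corollary:PACLowerBoundSteps}, where the $\Omega(D)$ return cost from the distinguishing part and the fact that the reward lives only on the minimax-hard part convert the sample bound into the $\Omega\bigl(\frac{KDSA}{\lambda^2}\bigr)$ regret bound for uniformly good cluster-then-learn algorithms. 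The remaining soft spots in your sketch --- embedding $SA/12$ candidate pairs within $S$ states, $A$ actions and diameter at most $D$, and verifying PAC identifiability within a fixed budget $f(p)$ --- are acknowledged and are at about the same level of detail as the paper's own treatment.
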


\begin{proof}{(Sketch)}
The set $\mathcal{M}$ is a set of 2-JAO MDPs, shown in Figure~\ref{fig:jaomdp} (right). Each 2-JAO MDP combines two JAO MDPs with the same number of actions and with diameter in the range $[\frac{D}{2}, D]$; one is $\lambda$-separable and one is the hard instance for the minimax lower bound of~\citet{UCRL2}. 
Rewards exist only in the part containing the hard instance. 
%NM
If a learner completely ignores the $\lambda$-separable part, by Lemma~\ref{lemma:minimaxLowerBound} the learner cannot do much better than just learning every episode individually. 
On the other hand, with enough samples from the $\lambda$-separable part, the learner can identify the MDP and use the samples collected in the previous episodes of the same MDP to accelerate learning the hard instance part. 
However, the $\lambda$-separable part is also a JAO MDP, for which no useful information from previous episodes can help identify the MDP in the current episode.

Only the actions at state $0$ are $\lambda$-distinguishing and can be used to identify the MDPs. Taking an action in state $0$ can be seen as flipping a coin: heads for transitioning to another state and tails for staying in state $0$. 
Identifying a 2-JAO MDP reduces to the problem of using at most $H$ coin flips to identify, in a $Q \times 2$ matrix of coins, a row $j$ that has coins that are slightly different from the others. The first column has fair coins except in row $j$, where the success probability is $\frac{1}{2} + \lambda$. The second column coins with success probability of $\delta \leq \frac{1}{4}$ except in row $j$, where the coin is upwardly biased by $\Delta \leq \lambda$. Lemma~\ref{lemma:generalized-tulsiani} and Corollary~\ref{corollary:generalized-tulsiani} in the appendix show a $\Omega\left(\frac{Q}{\lambda^2}\right)$ lower bound on the number of coin flips on the first column (the left part of the 2-JAO MDP), implying the desired result.
\end{proof}

Lemma~\ref{lemma:PACLowerBoundQCoins} imply that for 2-JAO MDPs, any uniformly good model identification algorithm needs to collect at least $\Omega\left(\frac{SA}{\lambda^2}\right)$ samples from state $0$ on the left part. Whenever an action towards state $2$ is taken from state $0$, the learner may end up in state $2$. Once in state $2$, the learner needs to get back to state $0$ to obtain the next useful sample. The expected number of actions needed to get back to state $0$ from state $2$ is $\frac{1}{\delta} = \frac{D}{4}$. This implies the following two lower bounds on the horizon of the clustering phase and the total regret of any cluster-then-learn algorithms.

\begin{restatable}{mycorollary}{CorollaryPACLowerBoundSteps}
  For any $S, A \geq 20, D \geq 16$ and $\lambda \in (0, 1]$, there exists a PAC identifiable $\lambda$-separable set of MDPs $\mathcal{M}$ of size $M = \frac{SA}{12}$, each with $S$ states, $A$ actions and diameter $D$ such that for any uniformly good cluster-then-learn algorithm, to find the correct cluster with probability of at least $\frac{1}{2}$, the expected number of exploration steps needed in the clustering phase is $\Omega(\frac{DSA}{\lambda^2})$. Furthermore, the expected regret over $K$ episodes of the same algorithm is
  \begin{align*}
      \E[\mathrm{Regret}(K)] \geq \Omega\left(\frac{KDSA}{\lambda^2}\right).
  \end{align*}  
  \label{corollary:PACLowerBoundSteps}
\end{restatable}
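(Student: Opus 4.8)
The plan is to bootstrap from the sample-complexity lower bound of Lemma~\ref{lemma:PACLowerBoundQCoins} to a lower bound on the number of *exploration steps* (as opposed to useful samples), and then turn the per-episode step bound into a regret bound. First I would fix the same family $\mathcal{M}$ of 2-JAO MDPs used in Lemma~\ref{lemma:PACLowerBoundQCoins}, with $M = \frac{SA}{12}$; after padding with dummy states/actions one arranges $S$ states, $A$ actions and diameter exactly $D$, and the left (identifying) JAO component has $\delta = \Theta(1/D)$. By Lemma~\ref{lemma:PACLowerBoundQCoins}, any clustering/classification algorithm that succeeds with probability at least $\frac12$ on every model must receive at least $\frac{SA}{180\lambda^2} = \Omega\!\left(\frac{SA}{\lambda^2}\right)$ state-transition samples from state $0$ of the left component — these are the only $\lambda$-distinguishing pairs. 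The key step is to translate ``number of useful samples'' into ``number of exploration steps.'' Each time the learner takes the distinguishing action at state $0$, with probability $\Theta(1/D)$ it is ejected into state $2$ (the absorbing-ish sink of the left component), and from state $2$ the expected number of steps to return to state $0$ is $\frac{1}{\delta} = \Theta(D)$, since the only way back is the single transition of probability $\delta$. A standard renewal/Wald argument then gives: to collect $n$ samples from state $0$, the expected number of steps is $\Omega(nD)$. Combined with $n = \Omega(SA/\lambda^2)$ this yields the claimed $\Omega\!\left(\frac{DSA}{\lambda^2}\right)$ lower bound on the expected length of the clustering phase.

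For the regret bound I would argue episode by episode. Consider the adversary that first presents all $M$ models in episodes $1,\dots,M$ (as in the clustering-to-classification reduction described before the lemma) and then, in each subsequent episode, presents a model drawn uniformly at random from $\mathcal{M}$. By the uniform-goodness hypothesis and the reduction, in any episode the cluster-then-learn algorithm must run its clustering subroutine until it can correctly identify the current model; before identification is complete it has no way to benefit from samples of previous same-model episodes, so the argument of Lemma~\ref{lemma:minimaxLowerBound} applies to the reward-bearing right component of the 2-JAO MDP. During the $\Omega(DSA/\lambda^2)$ clustering steps the learner is, on the left component or returning from state $2$, accruing a constant per-step suboptimality gap relative to the optimal policy of the 2-JAO MDP (which spends its time cycling $0 \leftrightarrow 1$ in the right component collecting reward $+1$). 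Summing the per-episode regret $\Omega(DSA/\lambda^2)$ over the $\Theta(K)$ episodes gives $\E[\mathrm{Regret}(K)] \ge \Omega\!\left(\frac{KDSA}{\lambda^2}\right)$; a geometric-mixing conversion as in the proof of Lemma~\ref{lemma:minimaxLowerBound} handles the passage from average-reward to finite-horizon regret, and one checks $H$ is taken large enough (polynomial in $D, S, A, 1/\lambda$) that the clustering phase fits inside an episode so the lower bound is not vacuous.

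The main obstacle I anticipate is the renewal argument for the step lower bound: one must be careful that the algorithm is *adaptive* and may interleave actions at state $0$, detours to state $2$, and excursions into the right (reward) component in an arbitrary history-dependent way, so ``number of steps'' is a stopping time and the $\Omega(D)$ return cost must be charged correctly. The clean way is to define, for the $i$-th visit to state $0$ on which the distinguishing action is taken, the random variable $R_i$ equal to the number of steps until the *next* visit to state $0$; conditioned on being ejected to state $2$ (probability $\Theta(1/D)$), $R_i$ stochastically dominates a geometric variable with mean $\Theta(D)$, regardless of the policy, because the transition structure of the left component forces the return through the single $\delta$-probability edge. Then $\E[R_i] = \Omega(1) \cdot \Theta(D) = \Omega(1)$ — wait, more precisely $\E[R_i] \ge \Theta(1/D)\cdot\Theta(D) = \Theta(1)$ is too weak; the right statement is that collecting $n$ samples costs in expectation at least $n + (\text{number ejected})\cdot\Theta(D)$, and the number ejected concentrates around $n\cdot\Theta(1/D)$, giving total $\Omega(n)$ — so in fact the dominant $D$ factor comes from needing $\Theta(n/D)$ ejections each costing $\Theta(D)$, i.e. $\Omega(n) = \Omega(SA/\lambda^2)$ steps, *plus* we must re-examine whether the advertised bound should be $\Omega(DSA/\lambda^2)$ or $\Omega(SA/\lambda^2)$. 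The resolution (matching the paper's $\tilde O(K/\lambda^2)$ upper bound which hides the $DSA$) is that one forces every sampling attempt to risk ejection and the \emph{worst-case over initial states} chosen by the adversary can place the learner in state $2$, so that *each* of the $\Omega(SA/\lambda^2)$ samples is preceded by an expected $\Theta(D)$-step recovery; pinning down this adversarial initial-state placement and the accompanying Wald identity is where the real care is needed, and the rest is the routine mixing-time bookkeeping inherited from Lemma~\ref{lemma:minimaxLowerBound}.
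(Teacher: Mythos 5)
There is a genuine gap, and it sits exactly where you sensed trouble: you have misremembered the transition structure of the left (identifying) component of the 2-JAO MDP. In the construction of Lemma~\ref{lemma:PACLowerBoundQCoins}, every left-part action taken at state $0$ moves the learner to state $2$ with probability at least $\tfrac12$ (it is $\tfrac12$ or $\tfrac12+\tfrac{\lambda}{2}$), not with probability $\Theta(1/D)$; the only quantity of order $1/D$ is $\delta=\tfrac{4}{D}$, the probability of getting back from state $2$ to state $0$ (and the right-part $0\to1$ probabilities). With the correct parameters the step-count argument is short: if $Z=\Omega(SA/\lambda^2)$ left-part actions must be taken at state $0$ (by Lemma~\ref{lemma:PACLowerBoundQCoins}), then the number of ejections to state $2$ satisfies $\E[Z_2\mid Z]\ge Z/2$, and each ejection costs an expected $1/\delta=\Theta(D)$ steps to return to state $0$ regardless of the policy, so $\E[Z+Z_2D]=\Omega(ZD)=\Omega\bigl(\tfrac{DSA}{\lambda^2}\bigr)$. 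Because you assumed the ejection probability is $\Theta(1/D)$, your own accounting (correctly) collapses to $\Omega(SA/\lambda^2)$ steps, and the rescue you then propose — an adversary that places the learner in state $2$ ``before each sample'' — is not available: the adversary chooses the initial state once per episode and cannot reset the learner mid-episode, so it cannot charge a $\Theta(D)$ recovery to every one of the $\Omega(SA/\lambda^2)$ samples. The $D$ factor must come, as in the paper, from the constant ejection probability built into the left component, not from adversarial initial-state placement.

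The regret half is closer to the mark but can be simplified along the paper's lines: in the lower-bound instance the learner may be assumed to know all models and their optimal policies, so after identification the learning-phase regret is at most $D/2$ (the span bound from Lemma~\ref{lemma:minimaxLowerBound}); during the clustering phase the left part yields zero reward while the optimal average reward satisfies $\rho^*=\tfrac{\delta+\Delta}{2\delta+\Delta}\ge\tfrac12$, so the per-episode regret is at least $\Omega(\rho^*\E[H_0])-\tfrac{D}{2}=\Omega\bigl(\tfrac{DSA}{\lambda^2}\bigr)$, and summing over the $K$ episodes gives the claim. Your invocation of the Lemma~\ref{lemma:minimaxLowerBound} machinery for the right component and of a per-step suboptimality gap is in the right spirit, but it is not needed in that strength once the step lower bound for the clustering phase is established correctly.
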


\begin{proof}(Sketch)
  In the lower bound construction, the learner is assumed to know everything about the set of models, including their optimal policies. Hence, after having identified the model in the clustering phase, the learner can follow the optimal policy in the learning phase and incur a small regret of at most $D/2$ in this phase. Therefore, the regret is dominated by the regret in the clustering phase, which is of order $\frac{DSA}{\lambda^2}$.
\end{proof}  

\begin{remark}
  The lower bound in Corollary~\ref{corollary:PACLowerBoundSteps} holds for a particular class of uniformly good cluster-then-learn algorithms under an adaptive adversary. It remains an open question whether this lower bound holds for any algorithms, not just cluster-then-learn.
\end{remark}

\begin{remark}
  Corollary~\ref{corollary:PACLowerBoundSteps} implies that, without further assumptions, it is not possible to improve the $\frac{1}{\lambda^2}$ dependency on $\lambda$. At the first glance this seems to contradict the existing results in bandits and online learning literature, where the regret bound depends on $\frac{1}{\mathrm{gap}}$ where $\mathrm{gap}$ is the the difference in expected reward between the best arm and the sub-optimal arms. However, $\lambda$ does not play the same role as the gaps in bandits. 
  Observe that on the 2-JAO MDPs, the set of arms with positive reward is only in the right JAO MDP. The lower-bound learner knows this, but chooses to pull the arms on the left JAO MDP (with zero-reward) to collect side information that helps learn the right part faster. In this analogy, $\lambda$ does not play the same role as the gaps in bandits, since the learner already knows the arms on the left JAO MDP are suboptimal. The role of $\lambda$ is in model identification, for which similar $\frac{1}{\lambda^2}$ lower bounds are known~\citep[e.g.][]{Tulsiani2014L6}.
\end{remark}

Finally, we construct a non-communicating variant of the 2-JAO MDP to show that the finite diameter assumption is necessary. Figure~\ref{fig:noncommunicatingtwojaomdp} in Appendix~\ref{sec:proofsLowerBounds} illustrates this construction. On this variant, all the transitions from state $0$ to state $2$ are reversed. In addition, no actions take state $0$ to state $2$, making this MDP non-communicating. A set of these non-communicating MDPs is still $\lambda$-separable due to the state-action pairs that start at state $2$. However, by setting the initial state to $0$, the adversary can force the learner to operate only on the right part, regardless of how large $\lambda$ is.

\section{Non-Asymptotic Upper Bounds}
\label{sec:upperbound}
We propose and analyze AOMultiRL, a polynomial time cluster-then-learn algorithm that obtains a high-probability regret bound of $\tilde{O}(\frac{KDSA}{\lambda^2}+H^{3/2}\sqrt{MSAK})$.
In each episode, the learner starts with the clustering phase to identify the cluster of samples generated in previous episodes that has the same task.
Once the right cluster is identified, the learner can use the samples from previous episodes in the learning phase. 

A fundamental difference between the undiscounted infinite horizon setting considered in previous works~\citep{GuoAndBrunskill2015, BrunskillAndLi2013} and the episodic finite horizon in our work is the horizon of the two phases. In previous works, different episodes might have different horizons for the clustering phase depending on whether the learner decides to start exploration at all~\citep{BrunskillAndLi2015} or which state-action pairs are to be explored~\citep{BrunskillAndLi2013}. 
This poses a challenge for the episodic finite-horizon setting, because a varying horizon for the clustering phase leads to a varying horizon for the learning phase. Thus, standard single-task algorithms that rely on a fixed horizon such as UCBVI~\citep{Azar2017} and StrongEuler~\citep{Simchowitz2019} cannot be applied directly. 
From an algorithmic standpoint, for a fixed horizon $H$, a non-asymptotic bound on the horizon of the clustering phase is necessary so that the learner knows exactly whether $H$ is large enough and when to stop collecting samples. 

AOMultiRL alleviates this issue by setting a fixed horizon for the clustering phase, which reduces the learning phase to standard single-task episodic RL. First, we state an assumption on the ergodicity of the MDPs.

%NM \begin{assumption}[Ergodic MDPs] 
\begin{assumption} 
The hitting times of all MDPs in $\mathcal{M}$ are bounded by a known constant $\tilde{D}$.\label{assumption:hittingtime}
\end{assumption}

The main purpose of Assumption~\ref{assumption:hittingtime} is simplifying the computation of a non-asymptotic upper bound for the clustering phase in order to focus the exposition on the main ideas. 
We discuss a method for removing this assumption in Appendix~\ref{appendix:removingHittingTime}.

Algorithm~\ref{algo:aomtrl} outlines the main steps of our approach. Given a set $\Gamma^\alpha$ of $\alpha$-distinguishing state-action pairs, in the clustering phase the learner employs a history-dependent policy specified by Algorithm~\ref{algo:ExploreID}, \texttt{ExploreID}, to collect at least $N$ samples for each state-action pair in $\Gamma^\alpha$, where $N$ will be determined later. Once all $(s,a)$ in $\Gamma^\alpha$ have been visited at least $N$ times, Algorithm~\ref{algo:IdentifyCluster}, \texttt{IdentifyCluster}, computes the empirical means of the transition function of these $(s,a)$ and then compares them with those in each cluster to determine which cluster contains the samples from the same task (or none do, in which case a new cluster is created). 
For the rest of the episode, the learner uses the UCBVI-CH algorithm~\citep{Azar2017} to learn the optimal policy.

The algorithms and results up to Theorem~\ref{theorem:mainRegret1} are presented for a general set $\Gamma^\alpha$. Since $\Gamma^\alpha$ is generally unknown, Corollary~\ref{corollary:mainRegret} shows the result for $\alpha = \lambda$ and $\Gamma^\alpha = \mathcal{S} \times \mathcal{A}$.

\begin{figure}[t]
  \centering
  \begin{minipage}{0.54\textwidth}
  \begin{algorithm2e}[H]
    \SetAlgoNoEnd
    \DontPrintSemicolon
    \KwIn{Number of models $M$, number of episodes $K$, MDPs parameters $\mathcal{S, A}, H, \tilde{D}, \lambda$, probability $p$, separation level $\alpha$ and an $\alpha$-distinguishing set $\Gamma^\alpha$.}
    Compute $p_1 = p/3, N = \frac{256}{\lambda^2}\max\{S, \ln(\frac{K|\Gamma^\alpha|}{p_1})\}, \delta = \alpha - \lambda/4, H_{0} = 12D|\Gamma^\alpha|N$\;
    Initialize $\mathcal{C} \leftarrow \emptyset$\;
    %Initialize $\Gamma \leftarrow \mathcal{S} \times \mathcal{A}$\;
    
    %Initialize $\forall (s,a,s'): N(s, a, s') = 0; N(s, a) = 0;$\;
    \For{$k = 1, \dots, K$}{
    %Initialize $n(s, a) = 0, n(s, a, s') = 0, g(s, a) = 0~\forall s, a, s'$\;
    Initialize $\mathcal{B}_k \leftarrow \emptyset$\;
    %Compute $H_{0} = H_1\mathbb{I}(k \leq K_1) + H_2\mathbb{I}(k > K_1)$\;
    The environment chooses a task $m^k$\;
    Observe the initial state $s_1$\;
    \For{$h = 1, \dots, H_{0}$ }{  
        %\tcp{Collect samples for model identification}
        \SetKwFunction{ExploreID}{ExploreID}
        $a_h = $ \ExploreID{$s_h, \Gamma^\alpha$}\;
        Observe $s_{h+1}$ and $r_{h+1}$\;
        Add $(s_h, a_h, s_{h+1})$ to $\mathcal{B}_k$\;
    }
    %\tcp{Identify which cluster the collect samples belong to}
    \SetKwFunction{IdentifyCluster}{IdentifyCluster}
    $id \leftarrow$ \IdentifyCluster{$\mathcal{B}_k, \Gamma^\alpha, \mathcal{C}, \delta$}\;
    \lIf{$id \geq 1$}{
        $\mathcal{C}_{id}^{model} = \mathcal{C}_{id}^{model} \cup \mathcal{B}_k$
    }
    \uElse{
      $id \leftarrow |\mathcal{C}| + 1$\;
      $\mathcal{C}_{id}^{model} = \mathcal{B}_k$, $\mathcal{C}_{id}^{regret} = \emptyset$\;
      $\mathcal{C} \leftarrow \mathcal{C} \cup \mathcal{C}_{id}$\;
    }
    %\tcp{Compute a regret minimization policy for the rest of the episode}
    $\pi_k = \texttt{UCBVI-CH}(\mathcal{C}_{id}^{regret})$\;
    %\label{algo:aomtrl:pik}
    %Apply  or \texttt{UCBVI-B} on the cluster \texttt{id} for the last $H - H_1$ steps.
    \For{$h = H_{0}+1, \dots, H$ }{
        $a_h = \pi_k(h, s_h)$\;
        Observe $s_{h+1}$ and $r_{h+1}$\;
        $\mathcal{C}_{id}^{regret} = \mathcal{C}_{id}^{regret} \cup (s_h, a_h, s_{h+1})$\;
    }
    }
    \caption{\protect Adversarial online multi-task RL}
    \label{algo:aomtrl}
    \end{algorithm2e}
  \end{minipage}
  \hfill
  \begin{minipage}{0.44\textwidth}
  \begin{algorithm2e}[H]
      \SetAlgoNoEnd
      \DontPrintSemicolon
      \KwIn{Episode $k$, state $s$, set $\Gamma^\alpha$ and number $N$}
      Set $\mathcal{G}(s) = \left\{ \begin{array}{l} a \in \mathcal{A} : \\ (s, a) \in \Gamma^\alpha, N_{\mathcal{B}_k}(s, a) < N \end{array} \right\}$\;
      \uIf{$\mathcal{G}(s) \neq \emptyset$}
      {
          \Return $\argmax_{a \in \mathcal{G}(s)}N_{\mathcal{B}_k}(s, a)$
      }
      \uElse{
          \Return $\argmax_{a \in \mathcal{A}} \sum_{s' = 1}^S \hat{P}^k(s' \mid s, a)\mathbb{I}\{\mathcal{G}(s') \neq \emptyset\}$
      }
      \caption{ExploreID}
      \label{algo:ExploreID}
      \end{algorithm2e}
     
      \vspace{5mm}
  
      \begin{algorithm2e}[H]
        \SetAlgoNoEnd
        \DontPrintSemicolon
        \KwIn{Episode $k$, set $\Gamma^\alpha$, clusters $\mathcal{C}$, and threshold $\delta$}
        %\KwIn{Parameter $m_{p, \delta}$}
        \For{$c = 1, \dots, \norm{\mathcal{C}}$}
        {
            Initialize \texttt{id} $\leftarrow c$ \;
            \For{$(s, a) \in \Gamma$}
            {
                \uIf{$\norm{[\hat{P}_c - \hat{P}^k](s, a)} > \delta$}
                {
                    %\label{line:break}
                    \texttt{id} $\leftarrow 0$\;
                    break;
                }
            }
            \uIf{\texttt{id} $ == c$}
            {
                \Return \texttt{id}; %\tcp{This trajectory belongs to an existing cluster}
            }
        }
        \Return 0;
        \caption{Identify Cluster}
        \label{algo:IdentifyCluster}
        \end{algorithm2e}
  \vspace{19.9mm} %% magic number!
  \end{minipage}
  \end{figure}

\subsection{The Exploration Algorithm}
\label{sec:theExplorationAlgorithm}
      Given a collection $\mathcal{B}$ of tuples $(s, a, s')$, the empirical transition functions estimated by $\mathcal{B}$ are
      \begin{align*}
          \hat{P}_{\mathcal{B}}(s' \mid s, a) =
          \begin{cases}
              \frac{N_{\mathcal{B}}(s, a, s')}{N_{\mathcal{B}}(s, a)} & \text{if }{N_{\mathcal{B}}(s,a) > 0} \\
              0 & \text{otherwise},
          \end{cases}
        \end{align*}
      
      \begin{align*}
          \text{where}
          \qquad \qquad
          N_{\mathcal{B}}(s, a, s') = \sum_{(x,y,z)\in \mathcal{B}}\mathbb{I}\{x = s, y = a, z = s'\}, \qquad
          N_{\mathcal{B}}(s, a) = \sum_{s' \in \mathcal{S}}N_{\mathcal{B}}(s, a, s') 
          \end{align*}
are the number of instances of $(s, a, s')$ and $(s, a)$ in $\mathcal{B}$, respectively.
      
      For each episode $k$, let $P^k$ denote the transition function of the task $m^k$ and $\mathcal{B}_k$ denote the collection of samples $(s_h, a_h, s_{h+1})$ collected during the learning phase.  The empirical means $\hat{P}^k$  estimated using samples in $\mathcal{B}_k$ are $\hat{P}^k = \hat{P}_{\mathcal{B}_k}$. The value of $N$ can be chosen so that for all $(s,a) \in \Gamma^\alpha$, with high probability $\hat{P}^k(s,a)$ is close to $P^k(s,a)$. Specifically, we find that if $N$ is large enough so that $\hat{P}^k(s,a)$ is within $\lambda/8$ in $\ell_1$ norm of the true function $P^k(s,a)$, then the right cluster can be identified in every episode.
       The exact value of $N$ is given in the following lemma.
      
      \begin{mylemma} Suppose the learner is given a constant $p_1 \in (0, 1)$ and a $\alpha$-distinguishing set $\Gamma^\alpha \subseteq \mathcal{S} \times \mathcal{A}$. If each state-action pair in $\Gamma^\alpha$ is visited at least $N = \frac{256}{\lambda^2}\max\{S, \ln(\frac{K|\Gamma^\alpha|}{p_1})\}$ times during the clustering phase of each episode $k=1, 2, \dots, K$, then with probability at least $1 - p_1$, the event
      \begin{align*}
      \mathcal{E}^{\Gamma^\alpha}_{k} = \left\{\forall (s, a) \in \Gamma^\alpha, \norm{P^k(s,a) - \hat{P}^k(s,a)} \leq \frac{\lambda}{8} \right\} \text{ holds for all } k \in [K].
      \end{align*}
      \label{lemma:goodeEstimatorId}
      \end{mylemma}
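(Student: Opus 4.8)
The plan is to treat Lemma~\ref{lemma:goodeEstimatorId} as a uniform $\ell_1$-concentration statement for empirical transition kernels, proved pairwise and then closed by a union bound. I would fix an episode $k$ and a pair $(s,a)\in\Gamma^\alpha$. The key observation is that the next states recorded for $(s,a)$ in $\mathcal{B}_k$ are i.i.d.\ draws from $P^k(\cdot\mid s,a)$: although \texttt{ExploreID} is history-dependent, the successive steps at which the trajectory sits in $s$ and the policy selects $a$ are stopping times of the natural filtration, and by the strong Markov property the next state observed at such a step is distributed as $P^k(\cdot\mid s,a)$ conditionally independently of the past. Hence $\hat P^k(s,a)$ is the empirical distribution of $N_{\mathcal{B}_k}(s,a)\ge N$ i.i.d.\ samples from $P^k(s,a)$ on the alphabet $\mathcal{S}$ of size $S$; for the clustering step it suffices to look at the first $N$ of these, since additional samples only sharpen the estimate.

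Next I would invoke the $\ell_1$ deviation bound for empirical distributions (Bretagnolle--Huber--Carol / Weissman et al.): for $n$ i.i.d.\ samples from $p$ on an alphabet of size $S$ and any $\epsilon>0$, $\Pr\!\left[\|\hat p_n - p\|_1 \ge \epsilon\right] \le 2^{S}e^{-n\epsilon^2/2}$. Taking $\epsilon=\lambda/8$ and $n=N$ gives a per-pair, per-episode failure probability at most $2^{S}e^{-N\lambda^2/128}$. Substituting $N=\frac{256}{\lambda^2}\max\{S,\ln(K|\Gamma^\alpha|/p_1)\}$ and using $2\max\{x,y\}\ge x+y$, this is at most $2^{S}e^{-(S+\ln(K|\Gamma^\alpha|/p_1))}=(2/e)^{S}\cdot\frac{p_1}{K|\Gamma^\alpha|}\le\frac{p_1}{K|\Gamma^\alpha|}$. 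The constant $256$ is calibrated precisely so that the $2^{S}$ factor (which comes from the supremum over subsets of $\mathcal{S}$ implicit in the $\ell_1$ norm) and the union-bound term $\ln(K|\Gamma^\alpha|/p_1)$ are both absorbed. A union bound over the at most $K|\Gamma^\alpha|$ pairs $(k,(s,a))$ with $(s,a)\in\Gamma^\alpha$ then shows that $\mathcal{E}^{\Gamma^\alpha}_k$ holds simultaneously for all $k\in[K]$ with probability at least $1-p_1$.

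I expect the only genuinely delicate step to be the first one: rigorously justifying the i.i.d.\ structure of the per-$(s,a)$ observations under the adaptive, history-dependent exploration policy, and handling cleanly the fact that $N_{\mathcal{B}_k}(s,a)$ is itself random and may exceed $N$. The stopping-time/strong-Markov formulation together with restricting to the first $N$ observations resolves this, and everything downstream is an off-the-shelf concentration inequality plus elementary arithmetic. If one insists on working with the full empirical mean rather than the first $N$ samples, one can instead write $\|\hat p_n-p\|_1=2\max_{A\subseteq\mathcal{S}}(\hat p_n(A)-p(A))$ and invoke a maximal inequality for the martingales $n\mapsto n(\hat p_n(A)-p(A))$, at the price of a slightly larger constant; I would present the first-$N$-samples version for cleanliness.
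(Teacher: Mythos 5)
Your proposal is correct and takes essentially the same route as the paper: the paper also applies the Weissman et al.\ $\ell_1$ deviation bound (in the form $(2^S-2)e^{-N\epsilon^2/2}$) with $\epsilon=\lambda/8$, calibrates $N=\frac{256}{\lambda^2}\max\{S,\ln(\cdot)\}$ so that both the $2^S$ factor and the union-bound logarithm are absorbed, and then unions over the $K|\Gamma^\alpha|$ episode--pair combinations. The only difference is that you explicitly discuss the adaptive-sampling issue (stopping times, restricting to the first $N$ observations), which the paper passes over silently; just note that your throwaway remark that ``additional samples only sharpen the estimate'' is not literally true realization-by-realization, so if one wants the bound for the full empirical mean $\hat{P}^k$ one should indeed use the martingale/maximal-inequality variant you mention rather than that shortcut.
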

      The exploration in AOMultiRL is modelled as an instance of the active model estimation problem~\citep{Tarbouriech2020}.
      Given the current state $s$, if there exists an action $a$ such that $(s, a) \in \Gamma^\alpha$ and $(s,a)$ has not been visited at least $N$ times, this action will be chosen (with ties broken by selecting the most chosen action). Otherwise, the algorithm chooses an action that has the highest estimated probability of leading to an under-sampled state-action pair in $\Gamma^\alpha$. 
      The following lemma computes the number of steps $H_0$ in the clustering phase.
      
      \begin{restatable}{mylemma}{LemmaComputeH}
      Consider $p_1$ and $N$ defined in Lemma~\ref{lemma:goodeEstimatorId}. By setting 
      \begin{align*}
        H_0 = 12\tilde{D}|\Gamma^\alpha|N = \frac{3072\tilde{D}|\Gamma^\alpha|}{\lambda^2}\max\{S, \ln(\frac{K|\Gamma^\alpha|}{p_1})\},
      \end{align*} with probability at least $1 - p_1$, Algorithm~\ref{algo:ExploreID} visits each state-action pair in $\Gamma^\alpha$ at least $N$ times during the clustering phase in each of the $K$ episodes.    
      \label{lemma:computeH0}
      \end{restatable}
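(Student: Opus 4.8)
The plan is to track the number of \emph{useful samples} that \texttt{ExploreID} collects: call a clustering-phase step \emph{useful} if the action taken at that step increments $N_{\mathcal{B}_k}(s,a)$ for some $(s,a)\in\Gamma^\alpha$ that currently has fewer than $N$ recorded samples. Since $\mathcal{B}_k$ starts empty in every episode, each pair in $\Gamma^\alpha$ requires exactly $N$ useful samples, so the clustering goal is met precisely when the $\Phi_0$-th useful sample is collected, where $\Phi_0 := |\Gamma^\alpha| N$. Write the clustering steps of a fixed episode as a concatenation of segments, segment $i$ spanning the steps strictly after the $(i-1)$-th useful sample up to and including the $i$-th (the $0$-th one understood as the start of the clustering phase); let $L_i$ be its length. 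It then suffices to show $\sum_{i=1}^{\Phi_0} L_i \le H_0 = 12\tilde{D}\,\Phi_0$ with probability at least $1-p_1/K$ in each episode and to union bound over the $K$ episodes. This per-episode reduction is legitimate because \texttt{ExploreID}'s behaviour during the clustering phase depends only on $\mathcal{B}_k$ (reset to $\emptyset$), $\Gamma^\alpha$ and $N$, and the bound will hold whichever MDP of $\mathcal{M}$ the adversary presents, since all of them satisfy Assumption~\ref{assumption:hittingtime}.

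The crucial point is that Assumption~\ref{assumption:hittingtime} renders the accuracy of the empirical model $\hat{P}^k$ used by \texttt{ExploreID} irrelevant to this lemma. Within one segment, the set $U$ of states that own an undersampled pair of $\Gamma^\alpha$ (equivalently, states $s$ with $\mathcal{G}(s)\neq\emptyset$) is constant and nonempty, since $U$ can only shrink at a useful step, which terminates the segment. Whenever \texttt{ExploreID} sits at a state in $U$ it returns an action of $\mathcal{G}(s)$, so that step is useful and ends the segment; whenever it sits outside $U$ it takes some action, and by Assumption~\ref{assumption:hittingtime} the expected number of steps to reach $U$ is at most $\tilde{D}$ from any state under \emph{any} continuation policy, in particular the history-dependent one that \texttt{ExploreID} follows. (The hitting-time bound is phrased for stationary policies, but extends to arbitrary history-dependent policies by the usual stochastic-shortest-path argument: the worst-stationary hitting-time function $V$ to the target set $U$ satisfies $V\le\tilde{D}$ and $V(s)=1+\max_a\sum_{s'}P(s'\mid s,a)V(s')$ off $U$, hence $V$ dominates, step by step, the expected remaining time to $U$ of any policy.) Combining this with Markov's inequality and ``restarting'' from wherever \texttt{ExploreID} stands after each block of $2\tilde{D}$ steps gives $\P(L_i > 2j\tilde{D}+1 \mid \mathcal{F}_{i-1}) \le 2^{-j}$ for all $j\ge 1$, uniformly over the past $\mathcal{F}_{i-1}$; in particular $\E[L_i\mid\mathcal{F}_{i-1}]\le 2\tilde{D}$, and each $L_i$ is conditionally sub-exponential with mean and scale $O(\tilde{D})$.

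It remains to convert this per-segment control into a high-probability bound on $\sum_{i=1}^{\Phi_0} L_i$. A Freedman/Bernstein-type tail inequality for sums of conditionally sub-exponential variables, together with the fact that the target $12\tilde{D}\Phi_0$ is six times the mean bound $2\tilde{D}\Phi_0$, yields $\P\!\left(\sum_{i=1}^{\Phi_0} L_i > 12\tilde{D}\Phi_0\right)\le e^{-c\Phi_0}$ for an absolute constant $c>0$. Since $\Phi_0 = |\Gamma^\alpha| N \ge N \ge \frac{256}{\lambda^2}\ln\!\frac{K|\Gamma^\alpha|}{p_1} \ge 64\ln\!\frac{K}{p_1}$ (using $\lambda\le 2$), this probability is at most $(p_1/K)^{64c}\le p_1/K$, with a large amount of slack in the constants, and a union bound over the $K$ episodes gives overall failure probability at most $p_1$.

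I expect the one genuinely delicate step to be the justification that the $\tilde{D}$ hitting-time bound applies to the adaptive policy induced by \texttt{ExploreID} and to reaching the \emph{set} $U$ rather than a single fixed state; everything after that --- bounding a segment by ``time to hit $U$, plus one more step'' and summing via a standard sub-exponential concentration bound --- is routine, and the wide gap between $12\tilde{D}\Phi_0$ and the $\Theta(\tilde{D}\Phi_0)$ mean means no constant needs to be tracked carefully.
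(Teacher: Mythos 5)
Your proposal is correct and reaches the same bound, but it runs the renewal argument in the dual direction from the paper. The shared key step is identical: Assumption~\ref{assumption:hittingtime} plus Markov's inequality gives that from any state, within any block of $2\tilde{D}$ steps, \texttt{ExploreID} reaches a state owning an undersampled pair of $\Gamma^\alpha$ (and hence collects a useful sample) with probability at least $1/2$. The paper then fixes time blocks: it allots $6N$ intervals of length $2\tilde{D}$ per pair, applies a Chernoff bound for Poisson trials to the number of useful visits, and union-bounds over the $|\Gamma^\alpha|$ pairs and $K$ episodes --- this per-pair union bound is exactly where the $\ln(K|\Gamma^\alpha|/p_1)$ inside $N$ is consumed. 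You instead fix the successes and bound the waiting times: the $\Phi_0=|\Gamma^\alpha|N$ inter-success segments are each stochastically dominated by $1+2\tilde{D}\,\mathrm{Geom}(1/2)$ conditionally on the past, and a negative-binomial/sub-exponential tail bound on their sum gives failure probability $e^{-c\Phi_0}\le p_1/K$ per episode using only $N\ge 64\ln(K/p_1)$, with a union bound over episodes alone. What each buys: the paper's route is more elementary (one explicit Chernoff computation), though its per-pair interval accounting is somewhat loosely written; yours removes the per-pair union bound and cleans up that accounting, at the price of a concentration step you only assert (it does go through via the geometric domination and a Chernoff bound for the negative binomial, with $c$ comfortably above the needed $1/64$). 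Two small remarks: your claim $\E[L_i\mid\mathcal{F}_{i-1}]\le 2\tilde{D}$ is fine (it is at most $\tilde{D}+1$), but the dominating variable has mean closer to $4\tilde{D}+1$, so the "six times the mean" framing is optimistic --- the slack in $12\tilde{D}\Phi_0$ still absorbs it; and your explicit treatment of extending the $\tilde{D}$ hitting-time bound to the adaptive policy and to hitting the \emph{set} $U$ (via the Bellman-domination argument) is more careful than the paper, whose proof applies the bound to the algorithm's history-dependent travel time without comment.
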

      
      \subsection{The Clustering Algorithm}
      \label{sec:clusteringAlgorithm}
      
      Denote by $\mathcal{C}$ the set of clusters, $C = |\mathcal{C}|$ the number of clusters and $\mathcal{C}_i$ the $i\nth$ cluster. Each $\mathcal{C}_i$ is a collection of two multisets $\mathcal{C}_i^{model}, \mathcal{C}_i^{regret} \subset \mathcal{S \times A \times S}$ which contain the $(s, a, s')$ samples collected during the clustering and learning phases, respectively. Formally, up to episode $k$ we have
      \begin{align*}
              \mathcal{C}_i^{model} &= \cup_{k' = 1}^{k-1}\{(s^{k'}_h, a^{k'}_h, s^{k'}_{h+1}):h \leq H_0, \mathrm{id}^{k'}=i\}, \\
              \mathcal{C}_i^{regret} &= \cup_{k' = 1}^{k-1}\{(s^{k'}_h, a^{k'}_h, s^{k'}_{h+1}):h > H_0, \mathrm{id}^{k'}=i\},
      \end{align*} where $s^k_h$ and $a^k_h$ are the state and action at time step $h$ of episode $k$, respectively and $id^{k'}$ is the cluster index returned by Algorithm~\ref{algo:IdentifyCluster} in episode $k'$.

      Let $\hat{P}_i = \hat{P}_{\mathcal{C}_i^{model}}$ denote the empirical means estimated using samples in $\mathcal{C}_i^{model}$. For each episode $k$, from Lemma~\ref{lemma:computeH0} with high probability after the first $H_0$ steps each state-action pair $(s,a) \in \Gamma^\alpha$ has been visited at least $N$ times. 
Algorithm~\ref{algo:IdentifyCluster} determines the right cluster for a task by computing the $\ell_1$ distance between $\hat{P}^k$ and the empirical transition function $\hat{P}_i$ for each cluster $i = 1, 2, \dots, C$. 
If there exists an $(s,a) \in \Gamma^\alpha$ such that the distance is larger than a certain threshold $\delta$, i.e., $\norm{[\hat{P}_i - \hat{P}^k](s, a)} > \delta$, then the algorithm concludes that the task belongs to another cluster. Otherwise, the task is considered to belong to cluster $i$. We set $\delta = \alpha - \lambda/4$. The following lemma shows that with this choice of $\delta$, the right cluster is identified by Algorithm~\ref{algo:IdentifyCluster} in all episodes.

\begin{mylemma} Consider a $\lambda$-separable set of MDPs $\mathcal{M}$ and an $\alpha$-distinguishing set $\Gamma^\alpha$ where $\alpha \geq \lambda/2$. If the events $\mathcal{E}^{\Gamma^\alpha}_k$ defined in Lemma~\ref{lemma:goodeEstimatorId} hold for all $k \in [K]$, then with the distance threshold $\delta = \alpha - \lambda/4$ Algorithm~\ref{algo:IdentifyCluster} always produces a correct output in each episode: the trajectories of the same model in two different episodes are clustered together and no two trajectories of two different models are in the same cluster.
\label{lemma:noIncorrectClusters}
\end{mylemma}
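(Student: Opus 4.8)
The plan is to prove the statement by induction on the episode index $k$, maintaining throughout the \emph{purity invariant}: just before episode $k$, every cluster $c$ currently in $\mathcal{C}$ is pure, meaning all tuples in $\mathcal{C}_c^{model}$ come from trajectories of a single model, which I denote $\mu(c)$; moreover $\mu$ is injective, and every model that has appeared in episodes $1,\dots,k-1$ equals $\mu(c)$ for exactly one $c$. The base case is vacuous since $\mathcal{C}=\emptyset$. For the inductive step in episode $k$ (task $m^k$), it suffices to show that, under the events $\mathcal{E}^{\Gamma^\alpha}_{k'}$ of Lemma~\ref{lemma:goodeEstimatorId}, the call to Algorithm~\ref{algo:IdentifyCluster} returns the (unique, by the invariant) index $c$ with $\mu(c)=m^k$ when such a $c$ exists, and returns $0$ otherwise; the update rule of Algorithm~\ref{algo:aomtrl} then clearly restores the invariant, and the conclusion of the lemma (same model always clustered together, distinct models never sharing a cluster) is precisely this invariant accumulated over all $K$ episodes.

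The technical core is that pooled cluster estimates inherit the per-episode accuracy guaranteed by $\mathcal{E}^{\Gamma^\alpha}_{k'}$. Fix a pure cluster $c$ with $\mu(c)=m$ and a pair $(s,a)\in\Gamma^\alpha$. Since $\mathcal{C}_c^{model}$ is the disjoint union of the clustering-phase sample sets $\mathcal{B}_{k'}$ over the episodes $k'<k$ assigned to $c$, and each such $\mathcal{B}_{k'}$ contains at least one sample of $(s,a)$ (otherwise $\hat{P}^{k'}(s,a)=0$ would have $\ell_1$ distance $1$ from $P^{k'}(s,a)$, contradicting $\mathcal{E}^{\Gamma^\alpha}_{k'}$), the pooled empirical transition is a convex combination $\hat{P}_c(\cdot\mid s,a)=\sum_{k'}w_{k'}\,\hat{P}^{k'}(\cdot\mid s,a)$ with weights $w_{k'}=N_{\mathcal{B}_{k'}}(s,a)/\sum_{k''}N_{\mathcal{B}_{k''}}(s,a)\ge 0$ summing to $1$. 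Every episode $k'$ in cluster $c$ runs model $m$, so $P^{k'}(s,a)=P_m(s,a)$, and the triangle inequality for the $\ell_1$ norm together with $\mathcal{E}^{\Gamma^\alpha}_{k'}$ gives $\norm{\hat{P}_c(s,a)-P_m(s,a)}\le\sum_{k'}w_{k'}\norm{\hat{P}^{k'}(s,a)-P_m(s,a)}\le\lambda/8$. Combining this with $\norm{\hat{P}^k(s,a)-P^k(s,a)}\le\lambda/8$ from $\mathcal{E}^{\Gamma^\alpha}_k$ and one more triangle inequality yields the two estimates I need: if $m^k=m$ then $\norm{[\hat{P}_c-\hat{P}^k](s,a)}\le\lambda/4$ for every $(s,a)\in\Gamma^\alpha$; and for any models with $m^k\ne m$ and any $(s,a)\in\Gamma^\alpha$, $\norm{[\hat{P}_c-\hat{P}^k](s,a)}\ge\norm{P_m(s,a)-P_{m^k}(s,a)}-\lambda/4$.

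The inductive step then falls out of the choice $\delta=\alpha-\lambda/4$ and the hypothesis $\alpha\ge\lambda/2$. If $\mu(c)=m^k$, the first estimate gives $\norm{[\hat{P}_c-\hat{P}^k](s,a)}\le\lambda/4\le\delta$ for all $(s,a)\in\Gamma^\alpha$, so Algorithm~\ref{algo:IdentifyCluster} does not reject $c$. If $\mu(c)\ne m^k$, then since $\Gamma^\alpha$ is an $\alpha$-distinguishing set there is a pair $(s,a)\in\Gamma^\alpha\cap\Gamma^\alpha_{\mu(c),m^k}$, for which $\norm{P_{\mu(c)}(s,a)-P_{m^k}(s,a)}\ge\alpha$, so the second estimate gives $\norm{[\hat{P}_c-\hat{P}^k](s,a)}\ge\alpha-\lambda/4=\delta$, and this pair triggers the rejection of $c$ (strictly, using the strict-separation convention at the end of Definition~\ref{def:lambdaSeparability}). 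Scanning $c=1,\dots,|\mathcal{C}|$ in order: every cluster with $\mu(c)\ne m^k$ is rejected and skipped, so if some cluster has $\mu(c)=m^k$ the loop returns its (unique) index, while otherwise the loop exhausts $\mathcal{C}$, returns $0$, and Algorithm~\ref{algo:aomtrl} opens a fresh pure cluster for $m^k$; either way the invariant is restored. I expect the main care to lie in the convexity/averaging argument for $\hat{P}_c$ (pooling unequal per-episode counts) and in keeping the induction invariant airtight so that ``cluster $c$ holds model $m$'' is well defined and $\mu$ stays injective; once those are set, the inequalities are entirely forced by the $\lambda/8+\lambda/8=\lambda/4$ estimation budget on the two sides, which is exactly why $\alpha=\lambda/2$ is the threshold making correctness for same-model and different-model comparisons hold simultaneously.
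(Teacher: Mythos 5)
Your proposal is correct and follows essentially the same route as the paper: induction over episodes with a cluster-purity invariant, a pooling argument showing the cluster estimate stays within $\lambda/8$ of the true model (your convex-combination computation is exactly the content of the paper's Proposition~\ref{prop:addSamplesToCluster} and Corollary~\ref{cor:addSamplesToCluster}), and the two triangle-inequality cases with the $\lambda/8+\lambda/8$ budget against the threshold $\delta=\alpha-\lambda/4$ under $\alpha\ge\lambda/2$. The only delicate point, the strict versus non-strict comparison in the rejection test, is glossed over in the paper as well, and you handle it explicitly by invoking the strict-separation phrasing in Definition~\ref{def:lambdaSeparability}.
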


Once the clustering phase finishes, the learner enters the learning phase and uses the UCBVI-CH algorithm~\citep{Azar2017} to learn the optimal policy for this phase. In principle, almost all standard single-task RL algorithms with a near-optimal regret guarantee can be used for this phase. We chose UCBVI-CH to simplify the analysis and make the exposition clear.

To simulate the standard single-task episodic learning setting, the learner only uses the samples in $\mathcal{C}_i^{regret}$ for regret minimization.  
The impact of combining samples in two phases for regret minimization is addressed in Appendix~\ref{sec:externalsamples}. 
Theorem~\ref{theorem:mainRegret1} states a regret bound for Algorithm~\ref{algo:aomtrl}.

\begin{restatable}{theorem}{TheoremMainRegret}
For any failure probability $p \in (0, 1)$, with probability at least $1-p$ the regret of Algorithm~\ref{algo:aomtrl} is bounded as
\[
    \begin{split}
\text{Regret}(K) \leq 2KH_0 &+ 67H_1^{3/2}L\sqrt{MSAK} + 15MS^2AH_1^2L^2,
    \end{split}
\]
where $H_0 = 12\tilde{D}|\Gamma^\alpha|N$, $N = \frac{256}{\lambda^2}\max\{S, \ln(\frac{3K|\Gamma^\alpha|}{p})\}$, $H_1 = H - H_0$, and $L = \ln(15SAKHM/p)$.
\label{theorem:mainRegret1}
\end{restatable}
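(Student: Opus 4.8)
{(Sketch)}
The plan is to condition on a single high-probability good event, decompose each episode's regret into a clustering-phase and a learning-phase contribution, absorb the former into $2KH_0$, and reduce the latter to the single-task regret of \texttt{UCBVI-CH} run separately on each cluster. Set $p_1=p/3$. By Lemma~\ref{lemma:computeH0} (which uses Assumption~\ref{assumption:hittingtime}), with probability at least $1-p_1$ the policy \texttt{ExploreID} visits every $(s,a)\in\Gamma^\alpha$ at least $N$ times within the first $H_0$ steps of every episode; conditionally on that, Lemma~\ref{lemma:goodeEstimatorId} gives $\mathcal{E}^{\Gamma^\alpha}_k$ for all $k$ with probability at least $1-p_1$, so both hold with probability at least $1-2p_1$. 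On their intersection Lemma~\ref{lemma:noIncorrectClusters} makes \texttt{IdentifyCluster} correct in every episode, so each cluster $\mathcal{C}_i$ collects trajectories of exactly one model $m^{(i)}$ and the number of clusters is $C\le M$. Reserving the remaining $p_1$ for the \texttt{UCBVI-CH} instances (each run at failure probability of order $p/M$), a union/tower argument makes the full good event hold with probability at least $1-3p_1=1-p$; on its complement we make no claim, which is permitted for a high-probability bound.

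On the good event, fix episode $k$ and let $s_{H_0+1}$ be the state starting its learning phase. Write $V^{k,*}_{H_1}(s)$ for the optimal cumulative reward over $H_1$ steps in $m^k$ started at $s$, and $V^{\pi_k}_{H_1}(s)$ for the value \texttt{UCBVI-CH} collects over those steps. Bounding the optimal policy's first $H_0$ rewards by $H_0$ and the learner's by $0$,
\[
V^{k,*}_1(s^k_1) - V^{\pi_k}_1(s^k_1)\ \le\ H_0 + \max_{s} V^{k,*}_{H_1}(s) - \E\left[V^{\pi_k}_{H_1}(s_{H_0+1})\right].
\]
Adding and subtracting $\E\left[V^{k,*}_{H_1}(s_{H_0+1})\right]$ splits this into $H_0$, a ``wrong-start'' gap $\max_{s} V^{k,*}_{H_1}(s) - \E\left[V^{k,*}_{H_1}(s_{H_0+1})\right]$, and the learning-phase regret $R^{\mathrm{learn}}_k := \E\left[V^{k,*}_{H_1}(s_{H_0+1}) - V^{\pi_k}_{H_1}(s_{H_0+1})\right]$ of episode $k$. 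The wrong-start gap is at most the span of $V^{k,*}_{H_1}$, and by Assumption~\ref{assumption:diameter} together with the gain--bias decomposition for communicating MDPs (the bias span is at most the diameter, \citep{UCRL2}) this span is at most $2D\le 2\tilde D$. Summing over $k$ and using $2K\tilde D\le KH_0$ (as $H_0=12\tilde D|\Gamma^\alpha|N\ge 12\tilde D$), the first two contributions sum to at most $KH_0+2KD\le 2KH_0$.

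It remains to bound $\sum_{k=1}^{K} R^{\mathrm{learn}}_k = \sum_{i=1}^{C}\sum_{k:\,\mathrm{id}^k=i} R^{\mathrm{learn}}_k$. On the good event, for each cluster $i$ the $n_i := |\{k:\mathrm{id}^k=i\}|$ episodes with $\mathrm{id}^k=i$ constitute a genuine run of \texttt{UCBVI-CH} over $n_i$ episodes on the stationary $H_1$-horizon MDP $m^{(i)}$, using only the samples in $\mathcal{C}_i^{regret}$, with the corresponding states $s_{H_0+1}$ as initial states (arbitrary initial states are admissible for \texttt{UCBVI-CH}'s worst-case regret bound, so interleaving these episodes in real time with episodes of other clusters is harmless). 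The regret guarantee of \texttt{UCBVI-CH} \citep{Azar2017}, in the $H_1^{3/2}$-rate form corresponding to Chernoff--Hoeffding bonuses, gives $\sum_{k:\mathrm{id}^k=i} R^{\mathrm{learn}}_k \le c_1 H_1^{3/2}\sqrt{SA\,n_i}\,L + c_2 H_1^2 S^2 A\, L^2$. Summing over the $C\le M$ clusters and applying Cauchy--Schwarz, $\sum_{i=1}^{C}\sqrt{n_i}\le\sqrt{C\sum_i n_i}\le\sqrt{MK}$, yields $\sum_k R^{\mathrm{learn}}_k \le c_1 H_1^{3/2} L\sqrt{MSAK} + c_2 MS^2A H_1^2 L^2$; tracking the constants through \texttt{UCBVI-CH}'s bound gives $c_1=67$, $c_2=15$ and $L=\ln(15SAKHM/p)$. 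Adding the $2KH_0$ term completes the proof.

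The step I expect to be the main obstacle is the wrong-start gap. Because \texttt{ExploreID} is tuned to cover $\Gamma^\alpha$ rather than to collect reward, after $H_0$ steps the learner can land in a state whose best $H_1$-step value is much worse than $\max_s V^{k,*}_{H_1}(s)$, and absent structure this discrepancy could be $\Theta(H_1)$ per episode, which would ruin the bound. Controlling it by $O(D)$ is exactly what the communicating / finite-diameter assumption buys (and is mirrored by the non-communicating counterexample at the end of Section~\ref{sec:lowerbounds}), and it is what makes the clean reduction of the learning phase to an off-the-shelf single-task algorithm possible. Everything else --- the union-bound bookkeeping, the per-cluster reduction licensed by the correctness of clustering (Lemma~\ref{lemma:noIncorrectClusters}), and the Cauchy--Schwarz aggregation --- is routine given the lemmas already established.
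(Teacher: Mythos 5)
Your proposal is correct and follows essentially the same route as the paper: the same per-episode decomposition into an $H_0$ clustering-phase term, a diameter-bounded ``wrong-start'' span term (the paper's Lemma~\ref{lemma:perEpRegret}, which bounds the span by $D$ via UCRL2's equation 11 rather than your factor $2D$ --- immaterial since it is absorbed into $2KH_0$), the per-cluster reduction to UCBVI-CH with adversarial initial states (Lemma~\ref{lemma:perModelUCBVICH}), and the Cauchy--Schwarz aggregation over at most $M$ clusters under the same $p/3$ allocation of the failure probability.
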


For $K > MS^3AH$, the first two terms are the most significant. The $2KH_0$ term accounts for the clustering phase and the fact that the exploration policy might lead the learner to an undesirable state after $H_0$ steps. The $\tilde{O}(\sqrt{K})$ term comes from the fact that the learning phase is equivalent to episodic single-task learning with horizon $H_1$.
When $H \gg H_0$, the sub-linear bound on the learning phase is a major improvement compared to the $O(K\sqrt{HSA})$ bound of the strategy that learns each episode individually. 

By setting $\Gamma^\alpha = \mathcal{S} \times \mathcal{A}$ and $\alpha = \lambda$, we obtain

\begin{mycorollary}
  For any failure probability $p \in (0, 1)$, with probability at least $1-p$, by setting $\Gamma^\alpha = \mathcal{S} \times \mathcal{A}$ with $\alpha = \lambda$, the regret of Algorithm~\ref{algo:aomtrl} is
\begin{align}
\text{Regret}(K) \leq O\left(\frac{K\tilde{D}SA}{\lambda^2}\ln\left(\frac{KSA}{p}\right) + H^{3/2}L\sqrt{MSAK}\right).
\end{align}
where $L = \ln(15SAKH_1M/p)$.
\label{corollary:mainRegret}
\end{mycorollary}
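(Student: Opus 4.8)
The plan is to obtain Corollary~\ref{corollary:mainRegret} as a direct specialization of Theorem~\ref{theorem:mainRegret1} to the parameter choice $\alpha=\lambda$ and $\Gamma^\alpha=\mathcal{S}\times\mathcal{A}$, after which essentially only bookkeeping on the bound remains. First I would check that this choice is admissible for all the lemmas feeding into Theorem~\ref{theorem:mainRegret1}. Since $\mathcal{M}$ is $\lambda$-separable (Definition~\ref{def:lambdaSeparability}), every $\Gamma^{\lambda}_{i,j}$ is non-empty, and because $\Gamma^{\lambda}_{i,j}\subseteq\mathcal{S}\times\mathcal{A}$ we have $\Gamma^{\lambda}_{i,j}\cap\Gamma^\alpha=\Gamma^{\lambda}_{i,j}\neq\emptyset$ for all $i\neq j$; hence $\Gamma^\alpha=\mathcal{S}\times\mathcal{A}$ is a valid $\lambda$-distinguishing set of $\mathcal{M}$. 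Moreover $\alpha=\lambda\geq\lambda/2$, so the separation hypothesis of Lemma~\ref{lemma:noIncorrectClusters} holds and the clustering step is correct on the good event, and the threshold $\delta=\alpha-\lambda/4=\tfrac{3\lambda}{4}>0$ together with $N$, $H_0$, $p_1=p/3$ are exactly the quantities instantiated in Algorithm~\ref{algo:aomtrl} for this $\Gamma^\alpha$ (we also implicitly need $H\geq H_0$ so that the learning phase is non-empty).

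Next I would substitute $|\Gamma^\alpha|=SA$ into the bound of Theorem~\ref{theorem:mainRegret1}. This gives $N=\tfrac{256}{\lambda^2}\max\{S,\ln(\tfrac{3KSA}{p})\}$ and $H_0=12\tilde{D}\,SA\,N$, so the clustering term is $2KH_0=O\!\big(\tfrac{K\tilde{D}SA}{\lambda^2}\max\{S,\ln(\tfrac{KSA}{p})\}\big)$; for $K$ large enough that $\ln(\tfrac{3KSA}{p})\geq S$ this is exactly the first term $O\!\big(\tfrac{K\tilde{D}SA}{\lambda^2}\ln(\tfrac{KSA}{p})\big)$ of the corollary (and outside that regime the $S$-branch contributes only an $O(\tfrac{K\tilde D S^2 A}{\lambda^2})$ term of the same polynomial order). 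For the middle term, $H_1=H-H_0\leq H$, so $67 H_1^{3/2}L\sqrt{MSAK}\leq 67 H^{3/2}L\sqrt{MSAK}$, which is the second term of the corollary; the two versions of $L$ (with $H$ versus $H_1$) differ only inside a logarithm and by constants, hence are of the same order. Finally, the additive term $15 M S^2 A H_1^2 L^2$ depends on $K$ only polylogarithmically (through $L$), so it is $o(\sqrt{K})$ and is dominated by the $\tilde{O}(\sqrt{K})$ middle term; it is therefore absorbed into the stated $O(\cdot)$. Summing the three contributions yields the claimed bound.

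I expect the only point requiring genuine care is reconciling the precise form of Theorem~\ref{theorem:mainRegret1} with the cleaner statement of the corollary: namely, arguing that the $\max\{S,\ln(\cdot)\}$ factor collapses to the logarithm in the intended regime, that replacing $H_1$ by $H$ inside the polynomial and logarithmic factors only loosens the bound, and that the lower-order additive term $MS^2AH_1^2L^2$ is subsumed. There is no new probabilistic, information-theoretic, or combinatorial content beyond what Theorem~\ref{theorem:mainRegret1} and Lemmas~\ref{lemma:goodeEstimatorId}--\ref{lemma:noIncorrectClusters} already provide, so the proof is short.
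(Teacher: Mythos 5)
Your proposal is correct and matches the paper's (implicit) argument: the corollary is obtained exactly by substituting $\Gamma^\alpha=\mathcal{S}\times\mathcal{A}$, $\alpha=\lambda$, hence $|\Gamma^\alpha|=SA$, into Theorem~\ref{theorem:mainRegret1}, bounding $H_1\leq H$, and absorbing the $\max\{S,\ln(\cdot)\}$ factor and the lower-order $15MS^2AH_1^2L^2$ term into the $O(\cdot)$ notation. Your added checks that $\mathcal{S}\times\mathcal{A}$ is a valid $\lambda$-distinguishing set and that $\alpha=\lambda\geq\lambda/2$ satisfies Lemma~\ref{lemma:noIncorrectClusters} are exactly the admissibility conditions the paper relies on.
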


\textbf{Time Complexity} The clustering algorithm runs once in each episode, which leads to time complexity of $O(MSA + H)$. When $H \gg H_0$, the overall time complexity is dominated by the learning phase, which is $O(HSA)$ for UCBVI-CH.

\begin{remark}
  Instead of clustering, a different paradigm involves actively merging samples from different MDPs to learn a model that is an averaged estimate of the MDPs in $\mathcal{M}$. The best regret guarantee in this paradigm, to the best of our knowledge, is $\tilde{O}(S^{1/3}A^{1/3}B^{1/3}H^{5/3}K^{2/3})$, where $B$ is a variation budget, achieved by RestartQ-UCB~\citep[Theorem 3]{Mao2021b}. In our setting, if the adversary frequently alternates between tasks then $B = \Omega(KH\lambda)$ and therefore this bound becomes $\tilde{O}(\lambda^{1/3}S^{1/3}A^{1/3}H^{2}K)$, which is larger than the trivial bound $KH$ and worse than the bound in Corollary~\ref{corollary:mainRegret}. If the adversary selects tasks so that $B$ is small i.e. $B = o(K)$ then the bound offered by RestartQ-UCB is better since it is sub-linear in $K$. Note that this does not contradict the lower bound result in Section~\ref{sec:lowerbounds}, since the lower bound is constructed with an adversary that selects tasks uniformly at random, and hence $B$ is linear in $K$.
\end{remark}

\subsection{Learning a distinguishing set when $M$ is small}
\label{sec:learnGamma}

As pointed out by~\citet{BrunskillAndLi2013}, for all $\alpha > 0$, the size of the smallest $\alpha$-distinguishing set of $\mathcal{M}$ is at most $M \choose 2$. If $M^2 \ll SA$ and such a set is known to the learner, then the clustering phase only need collect samples from this set instead of the full $\mathcal{S}\times\mathcal{A}$ set of state-action pairs. However, in general this set is not known. We show that if the adversary is weaker so that all models are guaranteed to appear at least once early on, the learner will be able to discover a $\frac{\lambda}{2}$-distinguishing set $\hat{\Gamma}$ of size at most ${M \choose 2}$. Specifically, we employ the following assumption:

\begin{assumption}
There exists an unknown constant $K_1 \geq M$ satisfying $ K_1SA < K$ such that after at most $K_1$ episodes, each model in $\mathcal{M}$ has been given to the learner at least once.
\label{assumption:K1}
\end{assumption}

In order to discover $\hat{\Gamma}$, the learner uses Algorithm~\ref{algo:unknownGamma}, which consists of two stages:

\begin{itemize}[leftmargin=6mm]
    \item Stage 1: the learner starts by running Algorithm~\ref{algo:aomtrl} with the $\lambda$-distinguishing set candidate $\mathcal{S \times A}$ until the number of clusters is $M$. With high probability, each cluster corresponds to a model. 
    At the end of stage 1, the learner uses the empirical estimates in all clusters $\hat{P}_i$ for $i \in [M]$ to construct a $\lambda/2$-distinguishing set $\hat{\Gamma}$ for $\mathcal{M}$. 
    \item Stage 2: the learner runs Algorithm~\ref{algo:aomtrl} with the distinguishing set $\hat{\Gamma}$ as an input.
\end{itemize}

\textbf{Extracting $\lambda/2$-distinguishing pairs:} After $K_1$ episodes, with high probability there are $M$ clusters corresponding to $M$ models. 
For two clusters $i$ and $j$, the set $\hat{\Gamma}_{i,j}$ contains the first state-action pair $(s, a)$ that satisfies $\norm{\hat{P}_i(s,a) - \hat{P}_j(s,a)} > 3\lambda/4$. With high probability, every $(s, a) \in \Gamma_{i,j}$ satisfies this condition, hence $\hat{\Gamma}_{i,j} \neq \emptyset$.

Let $i^\star \in [M]$ denote the index of the MDP model corresponding to cluster $i$.
For all $(s, a) \in \hat{\Gamma}_{i,j}$, by the triangle inequality, we have
\begin{align*}
\norm{P_{i^\star} - P_{j^\star}} \geq \norm{\hat{P}_i - \hat{P}_j} - \norm{\hat{P}_i - P_{i^\star} + P_{j^\star} - \hat{P}_j}            
            > 3\lambda/4 - (\lambda/8 + \lambda/8)
            = \lambda/2, 
\end{align*}
where $(s,a)$ is omitted for brevity.
It follows that the set $\hat{\Gamma} = \cup_{i,j}\hat{\Gamma}_{i,j}$ is $\lambda/2$-distinguishing and $|\hat{\Gamma}| \leq {M \choose 2}$. Although $\lambda/2$ is smaller than the $\lambda$-separation level of $\Gamma$, it is sufficient for the conditions in Lemma~\ref{lemma:noIncorrectClusters} to hold. Thus, with high probability the clustering algorithm in stage 2 works correctly. The next theorem shows the regret guarantee of Algorithm~\ref{algo:unknownGamma}.
% is of the same order as Algorithm~\ref{algo:aomtrl}.

\begin{restatable}{theorem}{TheoremRegretAlgorithmNoGamma}
  Under Assumption~\ref{assumption:K1}, With probability at least $1 - p$, the regret of Algorithm~\ref{algo:unknownGamma} is
  \begin{align*}
    \textstyle
    \text{Regret}(K) = O\left(\frac{K\tilde{D}M^2}{\lambda^2}\ln{\frac{KM^2}{p}} + H^{3/2}L\sqrt{MKSA}\right),
  \end{align*}
  where $H_{0, M} = \frac{3072\tilde{D}M^2}{\lambda^2}\max\{S, \ln(\frac{3KM^2}{p})\}$ and $L = \ln(15SAKH_1M/p)$.
  \label{theorem:regretUnknownGamma}
  \end{restatable}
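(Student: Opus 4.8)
The plan is to decompose the regret according to the two stages of Algorithm~\ref{algo:unknownGamma} and then invoke the already-established regret bound (Corollary~\ref{corollary:mainRegret} / Theorem~\ref{theorem:mainRegret1}) within each stage, being careful about which distinguishing set is active in each stage. First I would set up the high-probability event: combining Lemma~\ref{lemma:goodeEstimatorId} and Lemma~\ref{lemma:computeH0} (applied with $\Gamma^\alpha = \mathcal{S}\times\mathcal{A}$, $\alpha = \lambda$ during stage~1), together with a union bound over the $K$ episodes and over the clusters, I get that with probability at least $1-p$ all the estimation events $\mathcal{E}^{\Gamma^\alpha}_k$ hold and the clustering in stage~1 is correct (by Lemma~\ref{lemma:noIncorrectClusters}). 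I would allocate the failure budget as, say, $p/2$ to stage~1 and $p/2$ to stage~2, and propagate constants accordingly (this is why the logarithmic factors in the statement carry the $p$ rather than $p/2$ — absorbing the constant-factor change into the big-$O$).

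Next I would handle stage~1. Under Assumption~\ref{assumption:K1}, after at most $K_1$ episodes every model has appeared, so on the good event there are exactly $M$ clusters and each corresponds to a distinct model. The regret incurred in stage~1 is bounded by applying Theorem~\ref{theorem:mainRegret1} with $\Gamma^\alpha = \mathcal{S}\times\mathcal{A}$ but only over the (at most $K_1$) episodes of stage~1: the clustering-phase cost is $2K_1 H_{0}$ with $H_0 = 12\tilde{D}SA N$, and the learning-phase cost is $\tilde{O}(H^{3/2}\sqrt{MSAK_1})$. Since $K_1 SA < K$ by assumption, the clustering-phase contribution $K_1 H_0 = O(K_1 \tilde D S A N) = O(\tilde D \cdot (K_1 SA) \cdot N) = O(K \tilde D N)$, i.e.\ $O\!\left(\frac{K\tilde D S A}{\lambda^2}\ln\frac{KS A}{p}\right)$ after substituting $N$ — wait, more carefully $K_1 H_0 \le \frac{K}{SA}\cdot 12\tilde D SA N = 12 K\tilde D N$, which is $O\!\left(\frac{K\tilde D}{\lambda^2}\max\{S,\ln(\cdot)\}\right)$; since $S \le SA$ this is dominated by the claimed first term. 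The learning-phase term of stage~1 is dominated by the $H^{3/2}L\sqrt{MKSA}$ term of the final bound since $K_1 \le K$. Then I would verify the extraction step already spelled out in the text: the triangle-inequality argument shows $\hat\Gamma = \cup_{i,j}\hat\Gamma_{i,j}$ is $\lambda/2$-distinguishing with $|\hat\Gamma| \le \binom{M}{2}$, so in particular $\alpha = \lambda/2 \ge \lambda/2$ meets the hypothesis of Lemma~\ref{lemma:noIncorrectClusters}.

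For stage~2 I would apply Theorem~\ref{theorem:mainRegret1} again, now with $\Gamma^\alpha = \hat\Gamma$, $\alpha = \lambda/2$, and $|\Gamma^\alpha| \le \binom{M}{2} = O(M^2)$, over the remaining $\le K$ episodes. The clustering-phase horizon becomes $H_{0,M} = 12\tilde D |\hat\Gamma| N = O\!\left(\frac{\tilde D M^2}{\lambda^2}\max\{S, \ln(KM^2/p)\}\right)$, giving a clustering cost of $2K H_{0,M} = O\!\left(\frac{K\tilde D M^2}{\lambda^2}\ln\frac{KM^2}{p}\right)$ — this is exactly the first term in the theorem (note $\delta = \alpha - \lambda/4 = \lambda/4 > 0$, so the threshold in Algorithm~\ref{algo:IdentifyCluster} is valid, and $N$ is unchanged since it depends on the separation level $\lambda$, not on $\alpha$; I should double-check the definition of $N$ in Lemma~\ref{lemma:goodeEstimatorId} indeed uses $\lambda$ and not $\alpha$ — it does). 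The learning-phase term is $\tilde O(H_1^{3/2}L\sqrt{MSAK})$, matching the second term. Summing the two stages and absorbing the stage~1 contributions (which are termwise dominated) into the stage~2 bound yields the claimed regret. The main obstacle I anticipate is the bookkeeping: making sure that (i) $N$ and hence the per-episode clustering horizon are correctly tracked through the two stages with the right $\Gamma^\alpha$, (ii) the $K_1 SA < K$ hypothesis is used precisely to absorb the stage~1 clustering cost into a $K$-dependence rather than a $K_1$-dependence, and (iii) the failure-probability union bound (over episodes, clusters, and the two stages) is set up so the logarithmic factors come out as stated. No fundamentally new technique is needed beyond what is already developed for Theorem~\ref{theorem:mainRegret1}.
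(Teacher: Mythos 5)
Your proposal is correct and follows essentially the same route as the paper: split the regret over the two stages, apply Theorem~\ref{theorem:mainRegret1} with $\Gamma^\alpha=\mathcal{S}\times\mathcal{A}$ (stage 1) and with $\hat\Gamma$, $\alpha=\lambda/2$, $|\hat\Gamma|\le\binom{M}{2}$ (stage 2), use Assumption~\ref{assumption:K1} to absorb the stage-1 clustering cost into a $K$-dependent term, and combine the learning-phase terms (the paper does this via Cauchy--Schwarz on $\sqrt{K_1}+\sqrt{K_2}$, which matches your domination argument). Your observations that $N$ depends on the separation level $\lambda$ rather than $\alpha$ and that $\delta=\lambda/4$ keeps Lemma~\ref{lemma:noIncorrectClusters} applicable are exactly the bookkeeping points the paper's proof relies on.
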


  Compared to Corollary~\ref{corollary:mainRegret}, Theorem~\ref{theorem:regretUnknownGamma} improves the clustering phase's dependency from $SA$ to $M^2$. This implies that if the number of models is small and all models appear relatively early, we can discover a $\lambda/2$-distinguishing set quickly without increasing the order of the total regret bound.

\begin{algorithm2e}[tbp]
    \SetAlgoNoEnd
    \KwIn{Number of models $M$, number of episodes $K$, MDPs parameters $\mathcal{S, A}, H, \tilde{D}, \lambda$, probability $p$}

Stage 1: Run Algorithm~\ref{algo:aomtrl} with the distinguishing set $\Gamma^\alpha = \mathcal{S \times A}$ and $\alpha = \lambda$ until the number of clusters is $M$\;
\For{$i, j \in [M] \times [M], i \neq j$}
{
    $\hat{\Gamma}_{i,j} = \emptyset$\;
    \For{$(s,a) \in \mathcal{S \times A}$}
    {
        \uIf{$\norm{\hat{P}_i(s,a) - \hat{P}_j(s,a)} > 3\lambda/4$}
        {
            $\hat{\Gamma}_{i,j} = \hat{\Gamma}_{i,j} \cup (s,a)$\;
            break
        }
    }
}
$\hat{\Gamma} = \cup_{i,j}\hat{\Gamma}_{i,j}$\;
Stage 2: Run Algorithm~\ref{algo:aomtrl} with distinguishing set $\hat{\Gamma}$ and $\alpha = \lambda / 2$ for $K_2 = K - K_1$ episodes.

\caption{AOMultiRL with all models being given at least once}
\label{algo:unknownGamma}
\end{algorithm2e}

\section{Experiments}
\label{sec:experiments}
\begin{figure}
    \centering
    \includegraphics[width=0.8\linewidth]{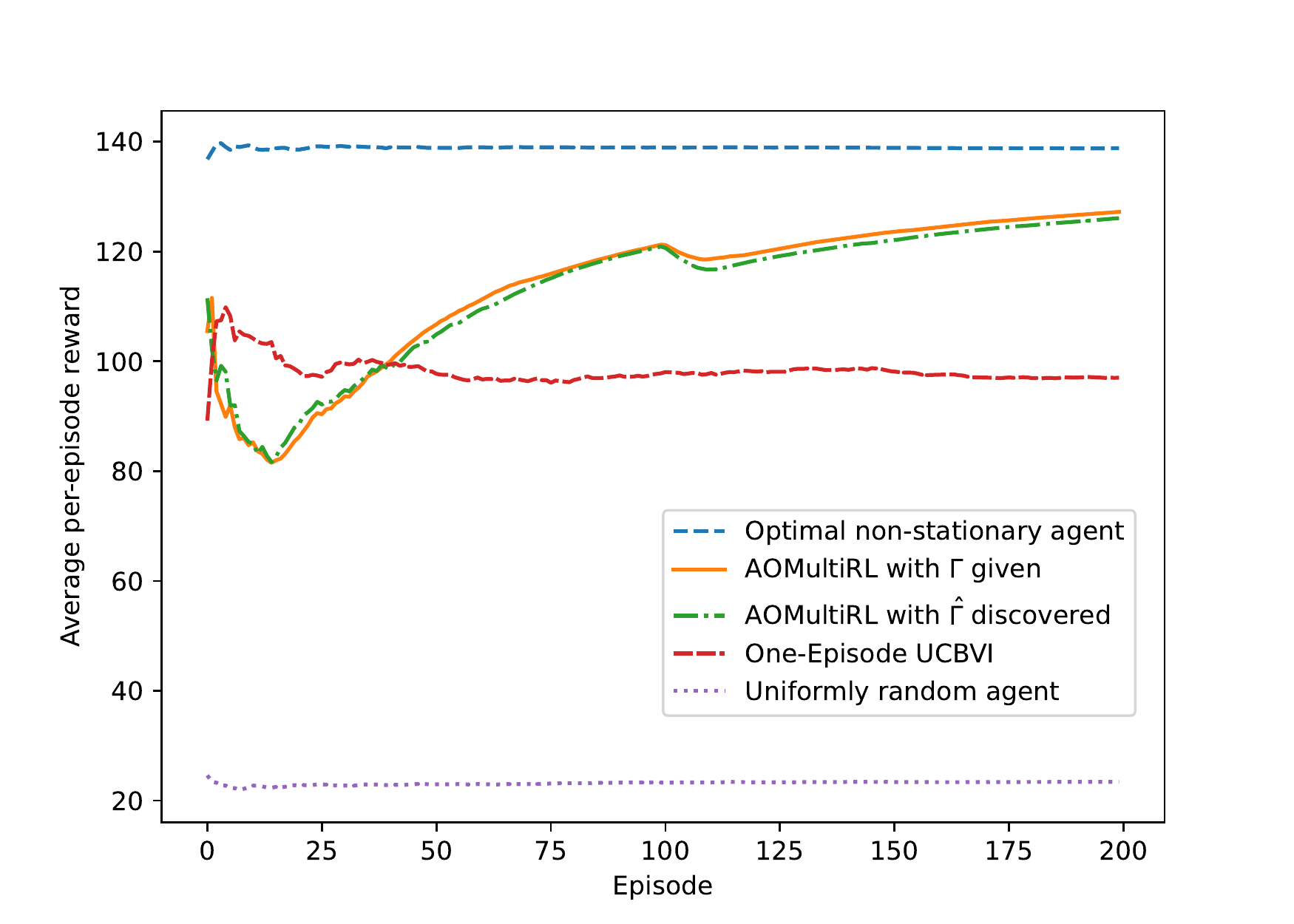}
    \caption{Average per-episode reward.}\label{fig:result}
  \end{figure}
We evaluate AOMultiRL on a sequence of $K=200$ episodes, where the task in each episode is taken from a set of $M = 4$ MDPs.
Each MDP in $\mathcal{M}$ is a $4 \times 4$ grid of $S = 16$ cells with $A = 4$ valid actions: \texttt{up, down, left, right}. 
The state for row $r$ and column $c$ (0-indexed) is represented by the tuple $(r,c)$. The reward is 0 in every state, except for the four corners $(0,0), (0,3), (3,0)$, and $(3,3)$, where the reward is 1. We fix the initial state at $(1,1)$. 

To simulate an adversarial sequence of tasks, episodes 100 to 150 and episodes 180 to 200 contain only the MDP $m_4$. Other episodes chooses $m_1, m_2$ and $m_3$ uniformly at random. The hitting time is $D = 7$ and the failure probability is $p = 0.03$. We use the \texttt{rlberry} framework~\citep{rlberry} for our implementation\footnote{The code is available at \url{https://github.com/ngmq/adversarial-online-multi-task-reinforcement-learning}}.

We construct the transition functions so that each MDP has only one easy-to-reach corner, which corresponds to a unique optimal policy. The separation level $\lambda$ is $1.2999$. Furthermore, there exists state-action pairs that are $\lambda/2$-distinguishing but not $\lambda$-distinguishing. More details can be found in Appendix~\ref{appendix:expsetup}.

The baseline algorithms include a random agent that chooses actions uniformly at random, a one-episode UCBVI agent which does not group episodes and learns using only the samples of each episode, and the optimal non-stationary agent that acts optimally in every episode. The first and the last baselines serve as the lower bound and upper bound performance for AOMultiRL, while the second baseline helps illustrate the effectiveness of clustering episodes correctly. We evaluate two instances of AOMultiRL: AOMultiRL1 with a set $\Gamma$ of $|\Gamma| = 3$ given and AOMultiRL2 without any distinguishing set given. 
We follow the approach of~\cite{LatentMDPs2021} and evaluate all five algorithms based on their expected cumulative reward when starting at state $(1,1)$ and following their learned policy for $H_1 = 200$ steps (averaged over 10 runs). While the horizon for the learning phase is much smaller than the horizon for clustering phase of $H_0 \approx 80000$, we ensure the fairness of the comparisons by not using the samples collected in the clustering phase in the learning phase, thus simulating the setting where $H_0 \ll H_1$ without the need to use significantly larger MDPs. We use the average per-episode reward as the performance metric. Figure~\ref{fig:result} shows the results.

\textbf{The effectiveness of the clustering on the learning phase.} To measure the effectiveness of aggregating samples from episodes of the same task for the learning phase, we compare AOMultiRL1 and the one-episode UCBVI agent. Since for every pair of MDP models, the transition functions are distinct for state-action pairs adjacent to two of the corners, AOMultiRL1 can only learn the estimated model accurately for each MDP model if the clustering phase produces correct clusters in most of the episodes. 
We can observe in Figure~\ref{fig:result} that after about thirty episodes, AOMultiRL1 starts outperforming the one-episode UCBVI agent and approaching the performance of the optimal non-stationary agent. 
The model $m_4$ appears for the first time in episode 100, which accounts for the sudden drop in performance in that episode. 
Afterwards, the performance of AOMultiRL1 steadily increases again. This demonstrates that the AOMultiRL1 is able to identify the correct cluster in most of the episodes, which enables the multi-episode UCBVI algorithm in AOMultiRL1 to estimate the MDP models much more accurately than the non-transfer one-episode UCBVI agent. This suggests that for larger MDPs where $H_1 \gg H_0$, spending a number of initial steps on finding the episodes of the same task would yield higher long-term rewards.

\textbf{Performance of AOMultiRL with the discovered $\bm{\hat{\Gamma}}$}. Next, we examine the performance of AOMultiRL2 when no distinguishing set is given. We run AOMultiRL2 for 204 episodes, in which stage 1 consists of the first four episodes, each containing one of the four MDP models in $\mathcal{M}$.
As the identities of the models are not given, the algorithm has to correctly construct four clusters and then compute a $\lambda/2$-distinguishing set after the $4\nth$ episode even though each model is seen just once. As mentioned above, the MDPs are set up so that if the AOMultiRL2 correctly identifies four clusters, then the discovered $\hat{\Gamma}$ will contain at least one state-action pair that is $\lambda/2$-distinguishing but not $\lambda$-distinguishing. 
In stage 2, the horizon of the learning phase is set to the same $H_1 = 200$ used for AOMultiRL1.
The performance in stage 2 of AOMultiRL2 approaches that of AOMultiRL1, indicating that the discovered $\hat{\Gamma}$ is as effective as the set $\Gamma$ given to AOMultiRL1.

\section{Conclusion}
In this paper, we studied the adversarial online multi-task RL setting with the tasks belonging to a finite set of well-separated models. We used a general notion of task-separability, which we call $\lambda$-separability. Under this notion, we proved a minimax regret lower bound that applies to all algorithms and an instance-specific regret lower bound that applies to a class of uniformly good cluster-then-learn algorithms. We further proposed AOMultiRL, a polynomial time cluster-then-learn algorithm that obtains a nearly-optimal instance-specific regret upper bound. These results addressed two fundamental aspects of online multi-task RL, namely learning an adversarial task sequence and learning under a general task-separability notion. Adversarial online multi-task learning remains challenging when the diameter and the number of models are unknown; this is left for future work.

\section*{Acknowledgement}
This work was supported by the NSERC Discovery Grant RGPIN-2018-03942.

\bibliographystyle{plainnat}
\bibliography{refs.bib}

\appendix

\section{Related Work}
\label{appendix:RelatedWorks}

\textbf{Stochastic Online Multi-task RL.} The Finite-Model-RL algorithm~\citep{BrunskillAndLi2013} considered the stochastic setting with infinite-horizon MDPs and focused on deriving a sample complexity of exploration in a $(\epsilon,\delta)$-PAC setting. As shown by~\citet{DannUnifyingPACandRegret2017}, even an optimal $(\epsilon, \delta)$-PAC bound can only guarantee a necessarily sub-optimal $O(K_m^{2/3})$ regret bound for each task $m \in [M]$ that appears in $K_m$ episodes, leading to an overall $O(M^{1/3}K^{2/3})$ regret bound for the learning phase in the multi-task setting.

The Contextual MDPs algorithm by~\citet{Hallak2015contextual} is capable of obtaining a $O(\sqrt{K})$ regret bound in the learning phase after the right cluster has been identified; however their clustering phase has exponential time complexity in $K$.
The recent L-UCRL algorithm~\citep{LatentMDPs2021} considered the stochastic finite-horizon setting and reduced the problem to learning the optimal policy of a POMDP. Under a set of assumptions that allow the clusters to be discovered in $O(\polylog(MSA))$, L-UCRL is able to obtain an overall $O(\sqrt{MK})$ regret with respect to a POMDP planning oracle which aims to learn a policy that maximizes the expected single-task return when a task is randomly drawn from a known distribution of tasks. In contrast, our work adopts a stronger notion of regret that encourages the learner to maximize its expected return for a sequence of tasks chosen by an adversary. When the models are bandits instead of MDPs,~\citet{Azar2013} use spectral learning to estimate the mean reward of the arms in all models and obtains an upper bound linear in $K$.

\vspace{\baselineskip}
\noindent\textbf{Lifelong RL.} Learning a sequence of related tasks is more well-studied in the lifelong learning literature. Recent works in lifelong RL~\citep{AbelLLRL2018,Lecarpentier2021LipschitzLR} often focus on the setting where tasks are drawn from an unknown distribution of MDPs and there exists some similarity measure between MDPs that support transfer learning. Our work instead focuses on learning the dissimilarity between tasks for the clustering phase and avoiding negative transfer.

\vspace{\baselineskip}
\noindent\textbf{Active model estimation} The exploration in AOMultiRL is modelled after the active model estimation problem~\citep{Tarbouriech2020}, which is often presented in PAC-RL setting. Several recent works on active model estimation are~PAC-Explore~\citep{GuoAndBrunskill2015}, FW-MODEST~\citep{Tarbouriech2020}, $\beta-$curious walking~\citep{Sun2020}, and GOSPRL~\citep{Tarbouriech2021a}. 

The $\Theta(\tilde{D}|\Gamma^\alpha|N)$ bound on the horizon of clustering in Lemma~\ref{lemma:computeH0} has the same $O(S^2A)$ dependency on the number of states and actions as the state-of-the-art bound by GOSPRL~\citep{Tarbouriech2021a} for the active model estimation problem. 
The main drawback is that $H_0$ depends linearly on the hitting time $\tilde{D}$ and not the diameter $D$ of the MDPs. 
As the hitting time is often strictly larger than the diameter~\citep{UCRL2,Tarbouriech2021a}, this dependency on $\tilde{D}$ is sub-optimal. 
On the other hand, AOMultiRL is substantially less computationally expensive than GOSPRL since there is no shortest-path policy computation involved.

\section{The generality of $\lambda$-separability notion}
\label{appendix:Example}
In this section, we show that the general separation notion in Definition~\ref{def:lambdaSeparability} defines a broader class of online multi-task RL problems that extends the entropy-based separation assumption in the latent MDPs setting~\citep{LatentMDPs2021}. We start by restating the entropy-based separation condition of~\cite{LatentMDPs2021}:

\begin{definition}
Let $\Pi$ denote the class of all history-dependent and possibly non-Markovian policies, and let $\tau \sim (m, \pi)$ be a trajectory of length $H$ sampled from MDP $m$ by a policy $\pi \in \Pi$. The set $\mathcal{M}$ is well-separated if the following condition holds:
\begin{align}
\forall m, m' \in \mathcal{M}, m' \neq m, \pi \in \Pi,
 \Pr_{\tau \sim (m, \pi)}\left( \frac{\Pr_{m', \pi}(\tau)}{\Pr_{m, \pi}(\tau)} > (\epsilon_p/M)^{c_1}\right) < (\epsilon_p / M)^{c_2},
\end{align}
where $\epsilon_p \in (0,1)$ is a target failure probability, $c_1 \geq 4, c_2 \geq 4$ are universal constants and $\Pr_{m,\pi}(\tau)$ is the probability that $\tau$ is realized when running policy $\pi$ on model $m$.
\label{def:entropySeparation}
\end{definition}

\begin{figure}[bp]
  \centering
  \begin{tikzpicture}
  [
  node distance=1.3cm,on grid,>={Stealth[round]},auto,
  pre/.style={<-,shorten <=1pt,>={Stealth[round]},semithick},
  twoway/.style={<->, dashed},
  inner sep=2mm,
  place/.style={circle,draw=blue!50,thick}
  ]
  \node [place] (a1) {1};
  \node [place] (a2)[below=of a1, xshift = -12mm, yshift = -10mm] {2}
  edge [pre] node {$\lambda$} (a1);
  \node [place] (a3)[below=of a1, xshift = 12mm, yshift = -10mm] {3}
  edge [pre] node[swap] {$1 - \lambda$} (a1)
  edge [twoway] node {} (a2);
  \node [place] (b1)[right=of a1, xshift = 25mm] {1}
  ;
  \node [place] (b2)[below=of b1, xshift = -12mm, yshift = -10mm] {2}
  edge [pre] node {$\lambda/2$} (b1);
  \node [place] (b3)[below=of b1, xshift = 12mm, yshift = -10mm] {3}
  edge [pre] node[swap] {$1 - \lambda/2$} (b1)
  edge [twoway] node {} (b2);
  \end{tikzpicture}
  \caption{An instance of $\lambda$-separable LMDPs where Definition~\ref{def:entropySeparation} does not apply}
  \label{fig:alphabeta}
  \end{figure}
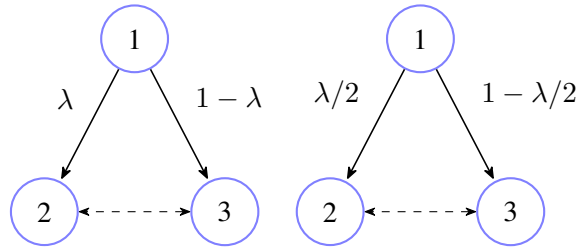

The following lemma constructs a set $\mathcal{M}$ of just two models that satisfy the $\lambda$-separability condition but not the entropy-based separation condition.

\begin{mylemma} Given any $\lambda \in (0,1), \epsilon_p \in (0,1), H > 0$ and any constants $c_1, c_2 \geq 4$, there exists a set of MDPs $\mathcal{M} = \{m_1, m_2\}$ with horizon $H$ that is $\lambda$-separable but is not well-separated in the sense of Definition~\ref{def:entropySeparation}.
\end{mylemma}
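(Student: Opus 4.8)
The plan is to construct an explicit two-model instance that is $\lambda$-separable by design but violates Definition~\ref{def:entropySeparation} for a suitable choice of policy $\pi$ and trajectory. Figure~\ref{fig:alphabeta} already suggests the shape of the construction: two LMDPs with a single state-action pair whose transition distributions differ by exactly $\lambda$ in $\ell_1$-norm (so $\lambda$-separability holds trivially with $\Gamma^\lambda_{1,2}$ containing that one pair), but where the likelihood ratio $\Pr_{m_2,\pi}(\tau)/\Pr_{m_1,\pi}(\tau)$ can be made large with probability bounded well away from zero, defeating the entropy-type inequality which demands an inverse-polynomial-in-$M$ failure probability.

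The key steps, in order, are as follows. First, I would fix the state space $\mathcal{S} = \{1,2,3\}$, one action (so policies are trivial except for the length-$H$ rollout), reward zero everywhere (rewards are irrelevant to both definitions, which concern only trajectory distributions), and set $P_1(2\mid 1) = \lambda$, $P_1(3\mid 1) = 1-\lambda$ versus $P_2(2\mid 1) = \lambda/2$, $P_2(3\mid 1) = 1 - \lambda/2$; states $2$ and $3$ can both loop back to state $1$ (or to each other) identically under both models so that only the transitions out of state $1$ distinguish them. Since $\norm{P_1(1,a) - P_2(1,a)} = 2\cdot(\lambda/2) = \lambda \geq \lambda$, the set $\{m_1,m_2\}$ is $\lambda$-separable (possibly I rescale so the $\ell_1$ gap is exactly $\lambda$ or strictly exceeds it, adjusting constants as needed). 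Second, I would take $\pi$ to be the unique policy and let $\tau$ be a trajectory sampled from $m_1$; with probability $\lambda$ the first transition out of state $1$ lands in state $2$, and on that event the per-step likelihood ratio contributes a factor $P_2(2\mid1)/P_1(2\mid1) = 1/2$, whereas landing in state $3$ contributes $(1-\lambda/2)/(1-\lambda) > 1$. Third — this is the crux — I would compute or lower-bound $\Pr_{\tau\sim(m_1,\pi)}\big(\Pr_{m_2,\pi}(\tau)/\Pr_{m_1,\pi}(\tau) > (\epsilon_p/M)^{c_1}\big)$ and show it exceeds $(\epsilon_p/M)^{c_2}$. The cleanest route is to observe that with $M = 2$, the threshold $(\epsilon_p/2)^{c_1}$ is a fixed constant in $(0,1)$; by choosing the trajectory length $H$ and, if needed, arranging the chain so that state $1$ is revisited many times, the typical trajectory under $m_1$ visits state $3$ more often than $m_2$ would predict, making the ratio concentrate around a value bounded below by a positive constant (indeed by a constant $\geq (\epsilon_p/2)^{c_1}$ once $\epsilon_p$ and $c_1$ are fixed) — or, even simpler, the single event "first exit from state $1$ goes to state $3$" already has probability $1-\lambda$ and on it the ratio is $\geq (1-\lambda/2)/(1-\lambda) > 1 > (\epsilon_p/M)^{c_1}$, so the left-hand probability is at least $1-\lambda$, which vastly exceeds $(\epsilon_p/M)^{c_2} < 1$ for any $\lambda$ bounded away from $1$; for $\lambda$ close to $1$ I swap the roles, using the event "exit goes to state $2$" whose probability is $\lambda$ together with a chain long enough that the accumulated ratio over repeated visits to state $1$ stays above the threshold. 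Fourth, I would tie up the edge cases in $\lambda$ (small versus large) so the argument is uniform, and note that since only transitions at a single state-action pair matter, $\lambda$-separability is immediate with $|\Gamma^\lambda_{1,2}| = 1$.

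The main obstacle I anticipate is handling the full range $\lambda \in (0,1)$ with a single clean event: when $\lambda \to 1$ the "good" event for the likelihood ratio has vanishing probability, and when $\lambda \to 0$ the per-step ratio is close to $1$ so a single step gives no leverage. The fix is to exploit the length-$H$ horizon: engineer the chain (e.g. state $2$ and state $3$ both return to state $1$ with probability $1$ each step) so that the trajectory makes $\Theta(H)$ independent "coin flips" out of state $1$, and then the log-likelihood ratio is a sum of $\Theta(H)$ i.i.d. increments with a strictly negative mean under $m_1$ but with Gaussian-type fluctuations of order $\sqrt{H}$; a standard Chernoff/Hoeffding two-sided bound shows the ratio exceeds the fixed constant threshold $(\epsilon_p/M)^{c_1}$ with probability bounded below by a constant — in fact arbitrarily close to $1/2$ by symmetry of the fluctuations around the mean shifted appropriately, or at the very least by a positive constant independent of the small target $(\epsilon_p/M)^{c_2}$ — thereby contradicting Definition~\ref{def:entropySeparation}. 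Once that quantitative lemma about the $\Theta(H)$-fold product of likelihood ratios is in place, the rest is bookkeeping: verify $\lambda$-separability (trivial), verify the MDPs are valid (nonnegative transition probabilities summing to one, which dictates a mild upper restriction like $\lambda \le 1$ already assumed), and conclude.
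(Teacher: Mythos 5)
There is a genuine gap in your main quantitative step. In the construction you actually analyze, the distinguishing state-action pair is visited $\Theta(H)$ times (you deliberately route states $2$ and $3$ back to state $1$ to get ``$\Theta(H)$ independent coin flips''), so under $m_1$ the log-ratio $\log\bigl(\Pr_{m_2,\pi}(\tau)/\Pr_{m_1,\pi}(\tau)\bigr)$ is a sum of $n=\Theta(H)$ i.i.d.\ increments whose mean is $-\mathrm{KL}\bigl(P_1(s^1,\cdot)\,\|\,P_2(s^1,\cdot)\bigr)<0$ per visit. Its typical value is $-\Theta(H\lambda^2)$ with fluctuations only $O(\sqrt{H})$, while the threshold $c_1\log(\epsilon_p/2)$ is a fixed constant; hence the probability that the ratio exceeds $(\epsilon_p/2)^{c_1}$ decays exponentially in $H\lambda^2$ rather than staying bounded below by a constant (and certainly is not ``close to $1/2$'' --- the drift pushes the median far below the threshold, it cannot be ``shifted appropriately''). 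So for fixed $\lambda$ and large $H$ your two-model family tends to \emph{satisfy} Definition~\ref{def:entropySeparation}, which is exactly the regime that assumption is designed to capture, and the lemma must hold for every $H>0$. Your ``even simpler'' route has a related flaw: the event that the first exit from state $1$ goes to state $3$ controls only the first factor of the likelihood ratio; the full-trajectory ratio is the product over \emph{all} returns to state $1$, so it is not lower bounded by $(1-\lambda/2)/(1-\lambda)$ on that event.

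Both issues are repairable, and it is worth contrasting the fix with the paper's route. Within your one-action design, make states $2$ and $3$ absorbing (identically under both models) so the distinguishing pair is visited exactly once; then the full-trajectory ratio equals the one-step ratio, which is either $1/2$ or $(1-\lambda/2)/(1-\lambda)$, and both exceed $(\epsilon_p/2)^{c_1}\le 2^{-4}$, so the probability in Definition~\ref{def:entropySeparation} equals $1$ for every $H$ and every $\lambda\in(0,1)$. The paper instead exploits the quantifier over policies: it keeps a second action $a^2$ whose transitions are identical under $m_1$ and $m_2$ (Figure~\ref{fig:alphabeta}) and takes the deterministic policy that plays $a^2$ at $s^1$; this policy never touches the unique $\lambda$-distinguishing pair, so the two trajectory distributions coincide, the ratio is identically $1$, and the probability is $1$. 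The structural point the paper is making --- $\lambda$-separability only guarantees one distinguishing pair, which a policy can simply avoid --- is lost in a single-action construction, so if you pursue your route you should present it as the complementary observation that the entropy condition can fail even when the distinguishing pair is unavoidable, provided it is visited only $O(1)$ times.
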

\begin{proof}
Consider the set $\mathcal{M}$ with $M = 2, \mathcal{S} = \{s^1, s^2, s^3\}, \mathcal{A} = \{a^1, a^2\}$ in Figure~\ref{fig:alphabeta}. Both $m_1$ and $m_2$ have the same transition functions in all state-action pairs except for $(s^1, a^1)$: 
\begin{align*}
& \P_1(s^2 \mid s^1, a^1) = \lambda \\
& \P_1(s^3 \mid s^1, a^1) = 1 - \lambda \\
& \P_2(s^2 \mid s^1, a^1) = \lambda / 2 \\
& \P_2(s^3 \mid s^1, a^1) = 1 - \lambda / 2. \\
\end{align*}
It follows that the $\ell_1$ distance between $P_1(s^1, a^1)$ and $P_2(s^1, a^1)$ is
\begin{align*}
\norm{P_1(s^1, a^1) - P_2(s^1, a^1)} &= \norm{P_1(s^2 \mid s^1, a^1) - P_2(s^2 \mid s^1, a^1 )} + \norm{P_1(s^3 \mid s^1, a^1) - P_2(s^3 \mid s^1, a^1)} \\
&= 2(\lambda - \lambda/2) \\
&= \lambda.
\end{align*}
As a result, this set $\mathcal{M}$ is $\lambda$-separable. However, any deterministic policy that takes action $a_2$ in $s_1$ and an arbitrary action in $s_2$ and $s_3$ will induce the same Markov chain on two MDP models. Thus, the entropy-based separation definition does not apply. An example of such a policy is shown below.

Consider running the following deterministic policy on model $m_1$:

\begin{align*}
\pi(s^1) &= a^2 \\
\pi(s^2) &= a^1 \\
\pi(s^3) &= a^1.
\end{align*}

Consider an arbitrary trajectory $\tau$.
The probability that this trajectory is realized with respect to both models is
\begin{align}
\Pr_{m_1, \pi}(\tau) &= \prod_{t=1}^{H}P_1(s_{t+1} \mid s_t, a_t )\\
&= \prod_{t=1}^{H}P_1(s_{t+1} \mid s_t, \pi(s_t) )\\
&= \prod_{t=1}^{H}P_2(s_{t+1} \mid s_t, a_t ) \qquad \text{since } (s_t, \pi(s_t)) \neq (s^1, a^1)\\
&= \Pr_{m_2, \pi}(\tau).
\end{align}

As a result, for all $\tau$, 
\begin{align}
  \frac{\Pr_{m_2, \pi}(\tau)}{\Pr_{m_1, \pi}(\tau)} = 1,  
\end{align}
which implies that 
\begin{align}
\Pr_{\tau \sim m_1, \pi}\left(\frac{\Pr_{m_2, \pi}(\tau)}{\Pr_{m_1, \pi}(\tau)} > (\epsilon_p / M)^{c_1}\right) = \Pr_{\tau \sim m_1, \pi}(1 > (\epsilon_p / M)^{c_1}) = 1,
\end{align}
which is larger than $\left(\epsilon_p / M\right)^{c_2}$.
\end{proof}

\section{Proofs of the lower bounds}
\label{sec:proofsLowerBounds}
\begin{figure}
	\centering
	\begin{tikzpicture}[->,>= stealth,shorten >=2 pt, line width =0.5 pt , node distance = 3 cm]
		\node [circle, draw] (zero) {0};
		\node [label=below:{+1},  circle , draw] (one) [ right of = zero] {1};
		\node [circle, draw] (two) [left of = zero] {2};
		\path (zero.north) edge [dashed, bend left = 100] node [above] {$\delta+\Delta$} (one);
		\path (zero) edge [ bend left = 50] node [above] {} (one) ;
		\path (zero) edge [ bend right = 50] node [below] {$\delta$} (one) ;
		\path (one) 
			edge [ bend right = 0] node [above] {$\delta$} (zero)
			edge [loop right] node {$1-\delta$} (one)
			;
		\path  (two) edge [dashed, bend left = 100] node [above] {$1/2+\frac{\lambda}{2}$}  (zero);
		\path  (two) edge [bend right = 50] node [above] {$1/2$} (zero);
		\path (two) edge [bend left = 50] node [below] {$1/2$} (zero);	
	\end{tikzpicture}
	\caption{A non-communicating 2-JAO MDP. There are no rewards at states $0$ and $2$, while state $1$ has reward $+1$. We set $\Delta = \Theta(\sqrt{\frac{SA}{HD}})$. The dashed arrows indicate the unique actions with highest transition probabilities on the left and right parts of the MDP. No actions take state $0$ to state $2$, making this MDP non-communicating.}
  \label{fig:noncommunicatingtwojaomdp}
\end{figure}
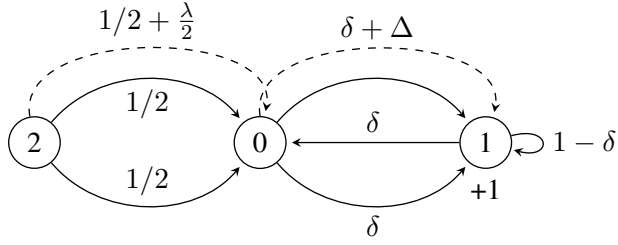

\LemmaMinimaxLowerBound*
\begin{proof}
We construct $\mathcal{M}$ in the following way: each MDP in $\mathcal{M}$ is a JAO MDP~\citep{UCRL2} of two states and $SA$ actions and diameter $D' = D/4$. The translation from this JAO MDP to an MDP with $S$ states, $A$ actions and diameter $D$ is straightforward~\citep{UCRL2}. 
%Figure~\ref{fig:jaomdp} illustrates this JAO MDP. Let $\delta = \frac{4}{D}$. 
State $1$ has reward $+1$ while state $0$ has no reward. In state $0$, for all actions the probability of transitioning to state $1$ is $\delta$ except for one best action where this probability is $\delta + \lambda/2$. Every MDP in $\mathcal{M}$ has a unique best action: for $i = 1, \dots, SA$, the $i\nth$ action is the best action in the MDP $m_i$. The starting state is always $s_1 = 0$.

We consider a learner who knows all the parameters of models in $\mathcal{M}$, except the identity of the task $m^k$ given in episode $k$. We employ the following information-theoretic argument from~\citet{Mao2021b}: when the task $m^k$ in episode $k$ is chosen uniformly at random from $\mathcal{M}$, no useful information from the previous episodes can help the learner identify the best action in $m^k$. 
This is true since all the information in the previous episodes is samples from the MDPs in $\mathcal{M}$, which provide no further information than the parameters of the models in $\mathcal{M}$. Since $\mathcal{M} = SA$, all actions (from state $0$) are equally probable to be the best action in $m^k$.  Therefore, the learner is forced to learn $m^k$ from scratch. It follows that the total regret of the learner is the sum of the one-episode-learning regrets in every episode:
\begin{align*}
  \mathrm{Regret}(K) = \sum_{k=1}^K R^k,
\end{align*}
where $R^k = V_1^*(s_1) - V_1^{\pi_k}(s_1)$ is the one-episode-learning regret in episode $k$. The one-episode-learning is equivalent to the learning in the undiscounted setting with horizon $H$. Applying the lower bound result for the undiscounted setting in~\citet[Theorem 5]{UCRL2} obtains that for all $\pi_k$,
\begin{align*}
  \rho^*H - \E_{m^k \sim \mathcal{M}}V_1^{\pi_k}(s_1) \geq \Omega(\sqrt{DSAH}),
\end{align*}
where $\rho^* = \frac{\delta+\lambda/2}{2\delta + \lambda/2}$ is the average reward of the optimal policy~\citep{UCRL2}. Note since only state $1$ has reward $+1$, $\rho^*$ is also the stationary probability that the optimal learner is at state $1$.

Next, we show that for all $H \geq 2$ and $m^k \in \mathcal{M}$, it holds that $|V_1^* - \rho^*H| \leq \frac{D}{2}$. The optimal policy on all $m^k$ induces a Markov chain between two states with transition matrix 
\begin{align*}
  \begin{bmatrix}
    1-\delta-\lambda/2 & \delta + \lambda/2 \\
    \delta & 1 - \delta
  \end{bmatrix}.
\end{align*}

Let $\P_{m^k}(s_t = 1 \mid s_1 = 0)$ be the probability that the Markov chain is in state $1$ after $t$ time steps with the initial state $s_1 = 0$.
Let $\Delta_t = \P_{m^k}(s_t = 1 \mid s_1 = 0) - \rho^*$. Obviously, $\Delta_1 = -\rho^*$. By~\citet[Equation 1.8]{MarkovChainAndMixingTime2008}, we have $\Delta_t = (1-2\delta-\lambda/2)^{t-1}\Delta_1$. It follows that, for the optimal policy, 
\begin{align}
  V_1^*(s_1) &= \sum_{t=1}^H \P_{m^k}(s_t = 1 \mid s_1 = 0) \\
             &= \sum_{t=1}^H (\Delta_t + \rho^*) \\
             &= \rho^*H + \sum_{t=1}^H \Delta_t \\
             &= \rho^*H + \sum_{t=1}^H (1-2\delta-\lambda/2)^{t-1}\Delta_1 \\
             &= \rho^*H + \Delta_1 \frac{1 - (1-2\delta-\lambda/2)^H}{2\delta + \lambda/2}.
\end{align}

Hence,
\begin{align} 
  |V_1^*(s_1) - \rho^*H| &= \left| \Delta_1 \frac{1 - (1-2\delta-\lambda/2)^H}{2\delta + \lambda/2} \right| \\
  &\leq \left| \frac{\Delta_1}{2\delta + \lambda/2} \right| \\
  &= \frac{\rho^*}{2\delta + \lambda/2} \\
  &\leq \frac{1}{2\delta + \lambda / 2} \\
  &\leq \frac{1}{2\delta} \\
  &= \frac{D}{2},
  \label{eq:boundV1rhoH}
\end{align}
where the last equality follows from $\delta = \frac{D}{4}$.

For any $H \geq DSA$ and $S, A \geq 2$, we have $\sqrt{HDSA} \geq DSA \geq 4D$, and hence $\sqrt{HDSA} - \frac{D}{2} \geq \frac{\sqrt{HDSA}}{2}$. We conclude that 
\begin{align*}
  \E[\mathrm{Regret}(K)] &= \sum_{k=1}^K \E[R^k] \\ 
  &= \sum_{k=1}^K \E[V^*_1 - V_1^{\pi_k}](s_1) \\
  &\geq \sum_{k=1}^K (\rho^*H - \frac{D}{2} - V_1^{\pi_k}(s_1)) \\
  &= \Omega(K\sqrt{DHSA}).
\end{align*}
\end{proof}

The upper bound of UCRL2 can be proved similarly: Theorem 2 in~\citet{UCRL2} states that for any $p \in (0, 1)$, by running UCRL2 with failure parameter $p$, we obtain that for any initial state $s_1$ and any $H > 1$, with probability at least $1-p$,
\begin{align}
  \rho^*H - \sum_{h=1}^H r_h \leq O\left(DS\sqrt{AH\ln{\frac{H}{p}}}\right).
\end{align}

Setting $p = \frac{1}{H}$ and trivially bound the regret in the failure cases by $H$ to obtain

\begin{align}
  \rho^*H - E[\sum_{h=1}^H r_h] &\leq O\left(DS\sqrt{AH\ln{H^2}}\right) + \frac{1}{H} \times H \\
  &=  O\left(DS\sqrt{AH\ln{H}}\right).
\end{align}

This bound holds across all episodes, hence the total regret bound with respect to $\rho^*H$ is $O\left(KDS\sqrt{AH\ln{H}}\right)$. Combining this with the fact that $V_1^*(s_1) \leq \rho^*H + \frac{D}{2}$, we obtain the upper bound.

\LemmaPACLowerBoundQCoins*
Before showing the proof of Lemma~\ref{lemma:PACLowerBoundQCoins}, we consider the following auxiliary problem: Suppose we are given three constants $\delta, \lambda, \epsilon \in (0, \frac{1}{4}]$ and a set of $2Q$ coins. The coins are arranged into a $Q \times 2$ table of $Q$ rows and $2$ columns so that each cell contains exactly one coin. The rows are indexed from $1$ to $Q$ and the columns are indexed from $1$ to $2$. In the first column, all coins are fair except for one coin at row $\theta$ which is biased with probability of heads equal to  $\frac{1}{2}+\lambda$. In the second column, all coins have probability of heads equal to $\delta$ except for the coin at row $\theta$ which has probability of heads $\delta + \epsilon$. In this setting, row $\theta$ is a special row that contains the most biased coins in the two columns. The objective is to find this special row $\theta$ after at most $H$ coin flips, where $H > 0$ is a constant representing a fixed budget. Note that if we ignore the second column, then this problem is reduced to the well-known problem of identifying one biased coin in a collection of $Q$-coins~\citep{Tulsiani2014L6}.

Let \budgetfix{$N_1, N_2$} be the number of flips an algorithm performs on the first and second column, respectively. \budgetfix{For a fixed global budget $H$,} after \budgetfix{$\tau = N_1 + N_2 \leq H$} coin flips, the algorithm recommends $\hat{\theta}$ as its prediction for $\theta$. \budgetfix{Note that $\tau$ is a random stopping time which can depend on any information the algorithm observes up to time $\tau$.} 
Let $X_t$ be the random variable for the outcome of $t\nth$ flip, and \budgetfix{$X_1^\tau = (X_1, X_2, \dots,  X_\tau)$} be the sequence of outcomes after \budgetfix{$\tau$} flips. For $j \in [Q]$, let $\P_j$ denote the probability measure induced by $\mathrm{Alg}$ corresponding to the case when $\theta = j$.
We first show that if the algorithm fails to flip the coins sufficiently many times in both columns, then for some $\theta$ the probability of failure is at least $\frac{1}{2}$.

% \budgetfix{
% Let $\tau \leq T$ be a random stopping time.\footnote{\budgetfix{NOTE: THIS IS HOW WE IMPOSE A GLOBAL BUDGET}} 
%For $k = 1, 2$, define $N_k := \sum_{t=1}^\tau \I{a_{t,2} = k}$. 
% }

\begin{mylemma}
  Let $Q \geq 12, C_1 = 40$ and $C_2 = 64$. For any algorithm $\mathrm{Alg}$, if 

  \budgetfix{
  \begin{align*}
    N_1 \leq T_1 := \frac{Q}{4 C_1 \lambda^2} 
    \quad \text{and} \quad 
    N_2 \leq T_2 := \frac{Q(\delta + \epsilon)}{4 C_2 \epsilon^2},
  \end{align*}
}
then there exists a set $J \subseteq [Q]$ with $|J| \geq \frac{Q}{6}$ such that 
  \begin{align*}
    \forall j \in J,~\P_{j}[\hat{\theta}=j] \leq \frac{1}{2}.
  \end{align*}
  %where $\P_j$ denotes the probability taken when row $j$ contains the biased coins.
  \label{lemma:generalized-tulsiani}
\end{mylemma}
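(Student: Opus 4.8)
The plan is to adapt the classical information-theoretic lower bound for locating a biased coin among fair coins, the one new ingredient being a two-column version of the divergence-decomposition identity. Introduce a reference ``null'' instance $\P_0$ in which no row is special: every first-column coin is fair and every second-column coin has bias $\delta$. For $i \in [Q]$ let $N_1^{(i)}$ and $N_2^{(i)}$ be the numbers of flips that $\mathrm{Alg}$ performs on the first- and second-column coins of row $i$, so that $\sum_i N_1^{(i)} = N_1 \le T_1$ and $\sum_i N_2^{(i)} = N_2 \le T_2$ almost surely, and hence the total number of flips is almost surely at most $T_1 + T_2$. Embedding the interaction into a fixed-horizon bandit with $2Q$ arms (padding the run with no-ops after the stopping time $\tau$) makes the standard divergence-decomposition identity available, and since $\P_0$ and $\P_j$ agree on every coin except the two in row $j$,
\begin{equation*}
\KL(\P_0 \| \P_j) \;=\; \E_0[N_1^{(j)}]\, d(\tfrac{1}{2} \,\|\, \tfrac{1}{2}+\lambda) \;+\; \E_0[N_2^{(j)}]\, d(\delta \,\|\, \delta+\epsilon),
\end{equation*}
where $d(p\|q)$ denotes the Bernoulli relative entropy and $\E_0$ is expectation under $\P_0$.

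Next I would bound the two Bernoulli terms by the elementary estimates $d(\tfrac12 \| \tfrac12+\lambda) = -\tfrac12\ln(1-4\lambda^2) \le \tfrac{8}{3}\lambda^2$ and $d(\delta \| \delta+\epsilon) \le \tfrac{\epsilon^2}{(\delta+\epsilon)(1-\delta-\epsilon)} \le \tfrac{2\epsilon^2}{\delta+\epsilon}$, both valid for $\lambda,\delta,\epsilon \in (0,\tfrac14]$. The thresholds $T_1 = \tfrac{Q}{4C_1\lambda^2}$ and $T_2 = \tfrac{Q(\delta+\epsilon)}{4C_2\epsilon^2}$ are calibrated exactly so that the $\lambda$ and $\epsilon$ factors cancel: for any $j$ with $\E_0[N_1^{(j)}] \le \tfrac{4T_1}{Q}$ and $\E_0[N_2^{(j)}] \le \tfrac{4T_2}{Q}$ one obtains
\begin{equation*}
\KL(\P_0\|\P_j) \;\le\; \frac{8}{3C_1} + \frac{2}{C_2} \;=\; \frac{1}{15} + \frac{1}{32} \;<\; \frac{1}{8},
\end{equation*}
so Pinsker's inequality yields $\mathrm{TV}(\P_0,\P_j) \le \sqrt{\tfrac12\,\KL(\P_0\|\P_j)} < \tfrac14$.

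Finally, let $J$ be the set of rows $j$ satisfying all three conditions $\E_0[N_1^{(j)}] \le \tfrac{4T_1}{Q}$, $\E_0[N_2^{(j)}] \le \tfrac{4T_2}{Q}$ and $\P_0[\hat\theta = j] \le \tfrac{3}{Q}$. Since $\sum_j \E_0[N_1^{(j)}] \le T_1$, $\sum_j \E_0[N_2^{(j)}] \le T_2$, and $\sum_j \P_0[\hat\theta = j] \le 1$, three applications of Markov's inequality show that strictly fewer than $\tfrac{Q}{4}+\tfrac{Q}{4}+\tfrac{Q}{3} = \tfrac{5Q}{6}$ rows violate one of the conditions, so $|J| \ge \tfrac{Q}{6}$. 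For every $j \in J$, using $Q \ge 12$ together with the previous step,
\begin{equation*}
\P_j[\hat\theta = j] \;\le\; \P_0[\hat\theta = j] + \mathrm{TV}(\P_0,\P_j) \;<\; \frac{3}{Q} + \frac{1}{4} \;\le\; \frac{1}{4} + \frac{1}{4} \;=\; \frac{1}{2},
\end{equation*}
which is the claim. I expect the first step to be the main obstacle: one has to justify the divergence decomposition rigorously in the presence of adaptive coin-by-coin sampling and a data-dependent stopping time $\tau$, and check that the almost-sure budget bounds $N_1 \le T_1$, $N_2 \le T_2$ genuinely dominate the expectations $\E_0[N_1^{(j)}]$ and $\E_0[N_2^{(j)}]$ appearing in the identity. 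The Bernoulli estimates and the Markov/counting step are then routine.
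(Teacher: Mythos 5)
Your proposal is correct and follows essentially the same route as the paper's proof: a null instance with $\lambda=\epsilon=0$, padding to a fixed horizon to handle the data-dependent stopping time, the chain-rule decomposition $\KL(\P_0\|\P_j)=\E_0[N_1^{(j)}]\,d(\tfrac12\|\tfrac12+\lambda)+\E_0[N_2^{(j)}]\,d(\delta\|\delta+\epsilon)$, Markov-type counting to extract at least $Q/6$ rows with small expected counts and small $\P_0[\hat\theta=j]$, and Pinsker to conclude. The only differences are cosmetic: you use slightly different (but valid) elementary Bernoulli-KL estimates and work in nats rather than bits, and the resulting constants still come in under the $\tfrac14$ threshold needed for the final bound of $\tfrac12$.
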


The proof uses a reasonably well-known reverse Pinsker inequality \citep[Equation 10]{sason2015reverse}:
\begin{quote}
Let $P$ and $Q$ be probability measures over a common discrete set. Then
\begin{align}
\infdiv{P}{Q} \leq \frac{4 \log_2 e}{\min_x Q(x)} \cdot D_{TV}(P \pipes Q)^2 . \label{eqn:reverse-pinsker}
\end{align}
\end{quote}
where $D_{TV}$ is the total variation distance. In the particular case where $P$ and $Q$ are Bernoulli distributions with success probabilities $p$ and $q \leq \frac{1}{2}$ respectively, we get
\begin{align}
\infdiv{P}{Q}
\leq \frac{4 \log_2 e}{q} \cdot (p - q)^2. \label{eqn:reverse-pinsker-pq}
% &\leq \frac{6}{q} \cdot (p - q)^2 .
\end{align}

\begin{proof}(of Lemma~\ref{lemma:generalized-tulsiani})
  As reasoned in the proof for the lower bound of multi-armed bandits~\citep{EXP3Auer2002b}, we can assume that $\mathrm{Alg}$ is deterministic\footnote{Deterministic conditional on the random history}.
  Our proof closely follows the main steps in the proof of~\citet{Tulsiani2014L6} for the setting where there is only one column. We will lower bound the probability of mistake of  $\mathrm{Alg}$ based on its behavior on a hypothetical instance where $\lambda = \epsilon = 0$. 
  
  \budgetfix{
    To account for algorithms which do not exhaust both budgets $T_1$ and $T_2$, we introduce two ``dummy coins'' by adding a zero’th row with two identical coins, solely for the analysis. These two coins have the same mean of 1 under all $Q$ models and hence flipping either of them provides no information. An algorithm which wishes to stop in a round $\tau < H$ will simply flip any dummy coin in the remaining rounds $\tau+1, \tau + 2, \ldots, H$. This way, we have the convenient option of always working with a sequence of outcomes $X_1^H$ in the analysis.
    }

  Let $\P_0$ and $\E_0$ denote the probability and expectation over $X_1^H$ taken on the hypothetical instance with $\lambda = \epsilon = 0$, respectively. Let \budgetfix{$a_t = (a_{t, 0}, a_{t, 1}) \in \{0, 1, \ldots, Q\} \times \{1, 2\}$} be the coin that the algorithm flips in step $t$. Let $x_t \in \{0, 1\}$ denote the outcome of $a_t$ where $0$ is tails and $1$ is heads. 
 
 The number of flips the coin in row $i$, column $k$ is 
  \begin{align*}
    N_{i, k} = \sum_{t=1}^T \I{a_t = (i, k)}.
  \end{align*}

  \budgetfix{
By the earlier definition of $N_k$ for $k \in \{1, 2\}$, we have
 \begin{align*}
  N_1 = \sum_{i=1}^Q N_{i, 1}, \\
  N_2 = \sum_{i=1}^Q N_{i, 2}.
 \end{align*}
% We also introduce (MAYBE NOT NEEDED!) $N_0 := H - (N_1 + N_2) = H - \tau$.
}

 We define
 \begin{align*}
 J_1 := \left\{ i \in [Q] \colon \left( \E_0 [ N_{i,1} ] \leq \frac{4 T_1}{Q} \right)
                                               \wedge \left( \E_0 [ N_{i,2} ] \leq \frac{4 T_2}{Q} \right)
           \right\} .
 \end{align*}
 Clearly, at most $\frac{Q}{4}$ rows $i$ satisfy $\E_0 [ N_{i,1} ] > \frac{4 T_1}{Q}$ and, similarly, at most $\frac{Q}{4}$ rows $i$ satisfy $\E_0 [ N_{i,2} ] > \frac{4 T_2}{Q}$. Therefore, $|J_1| \geq Q - 2 \cdot \frac{Q}{4} = \frac{Q}{2}$.
 
 We also define
 \begin{align*}
 J_2 := \left\{ i \in [Q] \colon \P_0 (\hat{\theta} = i) \leq \frac{3}{Q} \right\} .
 \end{align*}
 As at most $\frac{Q}{3}$ arms $i$ can satisfy $\P_0(\hat{\theta} = i) > \frac{3}{Q}$, it holds that $|J_2| \geq \frac{2 Q}{3}$.
 
 Consequently, defining $J := J_1 \cap J_2$, we have $|J| \geq \frac{Q}{6}$.

 For any $j \in J$, we have 
  \begin{align}
    \left|\P_j[c^* = j] - \P_0[c^*=j]\right| &= \left|\E_j[\I{c^* = j}] - \E_0[\I{c^*=j}]\right| \\
    &\leq \frac{1}{2}\norm{\P_0(\budgetfix{X_1^H}) - \P_j(\budgetfix{X_1^H})}_1 \\
    &\leq \frac{1}{2}\sqrt{2\ln{2}\infdiv{P_0(\budgetfix{X_1^H})}{\P_j(\budgetfix{X_1^H})}},
    \label{eq:substractPboundKL}
  \end{align}
 where the first inequality follows from~\citet[Equation 28]{EXP3Auer2002b} since the final output $c^*$ is a function of the outcomes $X_{1}^H$, and the last inequality is Pinsker inequality. 

 Since $\mathrm{Alg}$ is deterministic, the flip $a_t$ at step $t$ is fully determined given the previous outcomes $x_1^{t-1}$. Applying the chain rule for KL-divergences~\citep[Theorem 2.5.3]{CoverAndThomas2006} we obtain
 \begin{align*}
   \infdiv{P_0(\budgetfix{X_1^H})}{\P_j(\budgetfix{X_1^H})} &= \sum_{t=1}^H\sum_{x_{1}^{t-1}}\P_0[x_{1:t-1}] \infdiv{\P_0[x_t]}{\P_j[x_t] \mid x_{1}^{t-1}}.
 \end{align*}

 Note that $x_t$ is the result of a single coin flip. When $a_{t, 0} \neq j$, the KL-divergence is zero since the two instances have the identical coins on both columns. When $a_{t,0} = j$, the KL-divergence is either $B_1 = \infdiv{\frac{1}{2}}{\frac{1}{2}+\lambda}$ or $B_2 = \infdiv{\delta}{\delta+\epsilon}$, depending on whether $a_{t, 1}=1$ or $a_{t,1} = 2$, respectively. It follows that
 \begin{align*}
   \infdiv{P_0(\budgetfix{X_1^H})}{\P_j(\budgetfix{X_1^H})} &= \sum_{t=1}^{\budgetfix{H}} \sum_{x_{1:t-1}}\P_0[x_{1:t-1}]\left( \I{a_t = (j, 1)}B_1 + \I{a_t = (j, 2)}B_2 \right) \\
   &= \E_0[N_{j, 1}]B_1 + \E_0[N_{j, 2}]B_2 \\
   &\leq \frac{4T_1}{Q}B_1 + \frac{4T_2}{Q}B_2 \\
   &\leq \frac{B_1}{C_1\lambda^2} + \frac{(\delta + \epsilon)B_2}{C_2\epsilon^2}
 \end{align*}

Since $\lambda \leq \frac{1}{4}$ and $\delta + \epsilon \leq \frac{1}{2}$, we can bound $B_1 \leq \frac{5\lambda^2}{2\ln{2}}$~\citep{Tulsiani2014L6} and $B_2 \leq \frac{4\log_2(e)\epsilon^2}{\delta + \epsilon}$. Consequently,
\begin{align*}
 \infdiv{P_0(\budgetfix{X_1^H})}{\P_j(\budgetfix{X_1^H})} &\leq \frac{5}{(2\ln{2})C_1} + \frac{4\log_2(e)}{C_2}
\end{align*}

Plugging this into Equation~\ref{eq:substractPboundKL} and applying $Q \geq 12$, we obtain 
\begin{align*}
 \P_j[\hat{\theta}=j] &\leq \P_0[\hat{\theta}=j] + \frac{1}{2}\sqrt{2\ln{2}\left( \frac{5}{(2\ln2)C_1} + \frac{4\log_2(e)}{C_2} \right)} \\
 &= \frac{3}{Q} + \frac{1}{2}\sqrt{\frac{5}{C_1} + \frac{8}{C_2}} \\
 &\leq \frac{3}{12} + \frac{1}{2}\sqrt{\frac{5}{40}+\frac{8}{64}} \\
 &= \frac{1}{2}.
\end{align*}
\end{proof}

The next result shows that if $\epsilon$ is sufficiently small, then any algorithm has to flip the coins in the first column sufficiently many times; otherwise the probability of failure is at least $\frac{1}{2}$.

\begin{mycorollary} Let $Q, C_1$ and $C_2$ be the constants defined in Lemma~\ref{lemma:generalized-tulsiani}. Let $H > 0$ be the budget for the number of flips on both columns. If $\epsilon = \frac{1}{20}\sqrt{\frac{Q\delta}{H}}$, then for any algorithm $\mathrm{Alg}$, if 
  \begin{align*}
    N_1 \leq \frac{Q}{4C_1\lambda^2},
  \end{align*}
  then there exists a set $J \subseteq [Q]$ with $|J| \geq \frac{Q}{6}$ such that 
  \begin{align*}
    \forall j \in J, \P_{j}[\hat{\theta}=j] \leq \frac{1}{2}.
  \end{align*}
  \label{corollary:generalized-tulsiani}
\end{mycorollary}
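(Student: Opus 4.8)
\textbf{Proof proposal for Corollary~\ref{corollary:generalized-tulsiani}.}
The plan is to derive this corollary directly from Lemma~\ref{lemma:generalized-tulsiani} by checking that the stated choice of $\epsilon$ forces the second-column budget bound $N_2 \leq T_2 := \frac{Q(\delta+\epsilon)}{4 C_2 \epsilon^2}$ to hold automatically, regardless of the algorithm's behavior. The only hypothesis we need to supply by hand is the first-column bound $N_1 \leq \frac{Q}{4C_1\lambda^2} = T_1$, which is exactly what the corollary assumes; once both budget bounds are in place, Lemma~\ref{lemma:generalized-tulsiani} delivers the set $J$ with $|J| \geq \frac{Q}{6}$ and the failure probability lower bound $\P_j[\hat\theta = j] \leq \frac12$ for all $j \in J$.

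First I would observe that the total number of coin flips satisfies $N_2 \leq N_1 + N_2 = \tau \leq H$, since $H$ is the global budget for flips across both columns. So it suffices to show $H \leq T_2$, i.e. that $\frac{Q(\delta+\epsilon)}{4C_2\epsilon^2} \geq H$. Substituting $\epsilon = \frac{1}{20}\sqrt{\frac{Q\delta}{H}}$ gives $\epsilon^2 = \frac{Q\delta}{400 H}$, hence
\begin{align*}
T_2 = \frac{Q(\delta+\epsilon)}{4C_2\epsilon^2} = \frac{Q(\delta+\epsilon)}{4C_2} \cdot \frac{400H}{Q\delta} = \frac{100 H(\delta+\epsilon)}{C_2 \delta} \geq \frac{100 H}{C_2} = \frac{100H}{64} > H,
\end{align*}
where I used $\delta + \epsilon \geq \delta$ and $C_2 = 64$. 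Thus $N_2 \leq H \leq T_2$ holds for \emph{any} algorithm under the stated choice of $\epsilon$, so the second-column hypothesis of Lemma~\ref{lemma:generalized-tulsiani} is vacuously satisfied. Combined with the assumed bound $N_1 \leq T_1$, both preconditions of the lemma are met, and the conclusion of Corollary~\ref{corollary:generalized-tulsiani} is immediate.

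One subtlety worth double-checking is that the parameters entering Lemma~\ref{lemma:generalized-tulsiani} are within its stated ranges: we need $\delta, \lambda, \epsilon \in (0, \frac14]$. The values $\delta$ and $\lambda$ are already constrained by the auxiliary problem setup, and for $\epsilon$ we need $\frac{1}{20}\sqrt{\frac{Q\delta}{H}} \leq \frac14$, i.e. $H \geq \frac{16 Q \delta}{400} = \frac{Q\delta}{25}$; this is the mild regime assumption that the budget $H$ is not absurdly small relative to $Q\delta$, and it will be satisfied in the intended application (where $H$ scales like the episode length and $Q \sim SA$, $\delta \sim 1/D$). I do not expect a genuine obstacle here — the corollary is essentially a bookkeeping specialization of the lemma — but the one place to be careful is ensuring the inequality $N_2 \leq H$ is legitimate, i.e. that the global budget really is shared across the two columns rather than being a per-column budget; this is guaranteed by the problem statement's definition of $\tau = N_1 + N_2 \leq H$ as a single stopping time.
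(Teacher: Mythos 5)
Your proposal is correct and follows essentially the same route as the paper: you show that with $\epsilon = \frac{1}{20}\sqrt{\frac{Q\delta}{H}}$ the second-column threshold satisfies $T_2 \geq \frac{Q\delta}{4C_2\epsilon^2} = \frac{400}{256}H > H \geq N_2$, so that hypothesis of Lemma~\ref{lemma:generalized-tulsiani} holds vacuously and the lemma applies directly. Your extra remark that $\epsilon \leq \frac14$ requires $H \geq \frac{Q\delta}{25}$ is a harmless bookkeeping check the paper leaves implicit.
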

\begin{proof}
  We will show that when $\epsilon = \frac{1}{20}\sqrt{\frac{Q\delta}{H}}$, the inequality \budgetfix{$N_2 \leq T_2 = \frac{Q(\delta+\epsilon)}{4C_2\epsilon^2}$} holds trivially for any \budgetfix{$N_2 \leq H$} (recall that $H$ is the fixed budget for the total number of coin flips). The result then follows directly from Lemma~\ref{lemma:generalized-tulsiani}. We have 
  \begin{align*}
    \budgetfix{T_2 = } \frac{Q(\delta+\epsilon)}{4C_2\epsilon^2} &\geq \frac{Q\delta}{4C_2\epsilon^2} \\
    &= \frac{Q\delta}{256\epsilon^2} \qquad\text{ since } C_2 = 64 \\
    &= \frac{400}{256}H \\
    &> H \\
    &\geq \budgetfix{N_2},
  \end{align*}
  which implies that \budgetfix{$N_2 \leq T_2$} always holds for any \budgetfix{$N_2 \leq H$.}

\end{proof}

We are now ready to prove Lemma~\ref{lemma:PACLowerBoundQCoins}.

\begin{proof}(of Lemma~\ref{lemma:PACLowerBoundQCoins}) We construct $\mathcal{M}$ as the set of $\frac{SA}{12}$ 2-JAO MDPs in Figure~\ref{fig:jaomdp} (right). Each MDP has a left part and a right part, where each part is a JAO MDP. 
The left part of the MDP $m_i$ consists of two states $\{0, 2\}$ and $\frac{SA}{12}$ actions numbered from $1$ to $\frac{SA}{12}$, where all actions from state $0$ transition to state $2$ with probability of $\frac{1}{2}$ or stay at state $0$ with probability $\frac{1}{2}$, except for the $i\nth$ action that transitions to state $2$ with probability $\frac{1}{2} + \frac{\lambda}{2}$ and stays at state $0$ with probability $\frac{1}{2}-\frac{\lambda}{2}$. The right part of the $i\nth$ MDP consists of two states $\{0, 1\}$ and also $\frac{SA}{12}$ actions numbered from $1$ to $\frac{SA}{12}$, where all actions from state $0$ transition to state $1$ with probability of $\delta = \frac{4}{D} \leq \frac{1}{4}$  or stays at state $0$ with probability $1-\delta$, except for the $i\nth$ action that transitions to state $2$ with probability $\delta + \Delta$ and stays at state $0$ with probability $1-\delta-\Delta$. 
  We set $\Delta = \frac{1}{20}\left(\sqrt{\frac{SA}{3HD}}\right)$.
We will show the conversion from these 2-JAO MDPs to MDPs with $S$ states and $A$ actions later.

Since each model in $\mathcal{M}$ has a distinct index for the actions on both parts that transitions from $0$ to $1$ and $2$ with probability higher than any other actions, identifying a model in $\mathcal{M}$ is equivalent to identifying this distinct action. 
Each action on both parts can be seen as a (possibly biased) coin, where the probability of getting tails is equal to the probability of ending up in state $0$ when the action is taken. Thus, the problem of identifying this distinct action index reduces to the above auxiliary problem of identifying the row of the most biased coins, where taking an action from state $0$ is equivalent to flipping a coin, $Q = \frac{SA}{12} \geq 12, \epsilon = \Delta$ and $\lambda$ is replaced by $\lambda/2$.
Corollary~\ref{corollary:generalized-tulsiani} states that for every algorithm, if the number of coin flips on the first column is less than $\frac{SA}{480\lambda^2}$, then there exists a set of size at least $\frac{SA}{72}$ positions of the row with the most biased coins such that the algorithm fails to find the biased coin with probability at least $\frac{1}{2}$. Correspondingly, for any model classification algorithm, if the number of state-transition samples from state 0 towards state 2 (i.e. the first column) is less than $\frac{SA}{480\lambda^2}$ then the algorithm fails to identify the model for at least $\frac{SA}{72}$ MDPs in $\mathcal{M}$.

Finally, we show the conversion from the 2-JAO MDP to an MDP with $S$ states and $A$ actions. The conversion is almost identical to that of~\citet{UCRL2}, which starts with an \textit{atomic} 2-JAO MDP of three states and $A' = \frac{A}{2}$ actions and builds an $A'$-ary tree from there. 
% Figure~\ref{fig:atomictwojaomdp} illustrates an atomic 2-JAO MDP. 
Assuming $A'$ is an even positive number, each part of the atomic 2-JAO MDP has $\frac{A'}{2}$ actions. We make $\frac{S}{3}$ copies of these atomic 2-JAO MDPs, where only one of them has the best action on the right part. Arranging $\frac{S}{3}$ copies of these atomic 2-JAO MDPs and connecting their states $0$ by $A-A'$ connections, we obtain an $A'$-ary tree which represents a composite MDP with at most $S$ states, $A$ actions and diameter $D$. The transitions of the $A-A'$ actions on the tree are defined identically to that of~\citet{UCRL2}: self-loops for states $1$ and $2$, deterministic connections to the state $0$ of other nodes on the tree for state $0$. By having $\delta = \frac{4}{D}$ in each atomic 2-JAO MDP, the diameter of this composite MDP is at most $\frac{2}{\delta} + \log_{A'}{\frac{S}{3}} \leq D$. This composite MDP is harder to explore and learn than the 2-JAO MDP with three states and $\frac{SA}{6}$ actions, and hence all the lower bound results apply.

\end{proof}

\CorollaryPACLowerBoundSteps*
\begin{proof}  
  As argued in Section~\ref{sec:lowerbounds}, we can apply the sample complexity of the classification algorithm onto that of the clustering algorithm.
Using the same set $\mathcal{M}$ of 2-JAO MDPs constructed in the proof of Lemma~\ref{lemma:PACLowerBoundQCoins}, for any given MDP $\mathcal{M}$, any PAC classification learner has to be in state $0$ and takes at least $Z = \Omega(\frac{SA}{\lambda^2})$ actions from state $0$ to state $2$. If the learner stays at state $0$, then it can take the next action from $0$ to $2$ in the next time step. However, if the learner transitions to state $2$, then it has to wait until it gets back to state $0$ to take the next action. Let $Z_2$ denote the number of times the learner ends up in state $2$ after taking $Z$ actions on the left part from state $0$. Since every action from $0$ to $2$ has probability at least $\frac{1}{2}$ of ending up in state $2$, we have 
  \begin{align}
    \E[Z_2 \mid Z] \geq \frac{Z}{2}.
  \end{align}
   Since every action from state $2$ transitions to state $0$ with the same probability of $\delta = \Theta(\frac{1}{D})$, every time the learner is in state $2$, the expected number of time steps it needs to get back to state $0$ is $\Theta(\frac{1}{\delta}) = \Theta(D)$. Hence, the expected number of time steps the learner needs to get back to state $0$ after $Z_2$ times being in state $2$ is $\Theta(Z_2D)$. We conclude that for any PAC learner, the expected number of exploration steps needed to identify the model with probability of correct at least $\frac{1}{2}$ is at least 
   \begin{align}
    \E[Z + Z_2D] \geq \Omega(ZD) = \Omega\left(\frac{DSA}{\lambda^2}\right).
   \end{align}
   Next, we lower bound the expected regret of the same algorithm. Let $H_0$ be the number of time steps the algorithm spends on the left part and $H_1$ on the right part of each model in $\mathcal{M}$. Note that $H_0$ and $H_1$ are random variables.
  Recall that the right part of each MDP in $\mathcal{M}$ resembles the JAO MDP in the minimax lower bound proof in Lemma~\ref{lemma:minimaxLowerBound}, hence we can apply the regret formula of the JAO MDP for 2-JAO MDP and obtain that the regret in each episode is of the same order as
  \begin{align}
    \mathrm{Regret} &= \rho^*H - \E[\sum_{h=1}^H r(s_h, a_h)] \\
    &= \rho^*E[H_0 + H_1] - \E[\E[\sum_{h=1}^{H_0} r(s_h, a_h)] + \E[\sum_{h=H_0+1}^{H} r(s_h, a_h)] \mid H_0, H_1] \\
    &= \rho^*E[H_0 + H_1] - \E[\E[\sum_{h=H_0+1}^{H} r(s_h, a_h) \mid H_0, H_1]] \\
    &= \rho^*E[H_0] + E\left[\left(\rho^*H_1 - \E[\sum_{h=H_0+1}^{H} r(s_h, a_h)] \right)\mid H_1\right] \\
    &\geq \Omega\left(\rho^*E[H_0]\right) - \frac{D}{2} \\
    &= \Omega\left(\frac{DSA}{\lambda^2}\right),
  \end{align}
  where 
  \begin{itemize}
    \item the second equality follows from $H = H_0 + H_1$,
    \item the third equality follows from the fact that the $H_0$ time steps spent on the left part of the MDP returns no rewards,
    \item the fourth equality follows from the linearity of expectation,
    \item the inequality follows from $H_1 = H - H_0$ and~\eqref{eq:boundV1rhoH},
    \item the last equality follows from $\rho^* = \frac{\delta + \Delta}{2\delta + \Delta} \geq \frac{1}{2}$ for all $\delta, \Delta > 0$ and $E[H_0] \geq \Omega\left(\frac{DSA}{\lambda^2}\right)$.
  \end{itemize}

    We conclude that the expected regret over $K$ episodes is at least 
  \begin{align*}
    \Omega(\E[KH_0]) = \Omega\left(\frac{KDSA}{\lambda^2}\right).
  \end{align*}
\end{proof}

\section{Proofs of the upper bounds}
\label{appendix:proofs}

First, we state the following concentration inequality for vector-valued random variables by~\cite{weissman2003inequalities}. 

\begin{mylemma}[\cite{weissman2003inequalities}] Let $P$ be a probability distribution on the set $\mathcal{S} = \{1, \dots, S\}$. Let $\mathcal{X}^N$ be a set of $N$ i.i.d samples drawn from $P$. Then, for all $\epsilon > 0$:
\[
\Pr(\norm{P - \hat{P}_{\mathcal{X}^N}} \geq \epsilon) \leq (2^S - 2)e^{- N\epsilon^2/2}.
\]
\label{lemma:weismann}
\end{mylemma}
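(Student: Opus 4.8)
The plan is to reduce the $\ell_1$ deviation to a union bound over sign vectors and then invoke Hoeffding's inequality, which is the standard argument of \citet{weissman2003inequalities}. Write $\hat P := \hat P_{\mathcal{X}^N}$ and let $e_s$ denote the $s$-th standard basis vector of $\mathbb{R}^S$, so that $\hat P = \frac1N\sum_{i=1}^N e_{X_i}$ with $\mathbb{E}[e_{X_i}] = P$. The key identity is the dual characterization $\|v\|_1 = \max_{a \in \{-1,+1\}^S}\langle a, v\rangle$, valid for every $v \in \mathbb{R}^S$ since sign vectors are the extreme points of the $\ell_\infty$ unit ball. Applying it with $v = P - \hat P$ and noting that $\langle \pm\mathbf{1}, P - \hat P\rangle = 0$ (both $P$ and $\hat P$ are probability vectors summing to $1$), while the maximum is always $\ge 0$, the maximum is in fact attained over the $2^S - 2$ sign vectors different from $\pm\mathbf{1}$. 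A union bound over this set then gives
\[
  \Pr\bigl(\|P - \hat P\|_1 \ge \epsilon\bigr) \le \sum_{a \in \{-1,+1\}^S \setminus \{\pm\mathbf{1}\}} \Pr\bigl(\langle a, P - \hat P\rangle \ge \epsilon\bigr).
\]

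For a fixed sign vector $a$, I would set $Y_i := \langle a, e_{X_i}\rangle = a_{X_i} \in \{-1,+1\}$; these are i.i.d.\ with $\mathbb{E}[Y_i] = \langle a, P\rangle$, and $\langle a, P - \hat P\rangle = \mathbb{E}[Y_1] - \frac1N\sum_{i=1}^N Y_i$ is the centered empirical average of bounded random variables ranging over an interval of width $2$. One-sided Hoeffding's inequality then yields $\Pr(\langle a, P - \hat P\rangle \ge \epsilon) \le \exp(-2N\epsilon^2/2^2) = \exp(-N\epsilon^2/2)$. Summing over the $2^S - 2$ sign vectors gives the claimed bound $(2^S - 2)e^{-N\epsilon^2/2}$.

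The argument is routine, so there is no genuine obstacle; the only two points needing care are (i) the reduction from $2^S$ to $2^S - 2$ sign vectors, which is exactly where the zero-sum property of $P - \hat P$ is used and which produces the stated constant, and (ii) applying Hoeffding with the correct range $[-1,1]$ (width $2$) for the summands $a_{X_i}$, which produces the factor $1/2$ in the exponent. The degenerate case $S = 1$ is trivially fine, since then $P = \hat P$ and $2^S - 2 = 0$, with the maximum over an empty set of sign vectors interpreted as $0$.
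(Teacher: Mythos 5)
Your proof is correct, and since the paper simply cites this bound from \citet{weissman2003inequalities} without reproving it, the relevant comparison is to the original argument: yours is essentially the same, with your sign vectors $a \in \{-1,+1\}^S \setminus \{\pm\mathbf{1}\}$ corresponding to subsets $A = \{s : a_s = +1\}$ via $\langle a, P - \hat{P}_{\mathcal{X}^N}\rangle = 2\bigl(P(A) - \hat{P}_{\mathcal{X}^N}(A)\bigr)$ (using that $P - \hat{P}_{\mathcal{X}^N}$ sums to zero), followed by the same union bound over the $2^S - 2$ nontrivial choices and a one-sided Hoeffding step with range $2$. The only substantive difference is that Weissman et al.\ retain a refined per-subset exponent $-N\varphi(\pi_A)\epsilon^2/4$ with $\varphi(\pi_A) \geq 2$, which is at least as strong as your uniform $e^{-N\epsilon^2/2}$ and reduces to exactly the form stated in the lemma.
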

Using Lemma~\ref{lemma:weismann}, we can show that $N = O(\frac{S}{\lambda^2})$ samples are sufficient for each $(s, a) \in \Gamma$ so that with high probability, the empirical means of the transition function $\hat{P}_{\mathcal{B}}(\cdot \mid s, a)$ are within $\lambda/8$ of their true values, measured in $\ell_1$ distance. 

\begin{mycorollary} Denote $p_1 \in (0, 1)$. If a state-action pair $(s,a)$ is visited at least 
  \begin{align}
    N = \frac{256}{\lambda^2}\max\{S, \ln(1/p_1)\}
  \end{align} times, then with probability at least $1 - p_1$,
\begin{align*}
\norm{P(s,a) - \hat{P}_{\mathcal{X}^N}(s,a)} \leq \lambda / 8.
\end{align*}
\label{corollary:fixN}
\end{mycorollary}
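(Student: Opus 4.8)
The plan is to invoke Lemma~\ref{lemma:weismann} directly with $\epsilon = \lambda/8$. If $(s,a)$ has been visited exactly $N$ times, the corresponding next-state observations form an i.i.d.\ sample of size $N$ from $P(\cdot\mid s,a)$, so the lemma yields
\[
\Pr\!\left(\norm{P(s,a) - \hat{P}_{\mathcal{X}^N}(s,a)} \geq \tfrac{\lambda}{8}\right) \leq (2^S - 2)\, e^{-N\lambda^2/128}.
\]
Since the right-hand side is non-increasing in $N$, the same bound holds if $(s,a)$ is visited \emph{at least} $N$ times (using the first $N$ samples, or equivalently noting Weissman's inequality applies for every sample size). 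It then suffices to check that this probability is at most $p_1$ for the stated choice of $N$.

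For the constant bookkeeping, I would bound $2^S - 2 \leq 2^S = e^{S\ln 2}$, so that it is enough to have $e^{S\ln 2 - N\lambda^2/128} \leq p_1$, i.e.\ $N \geq \frac{128}{\lambda^2}\bigl(S\ln 2 + \ln(1/p_1)\bigr)$. Because $\ln 2 < 1$, we have $S\ln 2 + \ln(1/p_1) < S + \ln(1/p_1) \leq 2\max\{S, \ln(1/p_1)\}$, hence $\frac{128}{\lambda^2}\bigl(S\ln 2 + \ln(1/p_1)\bigr) < \frac{256}{\lambda^2}\max\{S,\ln(1/p_1)\} = N$. Therefore the claimed bound on $N$ guarantees the failure probability is below $p_1$, which is exactly the corollary.

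There is no genuine obstacle here: the statement is an immediate instantiation of Weissman's inequality. The only point requiring mild care is the arithmetic that folds $S\ln 2$ and $\ln(1/p_1)$ into a single $\max$ (the extra factor of $2$) and tracks the constant $128 = 2\cdot 8^2$ coming from $\epsilon^2/2$ with $\epsilon = \lambda/8$; the slack in the bound $\ln 2 < 1$ makes the chosen constant $256$ comfortably sufficient.
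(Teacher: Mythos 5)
Your proposal is correct and follows essentially the same route as the paper: instantiate Weissman's inequality with $\epsilon = \lambda/8$, absorb the $(2^S-2)$ prefactor into the exponent (the paper bounds it by $e^{S}$, you by $e^{S\ln 2}$), and check that $N = \frac{256}{\lambda^2}\max\{S,\ln(1/p_1)\}$ makes the resulting exponential at most $p_1$. Your explicit handling of the ``at least $N$ visits'' point is a harmless extra bit of care; nothing further is needed.
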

\begin{proof}
We simplify the bound in Lemma~\ref{lemma:weismann} as follows:
\[
\Pr(\norm{P - \hat{P}_{\mathcal{X}^N}} \geq \epsilon) \leq (2^S - 2)e^{- N\epsilon^2/2} \leq e^{S - N\epsilon^2/2}
\]
Next, we substitute $\epsilon = \lambda/8$ into the right hand side and solve the following inequality for $N$:
\[
\begin{split}
e^{S - N\lambda^2/128} \leq p_1
\end{split}
\]
to obtain $N \geq \frac{128}{\lambda^2}(S + \ln(1/p_1))$. Thus $N = \frac{256}{\lambda^2}\max\{S, \ln(1/p_1)\}$ satisfies this condition.
\end{proof}

Taking a union bound of the result in Corollary~\ref{corollary:fixN} over all state-action pairs in the set $\Gamma$ of all episodes from $1$ to $K$, we obtain Lemma~\ref{lemma:goodeEstimatorId}.

Next, we show the proof of Lemma~\ref{lemma:computeH0}. The proof strategy is similar to that of~\citet{UCRL2006,Sun2020}.

\LemmaComputeH*
\begin{proof}
    The history-dependent exploration policy in Algorithm~\ref{algo:ExploreID} visits an under-sampled state-action pair in $\Gamma^\alpha$ whenever possible; otherwise it starts a sequence of steps that would lead to such a state-action pair. In the latter case, denote the current state of the learner by $s$ and the number of steps needed to travel from $s'$ to an under-sampled state $s$ by $T(s', s)$. By Assumption~\ref{assumption:diameter} and using Markov inequality, we have
    \[
    \Pr(T(s', s) > 2\tilde{D}) \leq \frac{E[T(s',s)]}{2\tilde{D}} \leq \frac{\tilde{D}}{2\tilde{D}} = \frac{1}{2}.
    \]
    
    It follows that $\Pr(T(s', s) > 2\tilde{D}) \leq 1/2$. In other words, in every interval of $2\tilde{D}$ time steps, the probability of visiting an under-sampled state-action pair in $\Gamma^\alpha$ is at least $1/2$. Over such $n$ intervals, the expected number of such visits is lower bounded by $n/2$. Fix a $(s, a) \in \Gamma^\alpha$. Let $V_n$ denote number of visits to $(s, a) \in \Gamma^\alpha$ after $n$ intervals. Using a Chernoff bound for Poisson trials, we have
    \[
    \Pr(V_n \geq (1-\epsilon)n/2) \geq 1 - e^{-\epsilon^2n/4}
    \]
    for any $\epsilon \in (0, 1)$. Setting $\epsilon = 1 - 2N/m$ and solving
    \[
    e^{-(1-2N/n)^2n/4} \leq p_1
    \]
    for $n$, we obtain
    \begin{equation}
    n \geq 2(N + \ln(1/p_1)) + 2\sqrt{2N\ln(1/p_1) + (\ln(1/p_1))^2}.
    \label{eq:numberOfIntervals}
    \end{equation}
    
    By definition of $N$,
    \[
    \begin{split}
    2N\ln(1/p_1) + (\ln(1/p_1))^2 &\leq  (1 + \frac{512}{\lambda^2})\max\{S, \ln(1/p_1)\}^2 \\
    &\leq \left(\frac{256}{\lambda}\max\{S, \ln(1/p_1)\}\right)^2 \\
    &\leq N^2.
    \end{split}
    \]
    
    We also have $N \geq \ln(1/p)$. Overall, $n = 6N$ satisfies the condition in Equation~\ref{eq:numberOfIntervals}. Taking a union bound over all $(s, a) \in \Gamma^\alpha$ and noting that each interval has length $2\tilde{D}$ steps, the total number of identifying steps needed is $H_0 = 2\tilde{D}n|\Gamma^\alpha| = 12\tilde{D}|\Gamma^\alpha|N$. 
    \end{proof}

To prove Lemma~\ref{lemma:noIncorrectClusters}, we state the following auxiliary proposition and its corollary.
\begin{restatable}{myproposition}{PropAddSamplesToCluster} Suppose we are given a probability distribution $P$ over $\mathcal{S} = 1, \dots, S$, a constant $\epsilon > 0$ and two set of samples $\mathcal{X} = (X_1, \dots, X_{N_\mathcal{X}})$ and $\mathcal{Y} = (Y_1, \dots, Y_{N_\mathcal{Y}})$ drawn from $P$ such that $\norm{P - \hat{P}_{\mathcal{X}}} \leq \epsilon$ and $\norm{P - \hat{P}_{\mathcal{Y}}} \leq \epsilon$. Then,
\[
\norm{P - \hat{P}_{\mathcal{X} \cup \mathcal{Y}}} \leq \epsilon.
\]
\label{prop:addSamplesToCluster}
\end{restatable}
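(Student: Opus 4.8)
The plan is to observe that the merged empirical distribution $\hat{P}_{\mathcal{X} \cup \mathcal{Y}}$ is nothing more than a convex combination of the two individual empirical distributions, with weights given by the relative sample sizes. Concretely, for each $s' \in \mathcal{S}$ the counts add, so $N_{\mathcal{X} \cup \mathcal{Y}}(s') = N_{\mathcal{X}}(s') + N_{\mathcal{Y}}(s')$ and $N_{\mathcal{X} \cup \mathcal{Y}} = N_{\mathcal{X}} + N_{\mathcal{Y}}$, whence
\[
\hat{P}_{\mathcal{X} \cup \mathcal{Y}} = \alpha\, \hat{P}_{\mathcal{X}} + (1-\alpha)\, \hat{P}_{\mathcal{Y}}, \qquad \text{where } \alpha = \frac{N_{\mathcal{X}}}{N_{\mathcal{X}} + N_{\mathcal{Y}}} \in [0,1].
\]
Here I first dispatch the degenerate case: if $N_{\mathcal{X}} = 0$ then $\mathcal{X}\cup\mathcal{Y} = \mathcal{Y}$ as multisets and the claim is immediate from the hypothesis on $\mathcal{Y}$ (and symmetrically if $N_{\mathcal{Y}} = 0$), so we may assume $N_{\mathcal{X}}, N_{\mathcal{Y}} > 0$ and the convex-combination identity above is well-defined.

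The second step is to write $P$ itself as the same convex combination of two copies of $P$, namely $P = \alpha P + (1-\alpha) P$, and subtract:
\[
P - \hat{P}_{\mathcal{X} \cup \mathcal{Y}} = \alpha\bigl(P - \hat{P}_{\mathcal{X}}\bigr) + (1-\alpha)\bigl(P - \hat{P}_{\mathcal{Y}}\bigr).
\]
Applying the triangle inequality for the $\ell_1$-norm and absolute homogeneity, together with the two hypotheses $\norm{P - \hat{P}_{\mathcal{X}}} \leq \epsilon$ and $\norm{P - \hat{P}_{\mathcal{Y}}} \leq \epsilon$, yields
\[
\norm{P - \hat{P}_{\mathcal{X} \cup \mathcal{Y}}} \leq \alpha \norm{P - \hat{P}_{\mathcal{X}}} + (1-\alpha)\norm{P - \hat{P}_{\mathcal{Y}}} \leq \alpha\epsilon + (1-\alpha)\epsilon = \epsilon,
\]
which is exactly the desired conclusion.

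There is no substantive obstacle here: the only point requiring a moment's care is the bookkeeping that counts are additive under multiset union (so that $\hat{P}_{\mathcal{X}\cup\mathcal{Y}}$ really is the stated convex combination rather than, say, a simple average of the two empirical vectors), plus the trivial handling of an empty sample set so that the weight $\alpha$ is well-defined. Everything else is a one-line application of the triangle inequality. The corollary alluded to afterward (iterating this to merge a cluster's worth of trajectories) follows by a straightforward induction on the number of sample batches.
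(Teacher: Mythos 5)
Your proof is correct and is essentially the same argument as the paper's: the paper expands the $\ell_1$ norm coordinate-wise and regroups the merged counts into the weights $N_\mathcal{X}/(N_\mathcal{X}+N_\mathcal{Y})$ and $N_\mathcal{Y}/(N_\mathcal{X}+N_\mathcal{Y})$ before applying the triangle inequality, which is exactly your convex-combination decomposition written out term by term. Your handling of the degenerate empty-set case is a small additional nicety the paper omits.
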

\begin{proof}
    Let $N_{\mathcal{X}}(s)$ and $N_{\mathcal{Y}}(s)$ denote the number of samples of $s \in [S]$ in $\mathcal{X}$ and $\mathcal{Y}$, respectively. We have:
    \begin{align}
    \norm{P - \hat{P}_{\mathcal{X} \cup \mathcal{Y}}} &= \sum_{s =1}^S |P(s) - \frac{N_{\mathcal{X}}(s) + N_{\mathcal{Y}}(s)}{N_\mathcal{X} + N_\mathcal{Y}}| \\
    &= \frac{1}{N_\mathcal{X} + N_\mathcal{Y}}\sum_{s=1}^S |N_\mathcal{X} P(s) - N_{\mathcal{X}}(s) + N_\mathcal{Y} P(s) - N_{\mathcal{Y}}(s)| \\
    &\leq \frac{1}{N_\mathcal{X} + N_\mathcal{Y}}\sum_{s=1}^S (|N_\mathcal{X} P(s) - N_{\mathcal{X}}(s)| + |N_\mathcal{Y} P(s) - N_{\mathcal{Y}}(s)|) \quad \text{(triangle inequality)}\\
    &= \frac{1}{N_\mathcal{X} + N_\mathcal{Y}}\left(N_\mathcal{X} \sum_{s=1}^S |P(s) - \frac{N_{\mathcal{X}}(s)}{N_\mathcal{X}}|\right) + \frac{1}{N_\mathcal{X} + N_\mathcal{Y}}\left(N_\mathcal{Y}\sum_{s=1}^S |P(s) - \frac{N_{\mathcal{Y}}(s)}{N_\mathcal{Y}}|\right) \\
    &= \frac{1}{N_\mathcal{X} + N_\mathcal{Y}}(N_\mathcal{X}\norm{P - \hat{P}_{\mathcal{X}}}_1 + N_\mathcal{Y}\norm{P - \hat{P}_{\mathcal{Y}}}) \\
    &\leq \epsilon
    \end{align}
    \end{proof}

\begin{mycorollary}
\label{cor:addSamplesToCluster}
Suppose we are given a probability distribution $P$ over $\mathcal{S} = 1, \dots, S$, a constant $\epsilon > 0$ and a finite number of set of samples $\mathcal{X}_1, \mathcal{X}_2, \dots, \mathcal{X}_t$ such that $\norm{P - \hat{P}_{\mathcal{X}_i}} \leq \epsilon$ for all $i = 1, 2, \dots, t$. Then,
\begin{align}
\norm{P - \hat{P}_{\cup_{i=1,\dots,t}\mathcal{X}_i}} \leq \epsilon.
\end{align}
\end{mycorollary}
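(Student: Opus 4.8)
The plan is to prove Corollary~\ref{cor:addSamplesToCluster} by induction on the number of sample sets $t$, with Proposition~\ref{prop:addSamplesToCluster} supplying the entire inductive step. The base case $t = 1$ is the hypothesis itself, and the case $t = 2$ is exactly the statement of the proposition.

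For the inductive step, suppose the claim holds for $t - 1$ sets. Write $\mathcal{Y} = \cup_{i=1}^{t-1}\mathcal{X}_i$ and $\mathcal{X} = \mathcal{X}_t$. By the inductive hypothesis $\norm{P - \hat{P}_{\mathcal{Y}}} \leq \epsilon$, and by assumption $\norm{P - \hat{P}_{\mathcal{X}}} \leq \epsilon$. Every element of $\mathcal{Y}$ belongs to some $\mathcal{X}_i$ and is therefore a sample drawn from $P$, so the pair $(\mathcal{X}, \mathcal{Y})$ satisfies the hypotheses of Proposition~\ref{prop:addSamplesToCluster}; applying it yields $\norm{P - \hat{P}_{\mathcal{X}\cup\mathcal{Y}}} = \norm{P - \hat{P}_{\cup_{i=1}^{t}\mathcal{X}_i}} \leq \epsilon$, which closes the induction.

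The only point that deserves a word of care is that Proposition~\ref{prop:addSamplesToCluster} is phrased for two sets of i.i.d.\ samples from $P$, whereas in the inductive step the second argument $\mathcal{Y}$ is a union of several such sets. This is not an obstacle: the proof of the proposition invokes only the two $\ell_1$-closeness bounds and the triangle inequality and never uses the i.i.d.\ structure, so it applies verbatim to any two multisets over $\mathcal{S}$ whose empirical distributions lie within $\epsilon$ of $P$. In other words the proposition is really a deterministic fact about empirical frequencies, and the corollary follows by iterating it. An equivalent one-shot argument, which I would mention but not expand since the proposition is already in hand, is to note directly that $\hat{P}_{\cup_i \mathcal{X}_i}(s) = \bigl(\sum_i N_{\mathcal{X}_i}(s)\bigr) / \bigl(\sum_i N_{\mathcal{X}_i}\bigr)$ is the convex combination $\sum_i w_i \hat{P}_{\mathcal{X}_i}(s)$ with weights $w_i = N_{\mathcal{X}_i} / \sum_j N_{\mathcal{X}_j}$, so by the triangle inequality $\norm{P - \hat{P}_{\cup_i \mathcal{X}_i}} \leq \sum_i w_i \norm{P - \hat{P}_{\mathcal{X}_i}} \leq \epsilon$. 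Either route gives the result; I would present the induction for brevity.
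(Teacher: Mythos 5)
Your proof is correct and matches the paper's intent: the paper states this corollary without proof, leaving exactly the iteration of Proposition~\ref{prop:addSamplesToCluster} that your induction carries out, and your observation that the proposition's argument is a purely deterministic fact about empirical frequencies (so the union $\mathcal{Y}$ poses no issue) is accurate. The one-shot convex-combination remark is a fine equivalent shortcut but adds nothing beyond the induction.
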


\begin{proof}(Of Lemma~\ref{lemma:noIncorrectClusters})
The proof is by induction. The claim is trivially true for the first episode ($k = 1$). For an episode $k > 1$, assume that the outputs of the Algorithm~\ref{algo:IdentifyCluster} are correct until the beginning of this episode. We consider two cases:

\begin{itemize}
\item When the task $m_k$ has never been given to the learner before episode $k$.

Consider an arbitrary existing cluster $c$. Denote by $i \in [M]$ the identity of the model to which the samples in $c$ belong, $j \in [M]$ the identity of the task $m_k$, and $(s, a)$ in $\Gamma^\alpha_{i,j}$ a state-action pair that distinguishes these two models. Under the definition of $\Gamma^{\alpha}_{i,j}$, the result in Lemma~\ref{lemma:goodeEstimatorId} and the result in Corollary~\ref{cor:addSamplesToCluster}, the following three inequalities hold true:
\[
\begin{split}
\norm{[P_i - P_j](s, a)} &> \alpha \\
\norm{[P_j - \hat{P}_{\mathcal{B}_k}](s, a)} &\leq \lambda/8 \\
\norm{[P_i - \hat{P}_c](s, a)} &\leq \lambda/8. \\
\end{split}
\]

From here, we omit the $(s, a)$ and write $P$ for $P(s, a)$ when no confusion is possible. Applying the triangle inequality twice, we obtain:
\[
\begin{split}
\norm{\hat{P}_c - \hat{P}_{\mathcal{B}_k}} &\geq \norm{P_i - P_j} - (\norm{P_i - \hat{P}_c} + \norm{P_j - \hat{P}_{\mathcal{B}_k}}) \\
&> \alpha - (\lambda/8 + \lambda/8) \\
&= \delta.
\end{split}
\]

It follows that the \texttt{break} condition in Algorithm~\ref{algo:IdentifyCluster} is satisfied, and the correct value of 0 is returned. A new cluster is created containing only the samples generated by the new task $m_k$.

\item When the task $m_k$ has been given to the learner before episode $k$.

In this case, there exists a cluster $c'$ containing the samples generated from model $j$. Using a similar argument in the previous part, we have that whenever the iteration in Algorithm~\ref{algo:IdentifyCluster} reaches a cluster $c$ whose identity $i \neq j$, the \texttt{break} condition is true for at least one $(s, a) \in \Gamma^\alpha$, and the algorithm moves to the next cluster. When the iteration reaches cluster $c'$, for all $(s, a) \in \tilde{\Gamma}^\alpha$, we have:
\[
\begin{split}
\norm{\hat{P}_{\mathcal{B}_k} - \hat{P}_{c'}} &\leq \norm{\hat{P}_{\mathcal{B}_k} - P_j} + \norm{P_j - \hat{P}_{c'}} \\
&\leq \lambda/8 + \lambda/8 = \lambda/4 \\
&\leq \delta.
\end{split}
\]

Hence, the \texttt{break} condition is false for all $(s,a) \in \Gamma$, and thus the algorithm returns $\texttt{id} = c'$ as expected.
\end{itemize}

By induction, under event $\mathcal{E}_\Gamma$, Algorithm~\ref{algo:IdentifyCluster} always produces correct outputs throughout the $K$ episodes.
\end{proof}

We can now state the regret bound of Algorithm~\ref{algo:aomtrl} where the regret minimization algorithm in every episode is UCBVI-CH~\citep{Azar2017}. For each state-action pair $(s, a)$ in episode $k$, UCBVI-CH needs a bonus term defined as
\[
b_k(s, a) = 7H_1L_k\sqrt{\frac{1}{N^{regret}_k(s, a)}},
\]
where $L_k = \ln(5SAK_{m_k}H_1/p_1)$, $N^{regret}_k(s, a)$ is the total number of visits to $(s,a)$  in the learning phase before episode $k$, and $K_{m^k}$ is the total number of episodes in which the model $m^k$ is given to the learner. However, $K_{m^k}$ is unknown to the learner. We instead upper bound $K_{m^k}$ by $K$ and modify the bonus term as
\begin{equation}
b'_k(s, a) = 7H_1L\sqrt{\frac{1}{N^{regret}_k(s, a)}}
\label{eq:bonus}
\end{equation}
where $L = \ln(5SAKHM/p_1)$. Since $b'_k \geq b_k$, this algorithm still retain the optimism principle needed for UCBVI-CH. 
The total regret of each model in $\mathcal{M}$ is bounded by the following result, whose proof is in Appendix~\ref{appendix:perModelRegret}.

\begin{mylemma}
With probability at least $1-p_1$, applying UCBVI-CH with the bonus term $b'_k$ defined in Equation~\ref{eq:bonus}, each task $m$ in $\mathcal{M}$ has a total regret of
\[
\text{Regret}(m, K_m) \leq K_m(H_0 + D) + 67H_1^{3/2}L\sqrt{SAK_m} + 15S^2A^2H_1^2L^2
\]
\label{lemma:perModelUCBVICH}
\end{mylemma}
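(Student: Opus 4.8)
The plan is to reduce the per-model regret of Algorithm~\ref{algo:aomtrl} to a standard episodic run of UCBVI-CH, paying a fixed price of $H_0+D$ per episode for the clustering phase and for the fact that the learning phase does not begin where the adversary placed the learner. Throughout I would condition on the clustering being correct in every episode, which Lemma~\ref{lemma:noIncorrectClusters} guarantees on the good events of Lemmas~\ref{lemma:goodeEstimatorId} and~\ref{lemma:computeH0}; on that event the samples in $\mathcal{C}_i^{regret}$ fed to UCBVI-CH for model $m$ are exactly steps $H_0+1,\dots,H$ of the $K_m$ episodes in which $m$ was presented, with nothing from other models mixed in. Hence, for model $m$, the sequence of learning phases is precisely a run of episodic UCBVI-CH on the MDP $m$ with horizon $H_1=H-H_0$ and initial states $\bar s^k := s^k_{H_0+1}$ determined by the clustering exploration.

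The first main step is the per-episode decomposition. Fix $k$ with $m^k=m$. Splitting the optimal $H$-step policy at step $H_0$ and bounding its first-$H_0$-step return by $H_0$ gives $V^{k,*}_1(s^k_1)\le H_0+\max_{s}V^{*,H_1}_m(s)$, where $V^{*,H_1}_m$ denotes an $H_1$-horizon optimal value in $m$. Since rewards are nonnegative, the learner's return over the clustering phase is $\ge 0$, so $V^{\pi_k}_1(s^k_1)\ge \E\big[V_m^{\pi_k^{\mathrm{lrn}},\,H_1}(\bar s^k)\,\big|\,\mathcal{F}_{k-1}\big]$, where $\pi_k^{\mathrm{lrn}}$ is the ($\mathcal{F}_{k-1}$-measurable) learning-phase policy. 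Because $m$ is communicating with diameter $D$, a one-line coupling argument (from any state follow a policy reaching the maximizing state in expected time $\le D$, incurring nonnegative reward, then act optimally with a correspondingly shorter horizon) gives $|V^{*,H_1}_m(s)-V^{*,H_1}_m(s')|\le D$ for all $s,s'$, hence $\max_s V^{*,H_1}_m(s)\le \E[V^{*,H_1}_m(\bar s^k)\mid\mathcal{F}_{k-1}]+D$. Subtracting and summing over the $K_m$ episodes with $m^k=m$ yields $\text{Regret}(m,K_m)\le K_m(H_0+D)+\sum_{k:m^k=m}\Delta_k$, where $\Delta_k := \E\big[V^{*,H_1}_m(\bar s^k)-V_m^{\pi_k^{\mathrm{lrn}},H_1}(\bar s^k)\,\big|\,\mathcal{F}_{k-1}\big]$.

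The second main step is to bound $\sum_k\Delta_k$, which is exactly the pseudo-regret of UCBVI-CH on $m$ over $K_m$ episodes of horizon $H_1$ with $\mathcal{F}$-measurable (hence effectively adversarial) initial states, so I would invoke the regret bound of~\citet{Azar2017}. Two points need checking. (i) Algorithm~\ref{algo:aomtrl} uses the inflated bonus $b'_k$ with $L=\ln(15SAKHM/p)$ in place of the standard $b_k$ calibrated to failure probability $p_1/M$ (for which the relevant log-term is $\ln(5SAK_mH_1M/p_1)\le L$, since $K_m\le K$ and $H_1\le H$); as $b'_k\ge b_k$, optimism still holds with probability $\ge 1-p_1/M$, and the regret decomposition upper-bounds $\sum_k\Delta_k$ by a constant times the on-trajectory sum $\sum_{k,h}b'_k$ plus lower-order visitation-count terms, plus a bounded martingale arising from replacing $\Delta_k$ by its realization (Azuma, lower order). (ii) The pigeonhole/Cauchy--Schwarz bound $\sum_{k,h}\sqrt{1/N^{regret}_k}\le O(\sqrt{SAH_1K_m})$ does not depend on the initial states, so it applies verbatim; together with the lower-order terms of~\citet{Azar2017} this gives $\sum_k\Delta_k\le 67H_1^{3/2}L\sqrt{SAK_m}+15S^2A^2H_1^2L^2$ on the event of probability $\ge 1-p_1/M$. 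A union bound over the $M$ models makes all $M$ of these bounds hold simultaneously with probability $\ge 1-p_1$ --- this is precisely where the extra factor $M$ inside $L$ is spent --- and combining with the previous step gives the claim.

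The step I expect to be the main obstacle is the per-episode decomposition: the learning phase begins not at the adversary's state $s^k_1$ but at the uncontrolled post-exploration state $\bar s^k$, so the reduction to vanilla episodic RL must absorb a $\max_s V^{*,H_1}_m(s)-V^{*,H_1}_m(\bar s^k)$ gap; Assumption~\ref{assumption:diameter} is exactly what makes this gap $\le D$ per episode. The rest --- checking that $b'_k\ge b_k$ preserves optimism, that the bonus sum is initial-state-independent, the Azuma correction for the randomized start state, and that the $M$-fold union bound is matched by the $M$ already present in $L$ --- is routine given the UCBVI-CH analysis.
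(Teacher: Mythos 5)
Your proposal is correct and follows essentially the same route as the paper: split each episode at step $H_0$, pay $H_0$ for the clustering phase and $D$ for the span of the optimal $H_1$-step value function (the paper cites UCRL2's Eq.~11 where you give the equivalent coupling argument), then reduce the remaining term to UCBVI-CH with horizon $H_1$ and arbitrary initial states, noting that the inflated bonus $b'_k \ge b_k$ preserves optimism and union-bounding over the $M$ models with failure probability $p_1/M$ each. The only cosmetic difference is that you track the realized post-exploration state $\bar s^k$ with a martingale correction, whereas the paper upper-bounds by the worst-case initial state $\max_s[V^{k,*}_{H_0+1}-V^{\pi_k}_{H_0+1}](s)$ and treats it as an adversarially chosen start.
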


\TheoremMainRegret*
\begin{proof}
Summing up the regret for all $m \in \mathcal{M}$ and applying the Cauchy-Schwarz inequality, Lemma~\ref{lemma:perModelUCBVICH} together with Lemma~\ref{lemma:noIncorrectClusters} and Lemma~\ref{lemma:computeH0} imply that with probability $1 - p$, the total regret is bounded by 

\begin{equation}
\text{Regret}(K) \leq K(H_0 + D) + 67H_1L\sqrt{MSAKH_1} + 15MS^2AH_1^2L^2.
\label{eq:FirstStepFullRegret}
\end{equation}

Note that the bound in Equation~\ref{eq:FirstStepFullRegret} is tighter than the bound in Theorem~\ref{theorem:mainRegret1}. To obtain the bound in Theorem~\ref{theorem:mainRegret1}, notice that $D \leq \tilde{D} \leq H_0$ and thus $K(H_0 + D) \leq K(H_0 + H_0) = 2KH_0$.
\end{proof}

\TheoremRegretAlgorithmNoGamma*
\begin{proof} In stage 1, as the distinguishing set has size $|\tilde{\Gamma}| = SA$, the number of time steps needed in the clustering phase is
\[
H_{0,1} = 12\tilde{D}|\tilde{\Gamma}|N_{1} = 12DSAN_{1},
\]
where $N_{1} = \frac{256}{\lambda^2}\max\{S, \ln(\frac{3KSA}{p})\}$.

In stage 2, the length of the clustering phase is
\[
H_{0,2} = 12\tilde{D}|\hat{\Gamma}|N_{2},
\]
where $N_{2} = \frac{256}{\lambda^2}\max\{S, \ln(\frac{3K|\hat{\Gamma}|}{p})\}$.

% We can further assume that $K$ is not exponentially large,  i.e., $S > \ln(\frac{3KSA}{p})$, so that $N_{1} = N_{2} = \frac{256S}{\lambda^2}$.

Substituting $H_{0,1}$ and $H_{0,2}$ into Theorem~\ref{theorem:mainRegret1}, we obtain the regret bound of stage 1 and stage 2:
\begin{equation*}
\begin{split}
\text{Regret}_{Stage1} \leq 2K_1H_{0,1} &+ 67(H_{1,1})^{3/2}L_{1}\sqrt{MSAK_1} + 15MS^2A(H_{1,1})^2L_{1}^2,
\end{split}
\end{equation*}
where $L_{1} = \ln(\frac{15MSAKH_{1,1}}{p})$ and $H_{1,1} = H - H_{0,1}$.
\[
\begin{split}
\text{Regret}_{Stage2} \leq 2K_2H_{0,2} &+ 67H_{1,2}^{3/2}L_2\sqrt{MSAK_2} + 15MS^2AH_{1,2}^2L_2^2,
\end{split}
\]
where $L_2 = \ln(\frac{15MSAKH_{1,2}}{p})$ and $H_{1,2} = H - H_{0,2}$.

Since $H_{0,1} \geq H_{0,2}$, we have $H_{1,1} \leq H_{1,2}$. Using the assumption that $K_1SA < K_2$ and the Cauchy-Schwarz inequality for the sum $\sqrt{K_1} + \sqrt{K_2}$, we obtain

\begin{align}
  \text{Regret}(K) &= \text{Regret}_{Stage1} + \text{Regret}_{Stage2} \\
  &\leq 4KH_{0,2} + 67H_{1,2}^{3/2}L_2\sqrt{2MSAK} + 30MS^2AH_{1,2}^2L_2^2.
\end{align}

By having $|\hat{\Gamma}| \leq {M \choose 2} \leq M^2, H_{1,2} \leq H$ and $\max\{L_1, L_2\} \leq L$, we obtain  
\begin{align}
  \text{Regret}(K) \leq 4KH_{0,M} &+ 67H^{3/2}L\sqrt{2MSAK} + 30MS^2AH^2L^2.
\end{align}
where $H_{0, M} = \frac{3072\tilde{D}M^2}{\lambda^2}\max\{S, \ln(\frac{3KM^2}{p})\}$.

\end{proof}

\section{Per-model Regret analysis}
\label{appendix:perModelRegret}
First, we prove the following lemma which upper bound the per-episode regret as a function of $H_0$ and the regret of the clustering phase.

\begin{mylemma}
    \label{lemma:perEpRegret}
    The regret of Algorithm~\ref{algo:aomtrl} in episode $k$ is 
    \[
    \Delta_k = [V_1^{k,*} - V_1^{\pi_k}](s^k_1) \leq H_0 + D + \max_{s \in \mathcal{S}}[V_{H_0+1}^{k,*} - V_{H_0+1}^{\pi_k}](s).
    \]
\end{mylemma}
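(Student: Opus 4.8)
The plan is to split the episode at step $H_0$ and treat the clustering phase and the learning phase separately, using the diameter bound only to absorb the fact that the clustering-phase exploration policy may leave the learner in a bad state at step $H_0+1$.

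First I would bound the optimal value. Since $r\le 1$, the reward collected by \emph{any} policy during steps $1,\dots,H_0$ is at most $H_0$, so by the tower rule $V_1^{k,*}(s_1^k)\le H_0+\E\big[\,\text{optimal continuation value from }s_{H_0+1}\,\big]\le H_0+\max_{s\in\mathcal S}V_{H_0+1}^{k,*}(s)$; here I only need that the continuation of the optimal $H$-horizon policy given any history is itself a feasible policy from the realized state, hence has value at most $V_{H_0+1}^{k,*}$ of that state. On the learner's side, since $r\ge 0$, dropping the first $H_0$ rewards gives $V_1^{\pi_k}(s_1^k)\ge \E\big[V_{H_0+1}^{\pi_k}(s_{H_0+1})\big]$, the expectation being over the random state $s_{H_0+1}$ reached by \texttt{ExploreID}; this uses that in \Algref{algo:aomtrl} the learning-phase policy is fixed before step $H_0+1$ (it is \texttt{UCBVI-CH} run on previously stored cluster data), so $V_{H_0+1}^{\pi_k}$ is well defined. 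Subtracting, $\Delta_k\le H_0+\max_s V_{H_0+1}^{k,*}(s)-\E\big[V_{H_0+1}^{\pi_k}(s_{H_0+1})\big]$.

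The heart of the argument is the span bound $\max_s V_{H_0+1}^{k,*}(s)\le V_{H_0+1}^{k,*}(s')+D$ for every state $s'$. To get it I would fix $s^\dagger\in\argmax_s V_{H_0+1}^{k,*}(s)$ and, starting from $s'$ at step $H_0+1$, run the hitting-time-minimizing policy toward $s^\dagger$ and then switch to the optimal continuation; writing $\tau$ for the hitting time, Assumption~\ref{assumption:diameter} gives $\E[\tau]\le D$. Rewards are nonnegative during the travel phase, and by an elementary monotonicity estimate $V_{H_0+1+\tau}^{k,*}(s^\dagger)\ge V_{H_0+1}^{k,*}(s^\dagger)-\tau$ (truncating the last $\tau$ steps of the optimal continuation costs at most $\tau$ units of reward; with the convention $V_h^{k,*}\equiv 0$ for $h>H$ this also covers $\tau>H_1$). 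Hence $V_{H_0+1}^{k,*}(s')\ge \E_\tau\big[V_{H_0+1+\tau}^{k,*}(s^\dagger)\big]\ge V_{H_0+1}^{k,*}(s^\dagger)-\E[\tau]\ge \max_s V_{H_0+1}^{k,*}(s)-D$.

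Finally I would combine: applying the span bound at the realized $s_{H_0+1}$ and replacing $V_{H_0+1}^{k,*}(s_{H_0+1})-V_{H_0+1}^{\pi_k}(s_{H_0+1})$ by $\max_s\big[V_{H_0+1}^{k,*}-V_{H_0+1}^{\pi_k}\big](s)$ pointwise gives, for every realization, $\max_s V_{H_0+1}^{k,*}(s)-V_{H_0+1}^{\pi_k}(s_{H_0+1})\le D+\max_s\big[V_{H_0+1}^{k,*}-V_{H_0+1}^{\pi_k}\big](s)$; taking expectations and substituting into the bound on $\Delta_k$ yields the claim. I expect the span bound to be the only nontrivial step: the diameter is stated as a bound on \emph{expected} hitting times, so one must be careful about the event that navigating to $s^\dagger$ overruns the remaining horizon $H_1$, which is exactly why the monotonicity estimate is phrased to hold for all $\tau$ (including $\tau\ge H_1$) and why nonnegativity of the travel-phase rewards matters.
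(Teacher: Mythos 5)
Your proof is correct and follows essentially the same route as the paper: split the episode at $H_0$, bound the clustering-phase reward of the optimal policy by $H_0$ while dropping the learner's nonnegative clustering-phase reward, and absorb the mismatch in the state reached at step $H_0+1$ via the bound $\max_{s}V^{k,*}_{H_0+1}(s)-\min_{s}V^{k,*}_{H_0+1}(s)\le D$, keeping $\max_s[V^{k,*}_{H_0+1}-V^{\pi_k}_{H_0+1}](s)$ for the learning phase. The only difference is that where the paper cites equation (11) of UCRL2 for the span-versus-diameter bound, you prove it directly via the hitting-time policy plus the horizon-truncation estimate, which is a valid self-contained substitute for the citation.
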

\begin{proof}
Denote by $\Pr(s^k_h = s \mid s_1, \pi)$ the probability of visiting state $s$ at time $h$ when the learner follows a (possibly non-stationary) policy $\pi$ in model $m^k$ starting from state $s_1$. The regret of task $m$ in a single episode $k \in \mathcal{K}_m$ can be written as
\[
    \begin{split}
        \Delta_k &= [V^{k,*}_1 - V^{\pi_k}_1](s^k_1) \\
        &= E[\sum_{h=1}^H r(s_h, a_h) \mid s_1 = s^k_1, a_h = \pi_k^*(s_h)] - E[\sum_{h=1}^H r(s_h, a_h) \mid s_1 = s^k_1, a_h = \pi_k(s_h)] \\
        &= \left(E[\sum_{h=1}^{H_0} r(s_h, a_h) \mid s_1 = s^k_1, a_h = \pi_k^*(s_h)] + \sum_{s \in \mathcal{S}}{\Pr}_m(s^k_{H_0+1}=s\mid s^k_1, \pi^*_k)V_{H_0+1}^{k,*}(s)\right) \\
        &\quad - \left(E[\sum_{h=1}^{H_0} r(s_h, a_h) \mid s_1 = s^k_1, a_h = \pi_k(s_h)] + \sum_{s \in \mathcal{S}}{\Pr}_m(s^k_{H_0+1}=s\mid s^k_1, \pi_k)V_{H_0+1}^{\pi_k}(s)\right) \\
        &\leq H_0 + \sum_{s \in \mathcal{S}}{\Pr}_m(s^k_{H_0+1}=s\mid s^k_1, \pi^*_k)V_{H_0+1}^{k,*}(s) - \sum_{s \in \mathcal{S}}{\Pr}_m(s^k_{H_0+1}=s\mid s^k_1, \pi_k)V_{H_0+1}^{\pi_k}(s) \\
        &= H_0 + \left(\sum_{s \in \mathcal{S}}{\Pr}_m(s^k_{H_0+1}=s\mid s^k_1, \pi^*_k)V_{H_0+1}^{k,*}(s) - \sum_{s \in \mathcal{S}}{\Pr}_m(s^k_{H_0+1}=s\mid s^k_1, \pi_k)V_{H_0+1}^{k,*}(s)\right) \\
        &\quad + \sum_{s \in \mathcal{S}}{\Pr}_m(s^k_{H_0+1}=s\mid s^k_1, \pi_k)[V_{H_0+1}^{k,*} - V^{\pi_k}_{H_0+1}](s) \\
        &\leq H_0 + \underbrace{\left( \max_{s \in \mathcal{S}}V_{H_0+1}^{k,*}(s) - \min_{s \in \mathcal{S}}V_{H_0+1}^{k,*}(s) \right)}_{(\clubsuit)} + \max_{s \in \mathcal{S}}[V_{H_0+1}^{k,*} - V^{\pi_k}_{H_0+1}](s).
    \end{split}
\]
The first inequality follows from the assumption that $r(s, a) \in [0,1]$ for all $(s,a)$. The second inequality follows the fact that 
\[
	\begin{split}
	\sum_{s \in \mathcal{S}}{\Pr}_m(s^k_{H_0+1}=s\mid s^k_1, \pi^*_k)V_{H_0+1}^{k,*}(s) &\leq \sum_{s \in \mathcal{S}}{\Pr}_m(s^k_{H_0+1}=s\mid s^k_1, \pi^*_k)\max_{x \in \mathcal{S}}V_{H_0+1}^{k,*}(x) \\
	&= \left(\max_{x \in \mathcal{S}}V_{H_0+1}^{k,*}(x)\right)\sum_{s \in \mathcal{S}}{\Pr}_m(s^k_{H_0+1}=s\mid s^k_1, \pi^*_k) \\
	&= \max_{x \in \mathcal{S}}V_{H_0+1}^{k,*}(x),
	\end{split}
\]

and
\[
	\begin{split}
		\sum_{s \in \mathcal{S}}{\Pr}_m(s^k_{H_0+1}=s\mid s^k_1, \pi_k)V_{H_0+1}^{k,*}(s) &\geq \sum_{s \in \mathcal{S}}{\Pr}_m(s^k_{H_0+1}=s\mid s^k_1, \pi_k)\min_{x \in \mathcal{S}}V_{H_0+1}^{k,*}(x) \\
		&= \left(\min_{x \in \mathcal{S}}V_{H_0+1}^{k,*}(x)\right)\sum_{s \in \mathcal{S}}{\Pr}_m(s^k_{H_0+1}=s\mid s^k_1, \pi_k) \\
		&= \min_{x \in \mathcal{S}}V_{H_0+1}^{k,*}(x).
		\end{split}
\]

Furthermore, since $V^{k,*}_{H_0+1}(s) \geq V_{H_0+1}^{\pi_k}(s)$ for all $s \in \mathcal{S}$, we have
		\[
		\begin{split}
			\sum_{s \in \mathcal{S}}{\Pr}_m(s^k_{H_0+1}=s\mid s^k_1, \pi_k)[V_{H_0+1}^{k,*} - V^{\pi_k}_{H_0+1}](s) &\leq \sum_{s \in \mathcal{S}}{\Pr}_m(s^k_{H_0+1}=s\mid s^k_1, \pi_k)\max_{x \in \mathcal{S}}[V_{H_0+1}^{k,*} - V^{\pi_k}_{H_0+1}](x) \\
			&= \max_{x \in \mathcal{S}}[V_{H_0+1}^{k,*} - V^{\pi_k}_{H_0+1}](x)\sum_{s \in \mathcal{S}}{\Pr}_m(s^k_{H_0+1}=s\mid s^k_1, \pi_k) \\
			&= \max_{x \in \mathcal{S}}[V_{H_0+1}^{k,*} - V^{\pi_k}_{H_0+1}](x).
		\end{split}	
		\]
For each state $s$, the value of $V_h^{k,*}(s)$ is the expected total $(H-h)$-step reward of an optimal non-stationary $(H-h)$ step policy starting in state $s$ on the MDP $m$. Thus, the term $(\clubsuit)$ represents the \textit{bounded span} of the finite-step value function in MDP $m$. Applying equation 11 of~\cite{UCRL2},  the span of the value function is bounded by the diameter of the MDP. We obtain for all $h$
\[
    \max_{s \in \mathcal{S}}V_h^{k,*}(s) - \min_{s \in \mathcal{S}}V^{k,*}_h(s) \leq D.
\]

It follows that
\[
\Delta_k \leq H_0 + D + \max_{s \in \mathcal{S}}[V^{k,*}_{H_0+1} - V^{\pi_k}_{H_0+1}](s).
\]

\end{proof}
Denote $\mathcal{K}_m$ the set of episodes where the model $m$ is given to the learner. The total regret of the learner in episodes $\mathcal{K}_m$ is 
\[
    \begin{split}
      \mathrm{Regret}(m, K_m) &= \sum_{k \in \mathcal{K}_m} \Delta_k    \\
                       &\leq K_m(H_0 + D) + \underbrace{\sum_{k \in \mathcal{K}_m}\max_{s \in S}[V^{k,*}_{H_0+1} - V^{\pi_k}_{H_0+1}](s)}_{(\heartsuit)}.
    \end{split}
\]

The policy $\pi_k$ from time step $H_0 +1$ to $H$ is the UCBVI-CH algorithm~\citep{Azar2017}. Therefore, the term $(\heartsuit)$ corresponds to the total regret of UCBVI-CH in an adversarial setting in which the starting state $s^k_1$ in each episode is chosen by an adversary that maximizes the regret in each episode. In Appendix~\ref{appendix:UCBVI-CH}, we given a simplified analysis for UCBVI-CH and show that with probability at least $1 - p_1/M$,
\begin{equation}
    (\heartsuit) = \sum_{k \in \mathcal{K}_m}\max_{s \in S}[V^{k,*}_{H_0+1} - V^{\pi_k}_{H_0+1}](s) \leq 67H_1^{3/2}L\sqrt{SAK_m} + 15S^2A^2H_1^2L^2 .
    \label{eq:sumUCBVI-CH}
\end{equation}

The proof of Lemma~\ref{lemma:perModelUCBVICH} is completed by plugging the bound of $(2)$ in Equation~\ref{eq:sumUCBVI-CH} to obtain 
\[
    \begin{split}
      \mathrm{Regret}(m, K_m) &= \sum_{k \in \mathcal{K}_m} \Delta_k    \\
                       &\leq K_m(H_0 + D) + 67H_1^{3/2}L\sqrt{SAK_m} + 15S^2A^2H_1^2L^2  .
    \end{split}
\]

\section{A simplified analysis for UCBVI-CH}
\label{appendix:UCBVI-CH}

\begin{algorithm2e}[ht]
  \SetAlgoNoEnd
    \KwIn{Failure probability $p$}
    Initialize an empty collection $\mathcal{B}$\;
    \For{episode $k = 1, \dots, K$:}{
        \SetKwFunction{ValueIteration}{UCB-Q-Values}
        $Q_{k,h} = $ \ValueIteration($\mathcal{B}, p$)\;
        \For{$h = 1, \dots, H$: }{  
        Take action $a_{k,h} = \argmax_a Q_{k,h}(s^k_{h}, a)$\;
        Add $(s^k_{h}, a^k_{h}, s^k_{h+1})$ to $\mathcal{B}$ \;
        }
    }
    \caption{UCBVI}
    \label{algo:UCBVI}
    \end{algorithm2e}
    
    \begin{algorithm2e}[ht]
      \SetAlgoNoEnd
    \KwIn{Collection $\mathcal{B}$, probability $p$}
    Compute, for all $(s, a, s') \in \mathcal{S \times A \times S}$\newline
    $N_k(s,a,s') = \sum_{(x,a',y) \in \mathcal{B}}\mathbb{I}(x=s,a'=a,y=s')$\newline
    $N_k(s,a) = \sum_{s' \in S}N_k(s,a,s')$\;
    For all $(s,a) \in \{(s,a): N_k(s,a) > 0\}$, compute \newline
    $\hat{P}_k(s' \mid s,a) = \frac{N_k(s,a,s')}{N_k(s,a)}$\newline
    $b_{k,h}(s,a) = 7HL\sqrt{\frac{1}{N_k(s,a)}}$ where $L = \ln(5SAKH/p)$\;
    Initialize $V_{k, H+1}(s) = 0$ for all $x \in \mathcal{S}$\;
    \For{$h = H, H-1, \dots, 1$:}{
        \For{$(s, a) \in \mathcal{S \times A}$}{
            \uIf{$N_k(s, a) > 0$}
            {
                $Q_{k,h}(s,a) = \min\{H, r(s,a) + \left(\sum_{s' \in \mathcal{S}}\hat{P}_k(s'\mid s,a)V_{k,h+1}(s')\right) + b_{k,h}(s,a)\}$
            }
            \uElse{
                $Q_{k,h} = H$
            }
            $V_{k,h}(s) = \max_{a}Q_{k,h}(s,a)$
        }
    }
    \caption{UCB-Q-Values with Hoeffding bonus}
    \label{algo:ValueIteration}
    \end{algorithm2e}

In section, we construct a simplified analysis for the UCBVI-CH algorithm in~\cite{Azar2017}. The proof largely follows the existing constructions in~\cite{Azar2017}, with two differences: the definition of ``typical'' episodes and the analysis are tailored specifically for the Chernoff-type bonus of UCBVI-CH, without being complicated by handling of the variances for the Bernstein-type bonus of UCBVI-BF in~\cite{Azar2017}. For completeness, the full UCBVI-CH algorithm from~\cite{Azar2017} is shown in Algorithms~\ref{algo:UCBVI} and~\ref{algo:ValueIteration}.

\textbf{Notation}. In this section, we consider the standard single-task episodic RL setting in ~\cite{Azar2017} where the learner is given the same MDP $(\mathcal{S, A}, H, P, r)$ in $K$ episodes. We assume the reward function $r: \mathcal{S \times A} \mapsto [0, 1]$ is deterministic and known. The state and action spaces $\mathcal{S}$ and $\mathcal{A}$ are discrete spaces with size $S$ and $A$, respectively. Denote by $p$ the failure probability and let $L = \ln(5SAKH/p)$. We assume the product $SAKH$ is sufficiently large that $L > 1$.

 Let $V_1^*$ denote the optimal value function and $V^{\pi_k}_1$ the value function of the policy $\pi_k$ of the UCBVI-CH agent in episode $k$. The regret is defined as follows.
\begin{equation}
  \mathrm{Regret}(K) = \sum_{k=1}^K \delta_{k,1},
\label{eq:singleTaskRegret}
\end{equation}
where $\delta_{k,h} =  [V_h^* - V^{\pi_k}_h](s^k_h)$.

Denote by $N_k(s,a)$ the number of visits to the state-action pair $(s,a)$ up to the beginning of episode $k$. 

We call an episode $k$ ``typical'' if all state-action pairs visited in episode $k$ have been visited at least $H$ times at the beginning of episode $k$. The set of typical episodes is defined as follows.
\begin{equation}
[K]_{typ} = \{i \in [K]: \forall h \in [H], N_i(s^i_h, a^i_h) \geq H\}.    
\label{eq:TypDef}
\end{equation}
Equation~\ref{eq:singleTaskRegret} can be written as
\begin{equation}
\begin{split}
  \mathrm{Regret}(K) &=  \sum_{k \notin [K]_{typ}}\delta_{k,1} + \sum_{k \in [K]_{typ}}\delta_{k,1} \\
    &\leq \sum_{k \notin [K]_{typ}}H + \sum_{k \in [K]_{typ}}\delta_{k,1} \\
    &\leq SAH^2 + \sum_{k \in [K]_{typ}}\delta_{k,1}.
\end{split}  
\label{eq:sumNonTypAndTyp}  
\end{equation}

The first inequality follows from the trivial upper bound of the regret in an episode $\delta_{k,1} \leq H$. The second inequality comes from the fact that each state-action pair can cause at most $H$ episodes to be non-typical; therefore there are at most $SAH$ non-typical episodes.

Next, we have:
\begin{equation}
\begin{split}
    \sum_{k \in [K]_{typ}}\delta_{k,1} &= \sum_{k}^K \delta_{k,1} \mathbb{I}\{k \in [K]_{typ}\}.
    \label{eq:sumTyp}
\end{split}    
\end{equation}

From here we write $\mathbb{I}_k = \mathbb{I}\{k \in [K]_{typ}\}$ for brevity.

Lemma 3 in~\cite{Azar2017} implies that, for all $k \in [K]$,
\begin{equation}
\delta_{k,1} \leq e\sum_{h=1}^{H}\left[\varepsilon_{k, h} + 2\sqrt{L}\bar{\varepsilon}_{k, h} + c_{1, k, h} + b_{k, h} + c_{4, k, h}\right].
\label{eq:Lemma3}
\end{equation}
where $c_{4,k,h} = \frac{4SH^2L}{N_k(s^k_h, a^k_h)}$, $\varepsilon_{k,h}$ and $\bar{\varepsilon}_{k,h}$ are martingale difference sequences which, by Lemma 5 in~\cite{Azar2017}, satisfy
\begin{equation}
    \begin{split}
\sum_{k=1}^K\sum_{h=1}^{H}\varepsilon_{k,h} &\leq H\sqrt{KHL} \\
\sum_{k=1}^K\sum_{h=1}^{H}\bar{\varepsilon}_{k,h} &\leq \sqrt{KH},
    \end{split}
    \label{eq:sumMDS}
\end{equation}
and $c_{1,k,h}$ is a confidence interval to be defined later.

Plugging Equation~\ref{eq:Lemma3} into Equation~\ref{eq:sumTyp} and combining with Equation~\ref{eq:sumMDS}, we obtain:
\[
    \begin{split}
        \sum_{k \in [K]_{typ}}\delta_{k,1} &\leq e\sum_{k=1}^K \left( \sum_{h=1}^{H}\left[\varepsilon_{k, h} + 2\sqrt{L}\bar{\varepsilon}_{k, h} + c_{1, k, h} + b_{k, h} + c_{4, k, h}\right] \right) \mathbb{I}_k \\
        &= e\left[ \left( \sum_{k=1}^K \mathbb{I}_k \sum_{h=1}^H (\varepsilon_{k,h} + 2\sqrt{L}\bar{\varepsilon}_{k,h}) \right) + 
            \left(  
                \sum_{k=1}^K \mathbb{I}_k \sum_{h=1}^H (b_{k,h} + c_{1,k,h} + c_{4,k,h})
            \right)
        \right] \\
        &\leq e\left[ 
            \left( \sum_{k=1}^K\sum_{h=1}^H (\varepsilon_{k,h} + 2\sqrt{L}\bar{\varepsilon}_{k,h}) \right) + 
            \left(
                \sum_{k=1}^K \sum_{h=1}^H (b_{k,h}\mathbb{I}_k + c_{1,k,h}\mathbb{I}_k + c_{4,k,h}\mathbb{I}_k)
            \right)
        \right] \\
        &\leq e\left[
            \left( H\sqrt{KHL} + 2\sqrt{L}\sqrt{KH}\right) + 
            \left(
                \sum_{k=1}^K \sum_{h=1}^H (b_{k,h}\mathbb{I}_k + c_{1,k,h}\mathbb{I}_k + c_{4,k,h}\mathbb{I}_k)
            \right)
        \right] \\
        &= e\left[
            \left( (H+2)\sqrt{KHL}\right) + 
            \left(
                \sum_{k=1}^K \sum_{h=1}^H (b_{k,h}\mathbb{I}_k + c_{1,k,h}\mathbb{I}_k + c_{4,k,h}\mathbb{I}_k)
            \right)
        \right]
    \end{split}
\]

Note that the second inequality follows from the fact that $\mathbb{I}_k \leq 1$, and the last inequality follows directly from Equation~\ref{eq:sumMDS}.

Let $\mathbb{I}_{k,h} = \mathbb{I}\{N_k(s^k_h, a^k_h) \geq H\}$. By the definition of a ``typical'' episode, $\mathbb{I}_k = 1$ implies that $\mathbb{I}_{k,h}=1$ for all $h$. It follows that $\mathbb{I}_k \leq \mathbb{I}_{k,h}$. Thus, 
\begin{equation}
    \sum_{k \in [K]_{typ}}\delta_{k,1} \leq e\left((H+2)\sqrt{KHL} + \sum_{i=1}^{K}\sum_{j=1}^{H}(b'_{k, h} + c'_{1,k,h} + c'_{4,k, h}) \right),
    \label{eq:sumbprimecprime}
\end{equation}
where $b'_{k,h} = b_{k,h}\mathbb{I}_{k,h}$, $c'_{1,k,h} = c_{1,k,h}\mathbb{I}_{k,h}$ and $c'_{4,k,h} = c_{4,k,h}\mathbb{I}_{k,h}$.

Next, we compute $c_{1, k,h}$. In Equation (32) in~\cite{Azar2017}, $c_{1, k, h}$ corresponds to the confidence interval of 
\[    
(\hat{P}_h^\pi - P_h^\pi)V^*_{h+1}(s^k_h) = \sum_{s' \in \mathcal{S}}\left[
    \hat{P}(s' \mid s^k_h, a^k_h) - P_h(s' \mid s^k_h, a^k_h)
\right]V^*_{h+1}(s').
\]
Equation (9) in~\cite{Azar2017} computes a confidence interval for this term using the Bernstein inequality. Instead, we use the Hoeffding inequality and obtain
\begin{equation}
    [(\hat{P}_h^\pi - P_h^\pi)V^*_{h+1}] \leq H\sqrt{\frac{L}{2N_k(s^k_h, a^k_h)}} = c_{1,k,h}.
    \label{eq:c1kh}
\end{equation}

Combining Equations~\ref{eq:c1kh}, ~\ref{eq:sumbprimecprime} and~\ref{eq:sumNonTypAndTyp}, the total regret is bounded as
\begin{equation}
\begin{split}
\text{Regret} \leq SAH^2 + e\left((H+2)\sqrt{KHL} + \underbrace{\sum_{k=1}^{K}\sum_{h=1}^{H}(b'_{k,h} + c'_{1,k,h} + c'_{4,k,h})}_{(a)} \right)
\end{split}
\label{eq:AppendixRegret}
\end{equation}
where $b'_{k,h} = \frac{7HL\mathbb{I}_{k,h}}{\sqrt{N_k(s^k_h, a^k_h)}}, c'_{1,k,h} = \frac{H\sqrt{L}\mathbb{I}_{k,h}}{\sqrt{2N_k(s^k_h, a^k_h)}}$ and $c'_{4, k,h} = \frac{4SH^2L\mathbb{I}_{k,h}}{N_k(s^k_h, a^k_h)}$.

We focus on the third and dominant term $(a)$. As $b_{k,h} \geq c_{1,k,h}$, this term can be upper bounded by
\begin{equation}
\begin{split}
    (a) &\leq  \sum_{k=1}^{K}\sum_{h=1}^{H} \left[ \frac{8HL\mathbb{I}_{k,h}}{\sqrt{N_k(s^k_h, a^k_h)}} + \frac{4SH^2L\mathbb{I}_{k,h}}{N_k(s^k_h, a^k_h)} \right] \quad \text{(since } L > 1 \text{)} \\
    &= 8HL\underbrace{\sum_{i=1}^{K}\sum_{j=1}^{H}\frac{\mathbb{I}_{k,h}}{\sqrt{N_k(s^k_h, a^k_h)}}}_{(b)} + 4SH^2L\underbrace{\sum_{i=1}^{K}\sum_{j=1}^{H}\frac{\mathbb{I}_{k,h}}{N_k(s^k_h,a^k_h)}}_{(c)}.
\end{split}
\label{eq:AppendixAa}
\end{equation}

We bound $(b)$ and $(c)$ separately.

First, we bound $(b)$. We introduce the following lemma, which is an analogy to Lemma 19 in~\cite{UCRL2} in the finite-horizon setting. %The proof closely resembles that of Lemma 19 in UCRL2.
\begin{mylemma} Let $H \geq 1$.
For any sequence of numbers $z_1, \dots, z_n$ with $0 \leq z_k \leq H$, consider the sequence $Z_0, Z_1, \dots Z_n$ defined as
\[
\begin{split}
Z_0 &\geq H \\
Z_k &= Z_{k-1} + z_k \qquad \text{for } k \geq 1.
\end{split}
\]

Then, for all $n \geq 1$,
\[
\sum_{k=1}^n \frac{z_k}{\sqrt{Z_{k-1}}} \leq (\sqrt{2}+1)\sqrt{Z_n}.
\]
\label{lemma:My19a}
\end{mylemma}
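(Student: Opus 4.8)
The plan is to reduce the sum to a telescoping series via a sharp pointwise bound on each summand. First I would record the elementary structure of $(Z_k)$: since each $z_k \ge 0$ the sequence is nondecreasing, and since $Z_0 \ge H \ge 1$ it stays strictly positive, so all quotients below are well defined. The key observation is a doubling bound: because $z_k \le H \le Z_0 \le Z_{k-1}$, we have $Z_k = Z_{k-1}+z_k \le 2Z_{k-1}$, i.e.\ $Z_k/Z_{k-1}\le 2$ for every $k\ge 1$.

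Next, using $z_k = Z_k - Z_{k-1}$ together with the identity $\sqrt{Z_k}-\sqrt{Z_{k-1}} = (Z_k-Z_{k-1})/(\sqrt{Z_k}+\sqrt{Z_{k-1}})$, I would rewrite each summand exactly as
\[
\frac{z_k}{\sqrt{Z_{k-1}}}
= \bigl(\sqrt{Z_k}-\sqrt{Z_{k-1}}\bigr)\cdot\frac{\sqrt{Z_k}+\sqrt{Z_{k-1}}}{\sqrt{Z_{k-1}}}
= \bigl(\sqrt{Z_k}-\sqrt{Z_{k-1}}\bigr)\left(\sqrt{\tfrac{Z_k}{Z_{k-1}}}+1\right)
\le (\sqrt2+1)\bigl(\sqrt{Z_k}-\sqrt{Z_{k-1}}\bigr),
\]
where the final inequality uses the doubling bound $\sqrt{Z_k/Z_{k-1}}\le\sqrt2$; the case $z_k=0$ is covered trivially, since then both sides vanish. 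Summing over $k=1,\dots,n$ telescopes the right-hand side to $(\sqrt2+1)(\sqrt{Z_n}-\sqrt{Z_0})\le(\sqrt2+1)\sqrt{Z_n}$, which is the claim.

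The only point requiring care is obtaining the sharp constant $\sqrt2+1$ rather than a looser one. A naive route --- first using $Z_{k-1}\ge Z_k/2$ and then the standard bound $(a-b)/\sqrt a\le 2(\sqrt a-\sqrt b)$ --- produces the weaker constant $2\sqrt2$. Keeping $\sqrt{Z_{k-1}}$ in the denominator and factoring $\sqrt{Z_k}+\sqrt{Z_{k-1}}$ through it, as above, is precisely what recovers $(\sqrt2+1)$; everything else is routine.
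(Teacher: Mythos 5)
Your proof is correct. The paper does not actually spell out a proof of this lemma; it is stated as ``an analogy to Lemma 19 in UCRL2'' \citep{UCRL2}, whose standard proof proceeds by induction on $n$: assuming the bound for $n-1$, one verifies $(\sqrt{2}+1)\sqrt{Z_{n-1}} + z_n/\sqrt{Z_{n-1}} \leq (\sqrt{2}+1)\sqrt{Z_{n-1}+z_n}$, using $z_n \leq Z_{n-1}$ (this is also the inductive style the paper uses for its harmonic-sum variant, Lemma~\ref{lemma:harmonicSum}). Your argument takes a genuinely different, non-inductive route: from the same key fact $z_k \leq H \leq Z_0 \leq Z_{k-1}$ (hence $Z_k \leq 2Z_{k-1}$) you rewrite each summand exactly as $\bigl(\sqrt{Z_k}-\sqrt{Z_{k-1}}\bigr)\bigl(\sqrt{Z_k/Z_{k-1}}+1\bigr)$, bound the second factor by $\sqrt{2}+1$, and telescope. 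This is self-contained, makes the role of the doubling condition transparent, yields the sharp constant directly, and in fact gives the slightly stronger conclusion $(\sqrt{2}+1)\bigl(\sqrt{Z_n}-\sqrt{Z_0}\bigr)$; the inductive route's only advantage is that it matches the original UCRL2 formulation where $Z_{k-1}=\max\{1,\sum_{i<k}z_i\}$ rather than the $Z_0 \geq H$ normalization used here. No gaps: positivity of $Z_{k-1}$ is guaranteed by $Z_0 \geq H \geq 1$ and monotonicity, and the $z_k=0$ case is handled.
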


Using Lemma~\ref{lemma:My19a}, we can bound $(b)$ by Lemma~\ref{lemma:AppendixAb}.
\begin{mylemma}
Denote $v_i(s,a) = \sum_{j = 1}^H \mathbb{I}(a_{i,j} = a, s_{i,j} = s)$ the number of times the state-action pair $(s,a)$ is visited during episode $i$, and let $\tau(s,a) = \argmin_{k \in [K]}\{N_k(s,a) \geq H\}$ be the first episode where the state-action pair $(s,a)$ is visited at least $H$ times. Then,
\begin{equation}
\begin{split}
(b) \leq (\sqrt{2}+1) \sqrt{SAKH}.
\end{split}
\end{equation}
\label{lemma:AppendixAb}
\end{mylemma}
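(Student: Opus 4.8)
The plan is to decompose the double sum defining $(b)$ by grouping terms according to which state-action pair is visited, reduce each group to a single application of Lemma~\ref{lemma:My19a}, and then finish with Cauchy--Schwarz over the $SA$ pairs.

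First I would fix a pair $(s,a)$ and isolate the episodes that actually contribute to $(b)$ through this pair. Since $\mathbb{I}_{k,h} = \mathbb{I}\{N_k(s^k_h,a^k_h)\ge H\}$ and the visit counter $N_k(s,a)$ is nondecreasing in $k$ and changes only in episodes where $(s,a)$ is visited, the contributing episodes are precisely those $k_1 < k_2 < \cdots < k_m$ with $k_1 \ge \tau(s,a)$ in which $(s,a)$ is visited. Writing $z_i = v_{k_i}(s,a) \in \{1,\dots,H\}$ and $Z_0 = N_{k_1}(s,a)$, the fact that no visits to $(s,a)$ occur strictly between $k_i$ and $k_{i+1}$ gives the telescoping identity $N_{k_{i+1}}(s,a) = N_{k_i}(s,a) + v_{k_i}(s,a)$, i.e. $Z_{i-1} = N_{k_i}(s,a)$, and $Z_0 \ge H$ by the choice of $\tau(s,a)$. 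Because UCBVI-CH freezes the visit count at the start of each episode, the total contribution of $(s,a)$ to $(b)$ is exactly $\sum_{i=1}^m z_i/\sqrt{Z_{i-1}}$, which Lemma~\ref{lemma:My19a} bounds by $(\sqrt{2}+1)\sqrt{Z_m}$; and $Z_m = N_{k_m}(s,a)+v_{k_m}(s,a) \le n(s,a)$, where $n(s,a) := \sum_{k=1}^K v_k(s,a)$ is the total number of visits to $(s,a)$ over all $K$ episodes.

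Summing over all state-action pairs and applying Cauchy--Schwarz (equivalently concavity of $\sqrt{\cdot}$), $\sum_{(s,a)}\sqrt{n(s,a)} \le \sqrt{SA\sum_{(s,a)} n(s,a)}$, together with the identity $\sum_{(s,a)} n(s,a) = KH$ (each of the $KH$ time steps is counted at exactly one pair), yields $(b) \le (\sqrt{2}+1)\sqrt{SAKH}$, which is the claimed bound.

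The only delicate point is the bookkeeping in the first step: one must verify that the denominator $N_k(s,a)$ is constant within a fixed episode, and that restricting to the episodes $k \ge \tau(s,a)$ in which $(s,a)$ is visited produces exactly a sequence $Z_0, Z_1, \dots, Z_m$ of the form required by Lemma~\ref{lemma:My19a} with $Z_0 \ge H$ and increments in $[0,H]$. Once this is in place, the rest is a routine invocation of that lemma and of Cauchy--Schwarz.
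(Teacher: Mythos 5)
Your proposal is correct and follows essentially the same route as the paper's proof: regroup the double sum by state-action pair, observe that the terms before episode $\tau(s,a)$ vanish, apply Lemma~\ref{lemma:My19a} to the per-pair sequence of counts (with $Z_0 \geq H$ and increments in $[0,H]$), and conclude with Cauchy--Schwarz and $\sum_{s,a} n(s,a) = KH$. Your extra bookkeeping about the counts being frozen within an episode is exactly the justification implicit in the paper's regrouping step.
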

\begin{proof}
By definition, $N_i(s,a) = \sum_{k=1}^{i-1}v_k(s,a)$. Regrouping the sum in $(b)$ by $(s,a)$, we have
\[
\begin{split}
(b) &= \sum_{s,a}\sum_{i=1}^{K} \frac{v_i(s,a)}{\sqrt{N_i(s,a)}}\mathbb{I}\{N_i(s,a) \geq H\} \\
&= \sum_{s,a} \left(\sum_{i=1}^{\tau(s,a)-1}\frac{v_i(s,a)}{\sqrt{N_i(s,a)}}\mathbb{I}\{N_i(s,a) \geq H\} + \sum_{i=\tau(s,a)}^{K} \frac{v_i(s,a)}{\sqrt{N_i(s,a)}} \right) \\
&= \sum_{s,a}\sum_{i=\tau(s,a)}^{K} \frac{v_i(s,a)}{\sqrt{N_i(s,a)}} \\
&\leq \sum_{s,a}(\sqrt{2} + 1)\sqrt{N_{K}(s,a) + v_K(s,a)} \\
&\leq (\sqrt{2}+1) \sqrt{SAKH}.
\end{split}
\]
where the last two inequalities follow from Lemma~\ref{lemma:My19a}, the Cauchy-Schwarz inequality and the fact that $\sum_{s,a}{N_{K}}(s,a) \leq KH$.
\end{proof}

In order to bound the term $(c)$ in Equation~\ref{eq:AppendixAa}, we use the following lemma, which is a variant of Lemma~\ref{lemma:My19a} and was stated in~\cite{Azar2017} without proof.

\begin{mylemma} Let $H \geq 1$.
For any sequence of numbers $z_1, \dots, z_n$ with $0 \leq z_k \leq H$, consider the sequence $Z_0, Z_1, \dots Z_n$ defined as
\[
\begin{split}
Z_0 &\geq H \\
Z_k &= Z_{k-1} + z_k \qquad \text{for } k \geq 1.
\end{split}
\]

Then, for all $n \geq 1$,
\[
\sum_{k=1}^n \frac{z_k}{Z_{k-1}} \leq \sum_{j=1}^{Z_n-Z_0}\frac{1}{j}\leq \ln(Z_n-Z_0) + 1.
\]
\label{lemma:harmonicSum}
\end{mylemma}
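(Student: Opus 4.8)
The plan is to prove the two inequalities separately: the first, $\sum_{k=1}^n \frac{z_k}{Z_{k-1}} \le \sum_{j=1}^{Z_n-Z_0}\frac1j$, by a block‑partition argument, and the second, $\sum_{j=1}^{Z_n-Z_0}\frac1j \le \ln(Z_n-Z_0)+1$, by the standard integral bound on harmonic numbers. Since the middle quantity only typechecks when $Z_n-Z_0$ is a nonnegative integer, I read the $z_k$ as nonnegative integers (as they are in the application, where $z_k=v_k(s,a)$ is a visit count and $Z_n-Z_0=\sum_k z_k$). First I would set $S_k:=\sum_{i=1}^k z_i$ for $k=0,1,\dots,n$, so that $S_0=0$, $Z_k=Z_0+S_k$, and $S_n=Z_n-Z_0=:T$.

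The one nontrivial observation — and the step I expect to be the main obstacle in the sense that everything else is bookkeeping — is that $Z_{k-1}\ge S_k$ for every $k$. This follows from
\[
Z_{k-1}=Z_0+S_{k-1}\ \ge\ H+S_{k-1}\ \ge\ z_k+S_{k-1}=S_k,
\]
where the first inequality uses the hypothesis $Z_0\ge H$ and the second uses $z_k\le H$; this is precisely where both hypotheses of the lemma are consumed. Given this, for each $k$ with $z_k\ge1$ I would write
\[
\frac{z_k}{Z_{k-1}}\ \le\ \frac{z_k}{S_k}\ =\ \sum_{j=S_{k-1}+1}^{S_k}\frac{1}{S_k}\ \le\ \sum_{j=S_{k-1}+1}^{S_k}\frac{1}{j},
\]
the last step because each of the $z_k=S_k-S_{k-1}$ integers $j$ in the range satisfies $j\le S_k$. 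For $k$ with $z_k=0$ the term $z_k/Z_{k-1}$ is $0$ and the index range is empty, so the bound holds trivially. Summing over $k=1,\dots,n$ and noting that the index sets $\{S_{k-1}+1,\dots,S_k\}$ (over the $k$ with $z_k\ge1$) form a partition of $\{1,2,\dots,T\}$ yields $\sum_{k=1}^n \frac{z_k}{Z_{k-1}}\le\sum_{j=1}^{T}\frac1j$, which is the first asserted inequality.

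For the second inequality, assuming $T=Z_n-Z_0\ge1$, I would bound $\sum_{j=1}^T\frac1j\le 1+\sum_{j=2}^T\frac1j\le 1+\int_1^T\frac{dx}{x}=1+\ln T$, using $\frac1j\le\int_{j-1}^{j}\frac{dx}{x}$ for $j\ge2$. This completes the proof. The only delicate point is spotting $Z_{k-1}\ge S_k$; once it is in hand the argument is a routine partition of $\{1,\dots,T\}$ into consecutive blocks followed by an integral comparison. (The degenerate case $Z_n=Z_0$, i.e.\ all $z_k=0$, makes the left side and the empty middle sum both $0$ while the final bound $\ln 0+1$ is vacuous; it is excluded, consistently with how the lemma is applied.)
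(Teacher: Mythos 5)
Your proof is correct and is essentially the paper's own argument: the paper establishes the first inequality by induction on $n$, but the induction step charges $\frac{z_n}{Z_{n-1}}$ against the harmonic terms $\frac{1}{S_{n-1}+1},\dots,\frac{1}{S_{n-1}+z_n}$ using precisely your key inequality $Z_{n-1}\ge S_{n-1}+z_n$ (i.e.\ $z_n\le H\le Z_0$), so your direct block-partition of $\{1,\dots,Z_n-Z_0\}$ is just the unrolled form of that induction, with the integral bound handling the second inequality as the paper does by citation. Your explicit treatment of integrality of the $z_k$ and of the degenerate case $Z_n=Z_0$ only makes precise what the paper leaves implicit.
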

\begin{proof}
The second half follows immediately from existing results for the partial sum of the harmonic series. We prove the first half of the inequality by induction.  By definition of the two sequences, $Z_k \geq H \geq 1$ and $z_k \leq H \leq Z_{k-1}$ for all $k$. At $n = 1$, if $z_1 = 0$ then the inequality trivially holds. If $z_1 > 0$, then $Z_1 - Z_0 = z_1$ and
\[
\frac{z_1}{Z_0} \leq \frac{z_1}{H} = \left( \underbrace{\frac{1}{H} + \dots + \frac{1}{H}}_{z_1 \text{ terms}} \right) \leq 1 + \frac{1}{2} + \dots + \frac{1}{z_1}
\]
since $z_1 \leq H$.

For $n > 1$, by the induction hypothesis, we have
\[
\begin{split}
\sum_{k=1}^n \frac{z_k}{Z_{k-1}} &= \sum_{k=1}^{n-1} \frac{z_k}{Z_{k-1}} + \frac{z_n}{Z_{n-1}} \\
&\leq \left( \sum_{j=1}^{Z_{n-1}-Z_0}\frac{1}{j} \right) +  \frac{z_n}{Z_{n-1}}\\
&= \left( \sum_{j=1}^{Z_{n-1}-Z_0}\frac{1}{j} \right) + \left( \underbrace{ \frac{1}{Z_{n-1}} + \dots + \frac{1}{Z_{n-1}} }_{z_n \text{terms}} \right) \\
&\leq \left( \sum_{j=1}^{Z_{n-1}-Z_0}\frac{1}{j} \right) + \left( \frac{1}{Z_{n-1} - Z_0 + 1} + \dots + \frac{1}{Z_{n-1} - Z_0 + z_n} \right) \\
&= \sum_{j=1}^{Z_{n} - Z_0} \frac{1}{j},
\end{split}
\]
where the last inequality follows from $z_n \leq Z_0$.
Therefore, the induction hypothesis holds for all $n \geq 1$.
\end{proof}

Using Lemma~\ref{lemma:harmonicSum}, the term $(c)$ can be bounded similarly to term $(b)$ as follows:

\begin{mylemma}
With $v_i(s,a)$ and $\tau(s,a)$ defined in Lemma~\ref{lemma:AppendixAb}, we have
\[
(c) \leq SAL + SA.
\]
\label{lemma:AppendixAc}
\end{mylemma}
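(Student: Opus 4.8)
The plan is to mirror the proof of Lemma~\ref{lemma:AppendixAb}, replacing the use of Lemma~\ref{lemma:My19a} by its harmonic-series analogue, Lemma~\ref{lemma:harmonicSum}. First I would regroup the double sum defining $(c)$ by state-action pairs. Using the notation $v_i(s,a)$ and $\tau(s,a)$ introduced in Lemma~\ref{lemma:AppendixAb} and the identity $N_i(s,a) = \sum_{k=1}^{i-1} v_k(s,a)$, the indicator $\mathbb{I}\{N_i(s,a)\ge H\}$ vanishes for $i < \tau(s,a)$ and equals $1$ for $i \ge \tau(s,a)$, so
\[
(c) = \sum_{s,a}\sum_{i=1}^K \frac{v_i(s,a)}{N_i(s,a)}\,\mathbb{I}\{N_i(s,a)\ge H\}
    = \sum_{s,a}\sum_{i=\tau(s,a)}^K \frac{v_i(s,a)}{N_i(s,a)}.
\]

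Next, I would fix $(s,a)$ and apply Lemma~\ref{lemma:harmonicSum} with $Z_0 = N_{\tau(s,a)}(s,a)$ (which is at least $H$ by the definition of $\tau(s,a)$), with $n = K - \tau(s,a) + 1$, and with $z_k = v_{\tau(s,a)+k-1}(s,a)$ for $k = 1,\dots,n$. Then $Z_{k-1} = N_{\tau(s,a)+k-1}(s,a)$ and $z_k \le H \le Z_0$, so the hypotheses of Lemma~\ref{lemma:harmonicSum} are met, and
\[
\sum_{i=\tau(s,a)}^K \frac{v_i(s,a)}{N_i(s,a)}
    = \sum_{k=1}^n \frac{z_k}{Z_{k-1}}
    \le \ln(Z_n - Z_0) + 1
    \le \ln\bigl(N_K(s,a) + v_K(s,a)\bigr) + 1,
\]
where the last step uses $Z_n - Z_0 = N_K(s,a) + v_K(s,a) - N_{\tau(s,a)}(s,a) \le N_K(s,a) + v_K(s,a)$.

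Finally, I would bound each of the $SA$ summands uniformly. Since the total number of state-action visits over all $K$ episodes equals $KH$, we have $N_K(s,a) + v_K(s,a) \le KH$, and therefore $\ln(N_K(s,a)+v_K(s,a)) \le \ln(KH) \le L$ because $L = \ln(5SAKH/p)$ and $5SA/p \ge 1$. Summing over all $SA$ state-action pairs yields $(c) \le SA(L+1) = SAL + SA$, as claimed. There is no genuine obstacle here: the only point needing care is the re-indexing bookkeeping that verifies the sequence $(Z_k)$ satisfies the hypotheses of Lemma~\ref{lemma:harmonicSum} — in particular $z_k \le Z_0$, which holds since per-episode visit counts are at most $H$ and $Z_0 \ge H$ — and the observation that the resulting logarithmic factor is absorbed into $L$.
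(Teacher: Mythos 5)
Your proposal is correct and follows essentially the same route as the paper's own proof: regroup $(c)$ by state-action pair, drop the episodes before $\tau(s,a)$, apply Lemma~\ref{lemma:harmonicSum} with $Z_0 = N_{\tau(s,a)}(s,a) \geq H$, and bound the resulting logarithm by $\ln(KH) \leq L$. The only cosmetic difference is that the paper keeps the term $-N_{\tau(s,a)}(s,a)$ inside the logarithm before bounding it by $\ln(KH)$, which changes nothing.
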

\begin{proof}
We write $(c)$ as
\[
\begin{split}
(c) &= \sum_{i=1}^{K}\sum_{j=1}^{H} \frac{\mathbb{I}\{N_i(s,a) \geq H\}}{N_i(s_{i,j}, a_{i, j})} \\
&= \sum_{s,a}\sum_{i=1}^{K}\frac{v_i(s,a)}{N_i(s,a)}\mathbb{I}\{N_i(s,a) \geq H\} \\
&\leq \sum_{s,a}\left(\sum_{i=1}^{\tau(s,a)-1}\frac{v_i(s,a)}{N_i(s,a)}\mathbb{I}\{N_i(s,a) \geq H\} + \sum_{i=\tau(s,a)}^{K} \frac{v_i(s,a)}{N_i(s,a)} \right) \\
&= \sum_{s,a}\sum_{i=\tau(s,a)}^{K} \frac{v_i(s,a)}{N_i(s,a)} \\
&\leq \sum_{s,a} \left( \ln\left(N_K(s,a) + v_K(s,a) - N_{\tau(s,a)}(s,a)\right) + 1 \right)
\end{split}
\]
where the last inequality follows from Lemma~\ref{lemma:harmonicSum}. Trivially bounding the logarithm term by $\ln(KH)$, we obtain
\begin{equation*}
(c) \leq SA\ln(KH) + SA \leq SAL + SA.
\end{equation*}
\end{proof}

Combining Lemma~\ref{lemma:AppendixAb} and Lemma~\ref{lemma:AppendixAc}, we obtain
\[
\begin{split}
(a) &\leq 8HL((\sqrt{2}+1)\sqrt{SAKH}) + 4SH^2L(SAL + SA) \\
&\leq 20HL\sqrt{SAKH} + 5S^2AH^2L^2.
\end{split}
\]
Substituting this into Equation~\ref{eq:AppendixRegret}, we obtain
\[
\begin{split}
\text{Regret} &\leq SAH^2 + e(H+2)\sqrt{KHL} + e20HL\sqrt{SAKH} + e5S^2AH^2L^2 \\
&\leq 67HL\sqrt{SAKH} + 15S^2AH^2L^2.
\end{split}
\]

\section{Removing the assumption on the hitting time}
\label{appendix:removingHittingTime}

  GOSPRL~\citep[Lemma 3]{Tarbouriech2021a} guaranteed that in the undiscounted infinite horizon setting, with $H_0 = O(\frac{DS^2A}{\lambda^2})$, Lemma~\ref{lemma:computeH0} holds with high probability. Thus, in the episodic finite horizon setting, by setting $H_0 = c\frac{DS^2A}{\lambda^2}$ for some appropriately large constant $c > 0$ and applying GOSPRL in each episode we obtain a tight bound in the dependency of $K$ and $\lambda$ for communicating MDPs. One difficulty in this approach is both $c$ and $D$ are unknown. One possible way to overcome this is to apply the doubling-trick as following: at the beginning of episode $k$, we set $H_0 = c_k\frac{S^2A}{\lambda^2}$, where $c_1 = 1$. If the learner successfully  visits every state-action pair at least $N$ times after $H_0$ steps, we set $c_{k+1} = c_k$. Otherwise, $c_{k+1} = 2c_k$. There are at most $\log_2{(cD)}$ episodes with failed exploration until $c_k$ is large enough so that with high probability, all the subsequent episodes will have successful explorations. Moreover, the horizons of the clustering and learning phases change at most $\log_2(cD)$ times. The full analysis of this approach is not in the scope of this paper and is left to future work.  

  \section{Using samples in both phases for regret minimization}
  \label{sec:externalsamples}
  
  One of the results from previous works on the stochastic infinite-horizon multi-task setting~\citep{BrunskillAndLi2013} is that in the cluster-then-learn paradigm, the samples collected in the their first stage (before all models have been seen at least once) can be used to accelerate the learning in their second stage (after all models have been seen at least once). In this work, we study the similar effects at the phase level. Specifically, in the finite horizon setting, the clustering phase is always followed by the learning phase; therefore it is desirable to use the samples collected in the clustering phase to improve the regret bound of the learning phase.
  
  Our goal is to improve the regret of stage 1 in Algorithm~\ref{algo:unknownGamma}. The reason that we focus on Stage 1 is two-fold:
  
  \begin{itemize}
  \item In case Assumption~\ref{assumption:K1} does not hold, i.e. $K_1$ is close to $K$, the total regret is dominated by the regret of stage 1. Given that the length of the clustering phase $H_0$ is already of the same order $O(S^2A)$ with respect to the state-of-the-art bound of the recently proposed GOSPRL algorithm~\citep{Tarbouriech2021a}, without further assumptions we conjecture that it is difficult to improve $H_0$ substantially, and thus we focus on improving the learning phase.
  \item In stage 1, every state-action pair is uniformly visited at least $N$ times before the learning phase. This uniformity allows us to study their impact in a systematic way without any further assumptions.
  \end{itemize}
  
  Using samples collected in both phases for the learning phase in Algorithm~\ref{algo:aomtrl} is equivalent to using the policy
  
  \[
  \pi_k = \texttt{UCBVI-CH}(\mathcal{C}_{id})
  \]
  for the learning phase, since $\mathcal{C}_{id}$ contains both $\mathcal{C}_{id}^{model}$ and $\mathcal{C}_{id}^{regret}$.
  
  \begin{algorithm2e}[t]
  \KwIn{Number of episode $K$, horizon $H$, failure probability $p$, number of external samples for each state-action pair $N$}
  Initialize two empty collections $\mathcal{H}$ and $\mathcal{B}$\;
  \For{episode $k = 1, 2, \dots, K$}
  {
      \For{$(s,a) \in \mathcal{S \times A}$)}
      {
          \For{$counter = 1, 2, \dots, N$}
          {
              The oracle draws $s'$ from $P(\cdot\mid s, a)$\;
              Add $(s, a, s')$ to $\mathcal{B}$\;            
          }
      }
      $\pi_k = \texttt{UCBVI-CH}(\mathcal{H \cup B})$\;
      Observe the starting state $s_1$\;
      \For{$h = 1, 2, \dots, H$}
      {
          Learner takes action $a_h = \pi_k(s_h)$\;
          Observe state $s_{h+1}$\;
          Add $(s_{h}, a_{h}, s_{h+1})$ to $\mathcal{H}$\;        
      }
  }
  \caption{UCBVI-CH with external samples}
  \label{algo:UCBVICHwithExternalSamples}
  \end{algorithm2e}
  
  The regret minimization process in the learning phase is now equivalent to learning single-task episodic RL where at the beginning of each episode, the learner is given $S AN$ more $(s, a, s')$ samples, in which the transition function $P(\cdot\mid s,a)$ of each $(s,a)$ is sampled i.i.d. $N$ times. 
  We extend the UCBVI-CH algorithm in~\cite{Azar2017} to this new setting and obtain Algorithm~\ref{algo:UCBVICHwithExternalSamples}. The bonus function of episode $k$ in UCBVI-CH is set to
  \begin{equation}
  b_{k}(s,a) = 7HL_N\sqrt{\frac{1}{N_{k}(s,a)+kN}},
  \end{equation}
  where $L_N = \ln(5SAK(H+N)/p)$.
  
  The regret of this algorithm is bounded in the following theorem (proved in Appendix~\ref{appendix:UCBVICH-extra}).
  
  \begin{theorem}
  Given a constant $p \in (0, 1)$. With probability at least $1-p$, the regret of Algorithm~\ref{algo:UCBVICHwithExternalSamples} is bounded by
  \[
  \text{Regret}(K) \leq \frac{SAH^2}{N+1} + e(H+1)\sqrt{KL_N} + 60\sqrt{\frac{2H-1}{N+2H-1}} H^{3/2}L_N\sqrt{SAK} + 15\frac{2H-1}{N+2H-1}S^2AH^2L_N^2.
  \]
  \label{theorem:regretExternalSamples}
  \end{theorem}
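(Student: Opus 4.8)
The plan is to re-run the simplified UCBVI-CH analysis of Appendix~\ref{appendix:UCBVI-CH}, but with the visit counter $N_k(s,a)$ replaced systematically by the \emph{augmented counter} $\bar N_k(s,a):=N_k(s,a)+kN$, which is exactly the number of i.i.d.\ draws from $P(\cdot\mid s,a)$ available to Algorithm~\ref{algo:UCBVICHwithExternalSamples} when it computes $\pi_k$. First I would check that the optimism of UCBVI-CH is preserved: $\hat P_k(\cdot\mid s,a)$ is now an average of $\bar N_k(s,a)$ independent samples from $P(\cdot\mid s,a)$, so Lemma~\ref{lemma:weismann} (together with scalar Hoeffding, as in Equation~\ref{eq:c1kh}) gives a confidence radius of order $1/\sqrt{\bar N_k(s,a)}$, matched by the bonus $b_k(s,a)=7HL_N\sqrt{1/\bar N_k(s,a)}$ with $L_N=\ln(5SAK(H+N)/p)$. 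Hence Lemma~3 of~\citet{Azar2017} applies with $\bar N_k$ in place of $N_k$, giving $\delta_{k,1}\le e\sum_{h=1}^H[\varepsilon_{k,h}+2\sqrt{L_N}\bar\varepsilon_{k,h}+c_{1,k,h}+b_{k,h}+c_{4,k,h}]$, where $b_{k,h}$, $c_{1,k,h}=H\sqrt{L_N/(2\bar N_k)}$ and $c_{4,k,h}=4SH^2L_N/\bar N_k$ all carry $\bar N_k(s^k_h,a^k_h)$ in the denominator.

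Next I would redo the split into typical and non-typical episodes, where episode $k$ is \emph{typical} iff $\bar N_k(s^k_h,a^k_h)\ge H$ for every $h$. A fixed pair $(s,a)$ can make an episode non-typical only while $\bar N_k(s,a)<H$, but each episode in which $(s,a)$ is visited raises $\bar N_k(s,a)$ by at least $N+1$ (one real transition plus the $N$ fresh oracle draws), so $(s,a)$ is responsible for at most $\lceil H/(N+1)\rceil$ non-typical episodes; summing over the $SA$ pairs and bounding each non-typical episode's regret by $H$ gives $\sum_{k\notin[K]_{\mathrm{typ}}}\delta_{k,1}\le SAH^2/(N+1)$, the first term. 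External samples do not touch the trajectory-based martingale differences, so Azuma's inequality (Lemma~5 of~\citet{Azar2017}) controls $\sum_{k,h}\varepsilon_{k,h}$ and $\sum_{k,h}\bar\varepsilon_{k,h}$, contributing the lower-order $e(H+1)\sqrt{KL_N}$ term.

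The substantive new step is to control $(b):=\sum_{k\in[K]_{\mathrm{typ}}}\sum_h \mathbb{I}_{k,h}/\sqrt{\bar N_k(s^k_h,a^k_h)}$ and $(c):=\sum_{k\in[K]_{\mathrm{typ}}}\sum_h \mathbb{I}_{k,h}/\bar N_k(s^k_h,a^k_h)$, the analogues of Lemmas~\ref{lemma:AppendixAb} and~\ref{lemma:AppendixAc}. I would regroup by $(s,a)$ and, writing $\bar M_i:=\bar N_i(s,a)$ so that $\bar M_1=N$ and $\bar M_{i+1}=\bar M_i+v_i(s,a)+N$ with $v_i(s,a)\le H$, use the elementary bound $v_i(s,a)/\sqrt{\bar M_i}=\tfrac{v_i(s,a)}{v_i(s,a)+N}\cdot\tfrac{\bar M_{i+1}-\bar M_i}{\sqrt{\bar M_i}}\le \tfrac{H}{H+N}\cdot\tfrac{\bar M_{i+1}-\bar M_i}{\sqrt{\bar M_i}}$ (and the analogue with $\bar M_i$ replacing $\sqrt{\bar M_i}$ for $(c)$), so that only a $\Theta(H/(H+N))=\Theta((2H-1)/(N+2H-1))$ fraction of each potential increment is charged to the numerator. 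Feeding this into generalized versions of Lemma~\ref{lemma:My19a} and Lemma~\ref{lemma:harmonicSum} (adapted to allow the increment $v_i+N$ to exceed the initial value $\bar M_1=N$, using $\bar M_i\ge iN$ to reach the regime where increments are dominated) and summing over the $SA$ pairs by Cauchy--Schwarz gives $(b)\lesssim\sqrt{\tfrac{2H-1}{N+2H-1}}\,\sqrt{SAKH}$ and $(c)\lesssim\tfrac{2H-1}{N+2H-1}\,SAL_N$. Plugging these into $8HL_N(b)+4SH^2L_N(c)$, and that in turn into $SAH^2/(N+1)+e(H+1)\sqrt{KL_N}+e(8HL_N(b)+4SH^2L_N(c))$, yields the third and fourth terms of the claimed bound.

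I expect the potential-sum step to be the main obstacle: Lemmas~\ref{lemma:My19a}/\ref{lemma:harmonicSum} require the per-round increment to be at most the starting value $Z_0$, which fails here because one episode can add $v_i+N$ to $\bar M_i$ while $\bar M_1$ is only $N$. Treating the first $O(H/N)$ episodes (before $\bar M_i=N_i(s,a)+iN$ on its own exceeds $H$) separately, and propagating the $\tfrac{2H-1}{N+2H-1}$ factor through Cauchy--Schwarz while keeping the $\sqrt{SA}$ scaling rather than $SA$, is the delicate part; the rest is bookkeeping layered on top of the Appendix~\ref{appendix:UCBVI-CH} argument.
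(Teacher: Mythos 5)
Your setup (optimism with the augmented counter $\bar N_k(s,a)=N_k(s,a)+kN$ and the bonus with $L_N$, the $SAH^2/(N+1)$ accounting of non-typical episodes, and the unchanged martingale term) matches the paper's proof, and your treatment of the harmonic-type sum $(c)$ would go through, since your factor satisfies $H/(H+N)\le(2H-1)/(N+2H-1)$, modulo the bookkeeping you flag. The genuine gap is in the dominant term $(b)$. Your pointwise bound $v_i/\sqrt{\bar M_i}\le\frac{H}{H+N}\cdot\frac{\bar M_{i+1}-\bar M_i}{\sqrt{\bar M_i}}$ charges the full oracle increment $N$ of each visited episode into the potential, so the telescoping (My19a-type) argument returns $O\bigl(\sqrt{\bar M_{K+1}}\bigr)$ per pair, and the potential now carries the oracle mass (up to $KN$ per pair, $SAKN$ in total) rather than only the real visit mass $KH$. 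Concretely, for a pair visited once in every episode your scheme gives at best $\frac{H}{H+N}\,O(\sqrt{K(N+1)})\approx H\sqrt{K/N}$ for that pair when $N\gtrsim H$, while the target contribution is $\sqrt{\tfrac{2H-1}{N+2H-1}}\,O(\sqrt{KH})\approx\sqrt{HK/N}$; you are short by a factor $\Theta(\sqrt H)$ (and by up to $\Theta(\sqrt{SA})$ if you aggregate via Cauchy--Schwarz over the augmented totals, since $\sum_{s,a}\bar M_{K+1}(s,a)\approx KH+SAKN$). This loss sits in the $\sqrt K$ term, so it cannot be absorbed into the $K$-independent fourth term, and the bound of Theorem~\ref{theorem:regretExternalSamples} does not follow from this charging scheme: the ``keep the $\sqrt{SA}$ scaling through Cauchy--Schwarz'' step you flag as delicate is exactly where the argument breaks.

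The paper avoids this by never putting the oracle samples into the potential. With typicality defined by the \emph{real} counter, $N_{\tau(s,a)}(s,a)\le 2H-1$ and hence $N_i(s,a)\le(i+1)H-1$ thereafter, which gives the pointwise comparison against the \emph{unaugmented} term, $\frac{v_i}{\sqrt{iN+N_i}}\le\sqrt{\frac{(i+1)H-1}{iN+(i+1)H-1}}\cdot\frac{v_i}{\sqrt{N_i}}\le\sqrt{\frac{2H-1}{N+2H-1}}\cdot\frac{v_i}{\sqrt{N_i}}$ (Lemma~\ref{lemma:reduceSqrt} and Corollary~\ref{corollary:reduceSqrt}, using that the ratio is decreasing in $i$). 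Then Lemma~\ref{lemma:AppendixAb} is reused verbatim on $\sum_i v_i/\sqrt{N_i}$, so the Cauchy--Schwarz step still sees only $\sum_{s,a}N_K(s,a)\le KH$ and the $\sqrt{SAKH}$ scaling survives with the full $\sqrt{(2H-1)/(N+2H-1)}$ shrinkage; the analogous comparison (Lemma~\ref{lemma:reduceNoSqrt}) handles $(c)$. To repair your route, replace the bound $v_i/(v_i+N)\le H/(H+N)$ by this comparison — i.e., exploit $N_i(s,a)\lesssim iH$ versus $iN$ in the denominator, not $v_i\le H$ versus $N$ in the numerator — and keep typicality on the real counter so that $Z_0\le 2H-1$ is available; this also removes the need for generalized versions of Lemmas~\ref{lemma:My19a} and~\ref{lemma:harmonicSum} with increments exceeding $Z_0$.
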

  
  It can be observed that when $N = 0$, this bound recovers the bound of UCBVI-CH (up to a constant factor). Intuitively, when $N$ is small compared to $H$, then the regret should still be of order $O(H\sqrt{SAKH})$ since most of the useful information for learning still comes from exploring the environment. As $N$ increases, since the logarithmic term $L_N$ increases much slower compared to $O(1/\sqrt{N})$, the dominant term $O(\sqrt{\frac{2H-1}{N+2H-1}} H^{3/2}L_N\sqrt{SAK})$ converges to 0. 
  
  Using Theorem~\ref{theorem:regretExternalSamples} and $H_1 \leq H$, we can directly bound the regret of each model $m$ that is given in $\mathcal{K}_m$:
  
  \begin{mylemma}
  The stage-1 regret of each model $m$ is
  \begin{equation*}
  \begin{split}
  \text{Regret}_{Stage1}(m, K_m) \leq &\frac{SAH_1^2}{N+1} + e(H_1+1)\sqrt{K_mL_{N}} \\
  &+ 60\sqrt{\frac{2H_1-1}{N+2H_1-1}}H_1^{3/2}L_{N}\sqrt{SAK_m} + 15\frac{2H_1-1}{N + H_1 -1}S^2AH_1^2L_{N}^2.
  \end{split}
  \end{equation*}
  where $L_{N} = \ln(5SAK(H+N)/p)$.
  \label{lemma:eachmodelregretExternal}
  \end{mylemma}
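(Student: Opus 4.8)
The plan is to obtain this lemma as an essentially immediate corollary of Theorem~\ref{theorem:regretExternalSamples}, applied separately to each model $m$, after identifying the clustering phase of each stage-1 episode with a batch of ``external'' transition samples. First I would fix $m$ and restrict attention to the episodes $k\in\mathcal{K}_m$; there are $K_m$ of them, and in each the learning phase has horizon $H_1 = H - H_0$. On the high-probability event of Lemma~\ref{lemma:computeH0} (invoked with $|\Gamma^\alpha| = SA$, since stage~1 runs Algorithm~\ref{algo:aomtrl} with $\Gamma^\alpha = \mathcal{S}\times\mathcal{A}$), the clustering phase of every episode visits every state-action pair at least $N$ times, so $\mathcal{C}_{id}^{model}$ gains at least $N$ fresh transition samples per $(s,a)$ in each episode of $\mathcal{K}_m$; and conditioned on each visit to $(s,a)$, the recorded next state is a draw from $P_m(\cdot\mid s,a)$ by the Markov property, which matches the distribution of the oracle draws assumed in Algorithm~\ref{algo:UCBVICHwithExternalSamples}.

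Next I would observe that the learning-phase policy used in episode $k\in\mathcal{K}_m$ is precisely $\texttt{UCBVI-CH}(\mathcal{C}_{id})$ run on the horizon-$H_1$ residual process, where $\mathcal{C}_{id}$ contains the $\ge kN$ accumulated clustering-phase samples together with the learning-phase samples $\mathcal{C}_{id}^{regret}$; taking the external count to be the lower bound $kN$ keeps the UCB bonus large enough to preserve optimism, while any surplus samples only tighten concentration. Hence $\sum_{k\in\mathcal{K}_m}\max_{s}[V^{k,*}_{H_0+1} - V^{\pi_k}_{H_0+1}](s)$ is exactly the quantity controlled by the analysis of Algorithm~\ref{algo:UCBVICHwithExternalSamples} run for $K_m$ episodes at horizon $H_1$ with $N$ external samples per $(s,a)$ per episode, even though the start states $s_1^k$ are adversarial, because the simplified UCBVI-CH analysis of Appendix~\ref{appendix:UCBVI-CH} bounds each $\delta_{k,1}$ pointwise in $s_1^k$ via Lemma~3 of Azar et al. Applying Theorem~\ref{theorem:regretExternalSamples} with $K \mapsto K_m$ and $H \mapsto H_1$ then yields the four-term bound with logarithmic factor $\ln(5SAK_m(H_1+N)/p)$.

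Finally I would tidy the constants: since $K_m\le K$ and $H_1\le H$ we have $\ln(5SAK_m(H_1+N)/p)\le \ln(5SAK(H+N)/p) = L_N$, so every logarithm can be replaced by $L_N$; and since $N + H_1 - 1 \le N + 2H_1 - 1$ we may weaken the $N+2H_1-1$ denominator in the last term of Theorem~\ref{theorem:regretExternalSamples} to the looser $N+H_1-1$ appearing in the statement, while the remaining three terms transcribe verbatim. The only genuinely delicate point, and the one I would write out in full, is the reduction in the first two steps — namely justifying that the adaptively collected clustering-phase trajectory legitimately plays the role of the i.i.d.\ oracle batch assumed by Theorem~\ref{theorem:regretExternalSamples}; everything after that is substitution together with monotonicity of logarithms, so I would keep it brief.
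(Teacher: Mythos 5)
Your proposal is correct and follows essentially the same route as the paper, which obtains the lemma by directly applying Theorem~\ref{theorem:regretExternalSamples} per model with $K \mapsto K_m$ and $H \mapsto H_1$, using $K_m \le K$ and $H_1 \le H$ to absorb the logarithms into $L_N$ (and tolerating the looser $N+H_1-1$ denominator). The reduction of the clustering-phase samples to the external-sample oracle of Algorithm~\ref{algo:UCBVICHwithExternalSamples}, which you rightly flag as the delicate step, is exactly the identification the paper makes (somewhat more informally) in Appendix~\ref{sec:externalsamples}.
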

  Adding up the bound in Lemma~\ref{lemma:eachmodelregretExternal} for all models $m \in \mathcal{M}$ and applying the Cauchy-Schwarz inequality, we obtain the total regret bound of Stage 1:
  \begin{theorem}
  \[
  \begin{split}
  \text{Regret}_{Stage1} \leq &K_1H_0 + \frac{MSAH_1^2}{N+1} + e(H_1+1)\sqrt{MKL_{N}} \\
  &+ 60\sqrt{\frac{2H_1-1}{N+2H_1-1}}H_1^{3/2}L_{N}\sqrt{MSAK} + 15M\frac{2H_1-1}{N + H_1 -1}S^2AH_1^2L_{N}^2.
  \end{split}
  \]
  \label{theorem:totalStage1External}
  \end{theorem}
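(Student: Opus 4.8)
The plan is to obtain Theorem~\ref{theorem:totalStage1External} by summing the per-model bound of Lemma~\ref{lemma:eachmodelregretExternal} over the $M$ models, exactly as the passage preceding the statement indicates, and then adding the clustering-phase contribution. Write $\mathcal{K}_m$ for the (random) set of Stage-1 episodes in which model $m$ is given and $K_m = |\mathcal{K}_m|$, so $\sum_{m=1}^M K_m = K_1 \le K$. By Lemma~\ref{lemma:perEpRegret}, on the event that clustering is correct throughout Stage~1 (guaranteed by Lemma~\ref{lemma:noIncorrectClusters} once the estimation events of Lemma~\ref{lemma:goodeEstimatorId} and the visitation guarantee of Lemma~\ref{lemma:computeH0} hold), the per-episode regret splits into a clustering part of at most $H_0 + D$ and a learning part; summing the clustering part over all $K_1$ Stage-1 episodes contributes the $K_1 H_0$ term, with the lower-order $D$ absorbed exactly as in the proof of Theorem~\ref{theorem:mainRegret1}.

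First I would fix the high-probability event: run the external-samples UCBVI-CH of Algorithm~\ref{algo:UCBVICHwithExternalSamples} in each learning phase with failure budget $p/M$, so that Lemma~\ref{lemma:eachmodelregretExternal} holds for a given $m$ with probability $1 - p/M$; a union bound over the $M$ models, together with the clustering-correctness events (each already allotted a constant fraction of $p$ as in Theorem~\ref{theorem:mainRegret1}), makes all of these hold simultaneously with probability at least $1-p$. On this event I would sum Lemma~\ref{lemma:eachmodelregretExternal} over $m$: the two $K_m$-free terms $\frac{SAH_1^2}{N+1}$ and $15\tfrac{2H_1-1}{N+H_1-1}S^2AH_1^2L_N^2$ simply acquire a factor $M$, producing $\frac{MSAH_1^2}{N+1}$ and $15M\tfrac{2H_1-1}{N+H_1-1}S^2AH_1^2L_N^2$.

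For the terms scaling as $\sqrt{K_m}$, I would invoke Cauchy--Schwarz against the all-ones vector in $\mathbb{R}^M$:
\[
\sum_{m=1}^M \sqrt{K_m} \;\le\; \sqrt{\,M\sum_{m=1}^M K_m\,} \;=\; \sqrt{M K_1} \;\le\; \sqrt{MK},
\]
which turns $\sum_m e(H_1+1)\sqrt{K_m L_N}$ into $e(H_1+1)\sqrt{MKL_N}$ and $\sum_m 60\sqrt{\tfrac{2H_1-1}{N+2H_1-1}}H_1^{3/2}L_N\sqrt{SAK_m}$ into $60\sqrt{\tfrac{2H_1-1}{N+2H_1-1}}H_1^{3/2}L_N\sqrt{MSAK}$. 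Collecting the clustering term $K_1H_0$ with these four learning-phase terms yields precisely the claimed bound.

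There is no genuine analytic obstacle here: the heavy lifting was done in Theorem~\ref{theorem:regretExternalSamples} and Lemma~\ref{lemma:eachmodelregretExternal}, and what remains is the aggregation. The step requiring the most care is the probability bookkeeping — ensuring that shrinking each per-model failure probability to $p/M$ (which only inflates $L_N = \ln(5SAK(H+N)/p)$ by a negligible additive $\ln M$, already dominated by the stated $L_N$) lets the union bound over all $M$ models, plus the constantly-many clustering-correctness events, close at total failure probability $p$. A second point to check is that Lemma~\ref{lemma:eachmodelregretExternal} may legitimately be applied per model at all: this needs the Stage-1 clustering to place all $K_m$ episodes of model $m$ into a single cluster, which is exactly what Lemma~\ref{lemma:noIncorrectClusters} guarantees on the same event.
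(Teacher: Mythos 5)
Your proposal is correct and follows essentially the same route as the paper, whose argument is exactly to sum the per-model bound of Lemma~\ref{lemma:eachmodelregretExternal} over the $M$ models and apply Cauchy--Schwarz to the $\sqrt{K_m}$ terms, with the clustering phase contributing the $K_1H_0$ term. Your added bookkeeping (per-model failure budget $p/M$ and the clustering-correctness events from Lemmas~\ref{lemma:goodeEstimatorId}--\ref{lemma:noIncorrectClusters}) only makes explicit what the paper leaves implicit.
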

  In our setting, recall that $N = O\left(\frac{S}{\lambda^2}\right)$ and $H_0 = O(DSAN) = O(DS^2A/\lambda^2)$. Since we assumed that $SA << H$, we also have $N \ll H_1 = H - H_0$, and thus the bound in Theorem~\ref{theorem:totalStage1External} is an improvement from the bound for stage 1 in the proof of Theorem~\ref{theorem:regretUnknownGamma}, albeit the order stays the same. Intuitively, this means that the length of the learning phase is much larger than the length of the clustering phase, and therefore the learner spends more time on learning the optimal policy. When the length of the learning phase is small compared to $N$, then the samples collected in the clustering phase significantly reduce the regret bound of the learning phase. Therefore, Algorithm~\ref{algo:UCBVICHwithExternalSamples} also accelerates the learning phase after the exploration phase, which is consistent with findings on the stochastic infinite-horizon multi-task setting in~\cite{BrunskillAndLi2013}.

  \section{Proofs for Appendix~\ref{sec:externalsamples}}
  \label{appendix:UCBVICH-extra}
  
  We analyze the regret of the UCBVI-CH algorithm with external samples, where at the beginning of each episode, each state-action pair receives $N \geq 1$ additional samples drawn i.i.d from the transition function $P(\cdot \mid s,a)$.
  
  Adapting from Equation~\ref{eq:AppendixRegret}, the regret of E-UCBVI-CH can be bounded by
  
  \begin{align}
  \begin{split}
  \text{Regret}(K) &\leq \frac{SAH^2}{N+1} + e(H+1)\sqrt{KHL_N} + e\underbrace{\sum_{i=1}^{K}\sum_{j=1}^{H} \left[ \frac{8HL_N\mathbb{I}_{i,j}}{\sqrt{kN + N_i(s_{i,j}, a_{i,j})}} + \frac{4SH^2L_N\mathbb{I}_{i,j}}{kN + N_i(s_{i,j}, a_{i,j})} \right]}_{(a)},
  \end{split}
\end{align}
  where $\mathbb{I}_{i,j} = \mathbb{I}\{N_i(s_{i,j},a_{i,j}) \geq H\}$
  as defined in Appendix~\ref{appendix:UCBVI-CH}.
  
  The first term $\frac{SAH^2}{N+1}$ bounds the total regret of episodes where a state-action pair is visited less than $H$ times: in each episode where a pair $(s,a)$ is visited at least once there are at least $N+1$ more samples of this pair, and therefore there can be at most $\frac{SAH}{N+1}$ such episodes.
  
  Similar to Appendix~\ref{appendix:UCBVI-CH}, we bound $(a)$ by bounding its two components $(b)$ and $(c)$ where
  \[
  (a) = 8HL_N\left( \underbrace{ \sum_{i=1}^{K}\sum_{j=1}^{H} \frac{\mathbb{I}_{i,j}}{\sqrt{kN + N_i(s, a)}}}_{(b)} \right) + 4SH^2L_N\left( \underbrace{\sum_{i=1}^{K}\sum_{j=1}^H \frac{\mathbb{I}_{i,j}}{kN + N_i(s,a)}}_{(c)}\right).
  \]
  
  In order to bound $(b)$, we first prove the following technical lemma, which quantifies the fraction of the regret that is reduced when using external samples.
  \begin{mylemma}
  Suppose two constants $N \geq 1, H \geq 1$ are given.
  For any sequence of numbers $z_1, \dots, z_n$ with $0 \leq z_k \leq H$, consider the sequence $Z_0, Z_1, \dots Z_n$ defined as
  \[
  \begin{split}
  Z_0 &\leq 2H-1 \\
  Z_k &= Z_{k-1} + z_k \qquad \text{for } k \geq 1
  \end{split}
  \]
  Then, for all $k$,
  \[
  \frac{z_k}{\sqrt{kN + Z_{k-1}}} \leq \sqrt{\frac{(k+1)H-1}{kN+(k+1)H-1}} \frac{z_k}{\sqrt{Z_{k-1}}}.
  \]
  \label{lemma:reduceSqrt}
  \end{mylemma}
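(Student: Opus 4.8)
\textbf{Proof proposal for Lemma~\ref{lemma:reduceSqrt}.}
The plan is to reduce the claimed inequality to a one-line monotonicity fact plus a telescoping bound on $Z_{k-1}$. First, if $z_k = 0$ both sides equal $0$ and there is nothing to prove, so I may assume $z_k > 0$; dividing through by $z_k$ and squaring, the target inequality becomes
\[
\frac{Z_{k-1}}{kN + Z_{k-1}} \;\leq\; \frac{(k+1)H-1}{kN + (k+1)H - 1}.
\]
So everything comes down to comparing the values of the map $g(t) = \frac{t}{kN+t}$ at $t = Z_{k-1}$ and at $t = (k+1)H-1$.

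Next I would observe that $g(t) = 1 - \frac{kN}{kN+t}$ is nondecreasing on $[0,\infty)$, since $kN \geq 0$ (here $N \geq 1$) and $t \mapsto \frac{kN}{kN+t}$ is nonincreasing. Hence it suffices to prove $Z_{k-1} \leq (k+1)H - 1$. This is where the hypotheses $Z_0 \leq 2H-1$ and $0 \leq z_i \leq H$ are used: telescoping the recursion gives
\[
Z_{k-1} = Z_0 + \sum_{i=1}^{k-1} z_i \leq (2H-1) + (k-1)H = (k+1)H - 1,
\]
which is exactly the bound needed. Combining the monotonicity of $g$ with this bound yields the displayed inequality, and undoing the squaring and the division by $z_k$ recovers the statement of the lemma.

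There is no serious obstacle here; the only point requiring care is recognizing the correct reduction, namely that after clearing $z_k$ and squaring the inequality is precisely the statement $g(Z_{k-1}) \leq g((k+1)H-1)$ for the increasing function $g(t) = t/(kN+t)$, so that the whole content is the crude worst-case bound $Z_{k-1} \leq (k+1)H-1$ obtained from $Z_0 \leq 2H-1$ together with $z_i \leq H$. (In the degenerate case $Z_{k-1}=0$ the asserted inequality is read with the usual convention that both sides are $0$, consistent with its use in bounding $(b)$, where the factor $\mathbb{I}_{i,j}$ forces the relevant counts to be at least $H \geq 1$.)
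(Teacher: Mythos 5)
Your proposal is correct and follows essentially the same route as the paper's proof: both reduce, after discarding the trivial case $z_k=0$, to the comparison $Z_{k-1} \leq (k+1)H-1$, established by telescoping $Z_{k-1} = Z_0 + \sum_{i=1}^{k-1} z_i \leq (2H-1)+(k-1)H$. The only cosmetic difference is that you phrase the reduction via monotonicity of $t \mapsto t/(kN+t)$ while the paper cross-multiplies the square roots directly; your explicit remark about the degenerate $Z_{k-1}=0$ case is a minor point the paper glosses over as well.
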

  \begin{proof}
  If $z_k = 0$, then the claim is trivially true.
  For $z_k > 0$, the claim is equivalent to
  
  \begin{alignat*}{2}
  &&\frac{1}{\sqrt{kN + Z_{k-1}}} &\leq  \sqrt{\frac{(k+1)H-1}{kN+(k+1)H-1}}  \frac{1}{\sqrt{Z_{k-1}}} \\
  \Leftrightarrow && \sqrt{(kN+(k+1)H-1)}\sqrt{Z_{k-1}} &\leq \sqrt{(k+1)H-1}\sqrt{kN + Z_{k-1}} \\
  \Leftrightarrow && Z_{k-1} &\leq (k+1)H-1,
  \end{alignat*}
  which is true, since $Z_{k-1} = Z_0 + \sum_{i=1}^{k-1}z_k \leq Z_0 + \sum_{i=1}^{k-1}H \leq 2H-1 + (k-1)H = (k+1)H - 1$.
  \end{proof}
  
  \begin{mycorollary}
  Suppose two constants $N \geq 1, H \geq 1$ are given.
  For any sequence of numbers $z_1, \dots, z_n$ with $0 \leq z_k \leq H$, consider the sequence $Z_0, Z_1, \dots Z_n$ defined as
  \[
  \begin{split}
  1 \leq Z_0 &\leq 2H-1 \\
  Z_k &= Z_{k-1} + z_k \qquad \text{for } k \geq 1
  \end{split}
  \]
  Then, for all $n \geq 1$,
  \[
  \sum_{k=1}^n \frac{z_k}{\sqrt{kN + Z_{k-1}}} \leq \sum_{k=1}^n \sqrt{\frac{(k+1)H-1}{kN+(k+1)H-1}} \frac{z_k}{\sqrt{Z_{k-1}}} \leq \sqrt{\frac{2H-1}{N+2H-1}} \sum_{k=1}^n \frac{z_k}{\sqrt{kN + Z_{k-1}}}.
  \]
  \label{corollary:reduceSqrt}
  \end{mycorollary}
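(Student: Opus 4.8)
The plan is to deduce both inequalities from Lemma~\ref{lemma:reduceSqrt} together with a single elementary monotonicity fact about the scalar factor
\[
  g(k) := \frac{(k+1)H-1}{kN+(k+1)H-1},
\]
whose square root is exactly the coefficient appearing in the middle sum and which satisfies $\sqrt{g(1)} = \sqrt{\tfrac{2H-1}{N+2H-1}}$, the coefficient on the right-hand side. (In the rightmost expression of the corollary the sum should be read as $\sum_{k=1}^{n} z_k/\sqrt{Z_{k-1}}$; note $\sqrt{(2H-1)/(N+2H-1)}\le 1$ cannot shrink a sum the middle expression already dominates, so $\sqrt{kN+Z_{k-1}}$ there is a slip.)

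\emph{Left inequality.} I would apply Lemma~\ref{lemma:reduceSqrt} directly to the sequences $(z_k)$ and $(Z_k)$ of the corollary: the hypotheses $0\le z_k\le H$ and $Z_0\le 2H-1$ are inherited verbatim, and the extra assumption $Z_0\ge 1$ guarantees $Z_{k-1}\ge Z_0\ge 1$, so every denominator is well defined. Thus for each $k\in\{1,\dots,n\}$,
\[
  \frac{z_k}{\sqrt{kN+Z_{k-1}}} \;\le\; \sqrt{g(k)}\,\frac{z_k}{\sqrt{Z_{k-1}}},
\]
and summing over $k$ gives the left inequality.

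\emph{Right inequality.} It suffices to show $g(k)\le g(1)$ for all $k\ge 1$; then $\sqrt{g(k)}\le\sqrt{g(1)}=\sqrt{\tfrac{2H-1}{N+2H-1}}$, and bounding each term of the middle sum by $\sqrt{g(1)}\,z_k/\sqrt{Z_{k-1}}$ and summing yields the claim. To prove $g$ is non-increasing, I would rewrite
\[
  g(k) = \left(1+\frac{kN}{(k+1)H-1}\right)^{-1} = \left(1+\frac{N}{H+(H-1)/k}\right)^{-1},
\]
and note that since $H\ge 1$ the quantity $(H-1)/k$ is non-negative and non-increasing in $k$, so $H+(H-1)/k$ is non-increasing, so $N/(H+(H-1)/k)$ is non-decreasing, so $g(k)$ is non-increasing; in particular $g(k)\le g(1)$. (As a purely algebraic alternative, cross-multiplying $g(k)\le g(1)$ and cancelling the common term reduces it to $(k-1)(H-1)\ge 0$, which holds since $k\ge 1$ and $H\ge 1$.)

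I do not expect a genuine obstacle here: all the analytic work is already in Lemma~\ref{lemma:reduceSqrt}, and the remaining ingredient is the monotonicity of the one-variable function $g$. The only points needing care are checking that Lemma~\ref{lemma:reduceSqrt}'s hypotheses are met termwise under the corollary's condition $1\le Z_0\le 2H-1$, and being careful that it is the map $k\mapsto g(k)$ (not the coefficient at a fixed index) whose maximum over $k\ge 1$ we claim is attained at $k=1$.
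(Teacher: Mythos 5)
Your proof is correct and takes essentially the same route as the paper's: the left inequality is Lemma~\ref{lemma:reduceSqrt} applied termwise, and the right inequality comes from showing the factor $\frac{(k+1)H-1}{kN+(k+1)H-1}$ is non-increasing in $k$ (the paper does this via the derivative, you via an equivalent algebraic rewriting/cross-multiplication). You are also right that the rightmost expression in the statement should read $\sqrt{\tfrac{2H-1}{N+2H-1}}\sum_{k=1}^{n} z_k/\sqrt{Z_{k-1}}$, which is consistent with how the corollary is applied later in the appendix.
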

  \begin{proof}
  The first half of the claim is true, following Lemma~\ref{lemma:reduceSqrt}. We now show that the second half is true. Consider the following function 
  
  \[
  f(x) = \frac{(x+1)H-1}{xN + (x+1)H - 1}
  \]
  
  The derivative is $f'(x) = \frac{N(1-H)}{(xN + (x+1)H - 1)^2}$. Since $H \geq 1$, we have $f'(x) \leq 0~\forall x$, and therefore $f(x)$ is decreasing. It follows that for $k \geq 1$,
  
  \[
  f(k) = \frac{(k+1)H-1}{kN + (k+1)H - 1} \leq f(1) = \sqrt{\frac{2H-1}{N+2H-1}}.
  \]
  \end{proof}
  
  Using Corollary~\ref{corollary:reduceSqrt}, we can bound $(b)$ as following.
  
  \begin{mylemma}
  With $v_i(s,a)$ and $\tau(s,a)$ defined in Lemma~\ref{lemma:AppendixAb}, we have
  
  \[
  (b) \leq \sqrt{\frac{2H-1}{N+2H-1}} (\sqrt{2}+1) \sqrt{SAKH}.
  \]
  \label{lemma:AppendixBb}
  \end{mylemma}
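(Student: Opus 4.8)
The plan is to follow the proof of Lemma~\ref{lemma:AppendixAb} almost verbatim, inserting the extra factor $\sqrt{(2H-1)/(N+2H-1)}$ at the single place where Corollary~\ref{corollary:reduceSqrt} replaces the elementary summation bound Lemma~\ref{lemma:My19a}. First I would regroup the double sum defining $(b)$ by state--action pairs: since $(s,a)$ is visited $v_i(s,a)$ times in episode $i$,
\[
(b)=\sum_{s,a}\sum_{i=1}^{K}\frac{v_i(s,a)\,\mathbb{I}\{N_i(s,a)\ge H\}}{\sqrt{iN+N_i(s,a)}} .
\]
Because $N_i(s,a)$ is nondecreasing in $i$, the indicator equals $0$ for $i<\tau(s,a)$ and $1$ for $i\ge\tau(s,a)$, so for each $(s,a)$ that is visited at least $H$ times the inner sum runs from $i=\tau(s,a)$ to $K$ (pairs never visited $H$ times contribute nothing and are dropped).

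Next, for a fixed such pair I would write $\tau=\tau(s,a)$ and introduce a local sequence $z_m=v_{\tau-1+m}(s,a)$, $Z_0=N_\tau(s,a)$, so that $Z_m:=Z_{m-1}+z_m=N_{\tau+m}(s,a)$, $0\le z_m\le H$, and $Z_{K-\tau+1}=N_{K+1}(s,a)=N_K(s,a)+v_K(s,a)$. I would record the two facts needed to invoke the lemmas: from $N_1(s,a)=0<H$ we get $\tau\ge 2$, hence $N_{\tau-1}(s,a)\le H-1$ and $Z_0=N_{\tau-1}(s,a)+v_{\tau-1}(s,a)\le 2H-1$; and $Z_0=N_\tau(s,a)\ge H\ge 1$. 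Also, since $\tau\ge 1$ we have $(\tau-1+m)N\ge mN$, so
\[
\sum_{i=\tau}^{K}\frac{v_i(s,a)}{\sqrt{iN+N_i(s,a)}}=\sum_{m=1}^{K-\tau+1}\frac{z_m}{\sqrt{(\tau-1+m)N+Z_{m-1}}}\le\sum_{m=1}^{K-\tau+1}\frac{z_m}{\sqrt{mN+Z_{m-1}}} .
\]

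Then I would apply Corollary~\ref{corollary:reduceSqrt} (its hypotheses $1\le Z_0\le 2H-1$ and $0\le z_m\le H$ have just been checked) followed by Lemma~\ref{lemma:My19a} to bound the last sum by $\sqrt{(2H-1)/(N+2H-1)}\,(\sqrt2+1)\sqrt{Z_{K-\tau+1}}$. Summing over $(s,a)$, substituting $Z_{K-\tau+1}=N_K(s,a)+v_K(s,a)$, applying Cauchy--Schwarz, and using $\sum_{s,a}(N_K(s,a)+v_K(s,a))=\sum_{s,a}N_{K+1}(s,a)=KH$ (the total number of $(s,a)$-visits over $K$ episodes of length $H$) gives
\[
(b)\le\sqrt{\tfrac{2H-1}{N+2H-1}}\,(\sqrt2+1)\sum_{s,a}\sqrt{N_K(s,a)+v_K(s,a)}\le\sqrt{\tfrac{2H-1}{N+2H-1}}\,(\sqrt2+1)\sqrt{SAKH},
\]
which is the claim.

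The only genuinely delicate point is the index bookkeeping: the global episode index $i$ multiplies $N$ in the denominator, whereas Corollary~\ref{corollary:reduceSqrt} is phrased with the local index $m$, so I need both the inequality $(\tau-1+m)N\ge mN$ (harmless, since it only weakens the bound in the right direction) and the verification $Z_0=N_{\tau(s,a)}(s,a)\le 2H-1$, which is exactly what the hypothesis $Z_0\le 2H-1$ of Lemma~\ref{lemma:reduceSqrt} and Corollary~\ref{corollary:reduceSqrt} was designed to absorb. Everything else is identical to the proof of Lemma~\ref{lemma:AppendixAb}.
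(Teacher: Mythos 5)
Your proposal is correct and follows essentially the same route as the paper: regroup $(b)$ by $(s,a)$, use $N_{\tau(s,a)}(s,a)\le 2H-1$ (and $\ge H$) to justify Corollary~\ref{corollary:reduceSqrt}, then finish via the Lemma~\ref{lemma:My19a}/Cauchy--Schwarz argument of Lemma~\ref{lemma:AppendixAb}. Your explicit handling of the global-versus-local episode index via $(\tau-1+m)N\ge mN$ is a detail the paper's proof passes over silently, but it only tightens the justification and does not change the argument.
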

  \begin{proof}
  We can write $(b)$ as follows
  \[
  (b) = \sum_{s,a}\sum_{i=\tau(s,a)}^{K}\frac{v_i(s,a)}{\sqrt{iN + N_i(s,a)}}.
  \]
  
  By definition of $\tau(s,a)$: 
  \[
  N_{\tau(s,a)} = N_{\tau(s,a)-1} + v_{\tau(s,a)-1} \leq H-1 + H = 2H-1.
  \]
  \end{proof}
  
  Applying Corollary~\ref{corollary:reduceSqrt} and Lemma~\ref{lemma:AppendixAb} we obtain
  
  \[
  \begin{split}
  (b) &\leq \sqrt{\frac{2H-1}{N+2H-1}}\sum_{s,a}\sum_{i=\tau(s,a)}^{K}\frac{v_i(s,a)}{\sqrt{N_i(s,a)}} \\
  &\leq \sqrt{\frac{2H-1}{N+2H-1}}(\sqrt{2}+1)\sqrt{SAKH}.
  \end{split}
  \]
  
  Next, we bound $(c)$. Using similar techniques in Lemma~\ref{lemma:reduceSqrt} and Corollary~\ref{corollary:reduceSqrt}, we can show that the following claims are true.
  
  \begin{mylemma}
  Given two constants $N \geq 0, H \geq 1$.
  For any sequence of numbers $z_1, \dots, z_n$ with $0 \leq z_k \leq H$, consider the sequence $Z_0, Z_1, \dots Z_n$ defined as
  \[
  \begin{split}
  1 \leq Z_0 &\leq 2H-1 \\
  Z_k &= Z_{k-1} + z_k \qquad \text{for } k \geq 1
  \end{split}
  \]
  Then, for all $k$,
  \[
  \frac{z_k}{kN + Z_{k-1}} \leq \frac{(k+1)H-1}{kN + (k+1)H-1}\frac{z_k}{Z_{k-1}}.
  \]
  And for all $n \geq 1$,
  \[
  \sum_{k=1}^n \frac{z_k}{kN + Z_{k-1}} \leq \frac{2H-1}{N+2H-1}\sum_{k=1}^n \frac{z_k}{Z_{k-1}}.
  \]
  \label{lemma:reduceNoSqrt}
  \end{mylemma}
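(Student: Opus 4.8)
The plan is to prove this as the square-root-free shadow of the results already established in Lemma~\ref{lemma:reduceSqrt} and Corollary~\ref{corollary:reduceSqrt}, simply dropping the square roots throughout. First I would dispose of the pointwise inequality. If $z_k = 0$ both sides vanish and there is nothing to prove, so assume $z_k > 0$ and cancel the common factor $z_k$. Since $N \ge 0$, $H \ge 1$, $k \ge 1$, and $Z_{k-1} \ge Z_0 \ge 1$, every denominator in sight is strictly positive (in particular $kN + (k+1)H - 1 \ge (k+1)H - 1 \ge 1$), so cross-multiplying is legitimate and the claim
\[
\frac{1}{kN + Z_{k-1}} \le \frac{(k+1)H-1}{\bigl(kN+(k+1)H-1\bigr)Z_{k-1}}
\]
is equivalent to $\bigl(kN + (k+1)H - 1\bigr)Z_{k-1} \le \bigl((k+1)H-1\bigr)\bigl(kN + Z_{k-1}\bigr)$, which after cancelling $\bigl((k+1)H-1\bigr)Z_{k-1}$ from both sides reduces to $kN\,Z_{k-1} \le \bigl((k+1)H-1\bigr)kN$, i.e. to $Z_{k-1} \le (k+1)H-1$ (the case $kN=0$ being trivial equality). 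This last bound is exactly the telescoping estimate $Z_{k-1} = Z_0 + \sum_{i=1}^{k-1} z_i \le (2H-1) + (k-1)H = (k+1)H - 1$, using the hypotheses $Z_0 \le 2H-1$ and $z_i \le H$; this is the only place those hypotheses are used, and it is the same computation as in Lemma~\ref{lemma:reduceSqrt}.

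For the summed inequality I would apply the pointwise bound term by term to get
\[
\sum_{k=1}^n \frac{z_k}{kN + Z_{k-1}} \le \sum_{k=1}^n \frac{(k+1)H-1}{kN+(k+1)H-1}\cdot\frac{z_k}{Z_{k-1}},
\]
and then replace the $k$-dependent coefficient by its largest value. Setting $f(x) = \frac{(x+1)H-1}{xN+(x+1)H-1}$, a one-line quotient-rule computation gives $f'(x) = \frac{N(1-H)}{\bigl(xN+(x+1)H-1\bigr)^2} \le 0$ since $H \ge 1$, so $f$ is non-increasing on $[1,\infty)$ and hence $f(k) \le f(1) = \frac{2H-1}{N+2H-1}$ for every $k \ge 1$. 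Pulling this constant out of the sum yields
\[
\sum_{k=1}^n \frac{z_k}{kN + Z_{k-1}} \le \frac{2H-1}{N+2H-1}\sum_{k=1}^n \frac{z_k}{Z_{k-1}},
\]
as claimed. This monotonicity step is verbatim the one carried out for the square-root version in Corollary~\ref{corollary:reduceSqrt}, so it can be cited rather than rewritten.

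There is essentially no obstacle: the whole lemma collapses to the single elementary fact $Z_{k-1} \le (k+1)H-1$ plus the monotonicity of $f$. The only things to keep an eye on are the degenerate cases — $z_k = 0$, handled at the outset, and $N = 0$, in which case $f(k) \equiv 1$ and both displayed inequalities hold with equality — and ensuring the cross-multiplication is over strictly positive quantities, which it is because $Z_{k-1} \ge Z_0 \ge 1$ and $kN + (k+1)H - 1 \ge 1$ for all $k \ge 1$ when $H \ge 1$.
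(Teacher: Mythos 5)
Your proof is correct and follows the same route the paper intends: the paper gives no separate argument for Lemma~\ref{lemma:reduceNoSqrt}, stating only that it follows by the ``similar techniques'' of Lemma~\ref{lemma:reduceSqrt} and Corollary~\ref{corollary:reduceSqrt}, and your cross-multiplication reducing the pointwise bound to $Z_{k-1}\leq (k+1)H-1$ plus the monotonicity of $f(x)=\frac{(x+1)H-1}{xN+(x+1)H-1}$ is exactly that argument without the square roots.
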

  Consequently, $(c)$ is bounded in the following corollary.
  \begin{mycorollary}
  With $v_i(s,a)$ and $\tau(s,a)$ defined in Lemma~\ref{lemma:AppendixAb}, we have
  
  \[
  (c) \leq \frac{2H-1}{N+2H-1}(SAL_N + SA).
  \]
  \label{corollary:reduceNoSqrt}
  \end{mycorollary}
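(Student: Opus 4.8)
The plan is to mirror the proof of Lemma~\ref{lemma:AppendixAc} (the $N=0$ bound $(c)\le SAL+SA$), inserting the ``savings factor'' $\frac{2H-1}{N+2H-1}$ supplied by Lemma~\ref{lemma:reduceNoSqrt}, just as the $N=0$ bound on $(b)$ from Lemma~\ref{lemma:AppendixAb} was upgraded to Lemma~\ref{lemma:AppendixBb} via Corollary~\ref{corollary:reduceSqrt}. First I would regroup the double sum defining $(c)$ by state--action pair, writing
\[
(c)=\sum_{s,a}\sum_{i=1}^{K}\frac{v_i(s,a)}{iN+N_i(s,a)}\,\mathbb{I}\{N_i(s,a)\ge H\},
\]
so that for each fixed $(s,a)$ the indicator kills every term with $i<\tau(s,a)$ (and pairs never visited $H$ times contribute nothing), leaving only the tail $i\ge\tau(s,a)$.

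Fixing such an $(s,a)$, I would reindex the surviving episodes $\tau(s,a),\tau(s,a)+1,\dots$ as $k=1,2,\dots$, let $z_k$ be the number of visits to $(s,a)$ during the $k$-th surviving episode, and set $Z_0=N_{\tau(s,a)}(s,a)$ and $Z_k=Z_{k-1}+z_k$. As in the proof of Lemma~\ref{lemma:AppendixBb}, $Z_0=N_{\tau(s,a)-1}(s,a)+v_{\tau(s,a)-1}(s,a)\le(H-1)+H=2H-1$, while $Z_0\ge H\ge1$ by definition of $\tau(s,a)$, and $0\le z_k\le H$ trivially, so the hypotheses of Lemma~\ref{lemma:reduceNoSqrt} are satisfied. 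Moreover the true index of the $k$-th surviving episode is at least $k$ (as $\tau(s,a)\ge1$), so $iN+N_i(s,a)\ge kN+Z_{k-1}$ for that term; hence Lemma~\ref{lemma:reduceNoSqrt} bounds the per-pair tail sum by $\frac{2H-1}{N+2H-1}\sum_k\frac{z_k}{Z_{k-1}}$.

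Finally I would invoke Lemma~\ref{lemma:harmonicSum} (legitimate since $Z_0\ge H$) to obtain $\sum_k\frac{z_k}{Z_{k-1}}\le\ln(Z_n-Z_0)+1\le\ln(KH)+1$, then sum over all $SA$ pairs (using $\sum_{s,a}N_K(s,a)\le KH$) and bound $\ln(KH)\le L_N=\ln(5SAK(H+N)/p)$, which yields $(c)\le\frac{2H-1}{N+2H-1}(SA\ln(KH)+SA)\le\frac{2H-1}{N+2H-1}(SAL_N+SA)$, as claimed. The only mildly delicate point is the bookkeeping in the reindexing step---checking that $Z_0=N_{\tau(s,a)}(s,a)$ lies in the window $[1,2H-1]$ demanded by Lemma~\ref{lemma:reduceNoSqrt} while simultaneously satisfying $Z_0\ge H$ needed by Lemma~\ref{lemma:harmonicSum}, and that passing from the true episode index to the reindexed position $k$ in the $kN$ term only shrinks the denominator---but both facts are inherited verbatim from the proof of Lemma~\ref{lemma:AppendixBb}, so the corollary is essentially a mechanical combination of Lemmas~\ref{lemma:reduceNoSqrt},~\ref{lemma:harmonicSum} and~\ref{lemma:AppendixAc}.
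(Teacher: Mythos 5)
Your proposal is correct and matches the argument the paper intends: the corollary is stated there as an immediate consequence of Lemma~\ref{lemma:reduceNoSqrt} combined with the per-pair regrouping and harmonic-sum bound of Lemma~\ref{lemma:AppendixAc}, exactly as you carry out. Your extra bookkeeping (checking $H \le Z_0 = N_{\tau(s,a)}(s,a) \le 2H-1$ and that replacing the true episode index $i$ by the reindexed position $k$ only decreases the denominator) is sound and simply makes explicit what the paper leaves implicit.
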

  
  Combining Corollaries~\ref{corollary:reduceSqrt} and ~\ref{corollary:reduceNoSqrt} we obtain
  
  \[
  \begin{split}
  (a) &\leq  8HL\sqrt{\frac{2H-1}{N+2H-1}}((\sqrt{2}+1)\sqrt{SAKH}) + 4SH^2L_N\frac{2H-1}{N+2H-1}(SAL_N + SA) \\
  &\leq \sqrt{\frac{2H-1}{N+2H-1}} 20HL_N\sqrt{SAKH} + \frac{2H-1}{N+2H-1}5S^2AH^2L_N^2.
  \end{split}
  \]
  
  and the total regret is
  
  \[
  \begin{split}
  \text{Regret}(K) &\leq \frac{SAH^2}{N+1} + e(H+1)\sqrt{KL_N}+e\left( \sqrt{\frac{2H-1}{N+2H-1}} 20HL_N\sqrt{SAKH} + \frac{2H-1}{N+2H-1}5S^2AH^2L_N^2       \right).
  \end{split}
  \]
  
  \section{Experimental Details}
  \label{appendix:expsetup}
  
  \textbf{Transition functions}. Figure~\ref{fig:gridworld} illustrate the $4 \times 4$ gridworld environment of the four MDPs in $\mathcal{M}$. The rows are numbered top to bottom from $0$ to $3$. The columns are numbered left to right from $0$ to $3$. The starting state $s_1$ is at position $(1,1)$. In every state, the probability of success of all actions is $0.85$. When an action is unsuccessful, the probability of being in one of the other adjacent cells is equally divided from the remaining probability of $0.15$. There are several exceptions:
  
  \begin{itemize}
      \item In the four corners, if the agent takes an action in the direction of the border then with probability of $0.7$ it will stay in the same corner, and with probability $0.3$ it will end up in the cell in the opposite direction. For example, if the agent is at $(0,0)$ and takes action \texttt{up}, then with probability $0.3$ it will actually goes down to the cell $(1,0)$.
      \item Each of four MDPs have an easy-to-reach corner and three hard-to-reach corners. The easy-to-reach corners in models $m_1, m_2, m_3$ and $m_4$ are $(0,0), (0, 3), (3,0)$ and $(3,3)$, respectively. In each of these model, the probability of success of an action that leads to one of the hard-to-reach corners is $0.2$, except for the $(3,3)$ corner where this probability is $0.3$. For example, in model $m_1$, taking action \texttt{right} in cell $(0,2)$ has probability of success equal to $0.2$ while taking the action $\texttt{down}$ in cell $(2,3)$ has probability of success equal to $0.3$.
      \item On the four edges, any action that takes the agent out of the grid has probability of success equal to 0, and the agent ends up in one of the three adjacent cells with equal probability of $\frac{1}{3}$. For each example, taking action \texttt{up} in position $(0,1)$ will take the agent to one of the three positions $(0, 0), (0, 1)$ and $(1, 1)$ with probability $\frac{1}{3}$.
  \end{itemize}
  
  Under this construction, the seperation level is $\lambda = 1.2999$. One example of a $\lambda$-distinguishing set of optimal size is $\Gamma = \{(1,0), (8,3), (2,1)\}$. One example of a $\lambda/2$-distinguishing but not $\lambda$-distinguishing is $\Gamma^{\lambda/2} = \{(11, 3), (4, 2), (13, 0)\}$.
  
  \begin{figure}
      \centering
  \begin{tikzpicture}[scale=1.2, every node/.style={transform shape}]
      \centering
      \draw[step=1cm,color=black] (-2,-2) grid (2,2);
      \node at (-1.5,+1.5) {1};
      \node at (1.5,+1.5) {1};
      \node at (-1.5,-1.5) {1};
      \node at (1.5,-1.5) {1};
      \node at (-0.5, 0.5) {$s_1$};
  \end{tikzpicture}
  \caption{A $4 \times 4$ gridworld MDP with start state at $(1,1)$ and reward of 1 in four corners}
  \label{fig:gridworld}
  \end{figure}
  
  \textbf{Performance metric}. At the end of each episode, the two AOMultiRL agents and the one-episode UCBVI agent obtain their estimated model $\hat{P}$. The estimated optimal policy computed based on $\hat{P}$ is run for $H_1 = 200$ steps starting from $(1,1)$. The average per-episode reward (APER) in episode $k = 1, 2, \dots, K$ of an agent is defined as
  
  \begin{align}
      \mathrm{APER}(k) = \frac{\sum_{i=1}^k\sum_{j=1}^{H_1}r_{i, j}}{k}
  \end{align}
  
  where $r_{i,j} = r(s^i_j, a^i_j)$ the reward this agent received in step $j$ of episode $i$.
  
  \textbf{Horizon settings} For AOMultiRL2, the horizons of the clustering phase in two stages are different since the distinguishing sets in the two stages are different. In order to make a fair comparison with other algorithms, the horizon of the learning phase is set to $H_1 = 0$ in stage 1 and $H_1 = 200$ in stage 2. Since we assumed that stage 2 is dominant, the goal of the experiment is to examine whether a $\lambda/2$-distinguishing set can be discovered and how effective that set can be. We observe that AOMultiRL2 is able to discover the same $\lambda/2$-distinguishing set $\{(14,1), (7,2), (13,0)\}$ in all 10 runs. Since this set also has an optimal size of $3$, in stage 2 the clustering phase's horizon $H_0$ of AOMultiRL2 is identical to that of AOMultiRL1.

\end{document}